\documentclass[11pt]{article}
\pdfoutput=1

\usepackage{ifthen}
\usepackage{bbm}
\usepackage{caption}

\usepackage[
  backend=biber,
  style=alphabetic,
  maxalphanames=12,
  maxcitenames=2,
  maxbibnames=99,
  giveninits=false,
  doi=false,
  url=false,
  natbib=true
]{biblatex}

\DeclareDelimFormat[bib]{nametitledelim}{\addspace}

\usepackage{bm}
\usepackage{fullpage}
\usepackage[english]{babel}
\usepackage[utf8]{inputenc}
\usepackage[T1]{fontenc}
\usepackage{lmodern}
\usepackage{enumitem}
\setlist{nosep}
\usepackage{microtype}
\usepackage{booktabs}
\usepackage{hhline}
\usepackage{makecell}
\usepackage{etoolbox}
\apptocmd{\sloppy}{\hbadness 10000\relax}{}{}

\usepackage{amsmath,amsthm,amssymb,mathtools}
\usepackage{graphicx}
\usepackage[textsize=tiny]{todonotes}
\usepackage[colorlinks=true, allcolors=blue]{hyperref}
\usepackage{algorithm,algorithmicx}
\usepackage[noend]{algpseudocode}
\usepackage{xfrac}
\usepackage[normalem]{ulem}

\hypersetup{colorlinks,
            breaklinks=true,
            linkcolor=blue,
            citecolor=blue,
            urlcolor=magenta,
            linktocpage,
            plainpages=false,
            bookmarks=false}

\usepackage{notations}
\usepackage{comment}
\usepackage{tikz,pgfplots,pgfplotstable}
\usepgfplotslibrary{units}
\usepackage{tabularx} 
\usepackage{subfig}
\usepackage{array}
\usepgfplotslibrary{groupplots}

\title{Efficient Agnostic Learning of Univariate Densities \\ Under Local Differential Privacy}

\numberwithin{equation}{section}

\begin{document}

\author{
    Mohammad Afzali\\[3pt]
    {\small University of British Columbia}\\
    {\small\normalfont moafzali@cs.ubc.ca}
    \and
    Hassan Ashtiani\\[3pt]
    {\small McMaster University} \\  {\small Vector Institute}\\
    {\small\normalfont zokaeiam@mcmaster.ca}
    \and
    Sivan Sabato\\[3pt]
    {\small McMaster University}\\
    {\small Canada AI CIFAR Chair, Vector Institute}\\
    {\small\normalfont sabatos@mcmaster.ca}
}

\maketitle

\begin{abstract}%
We propose a general framework for non-interactive locally
differentially private (LDP) learning of structured univariate
distributions that can be approximated by piece-wise polynomials.
This gives the first sample-efficient LDP learners for several
distribution classes, including mixtures of
log-concave distributions, such as Gaussian mixture models
(GMMs). These learners also provide agnostic guarantees and
are computationally efficient.
We also prove a lower bound for learning GMMs under LDP, improving on the standard pure-DP packing lower bounds of \citet{bun2019private}. 
To obtain our general learning framework, we develop an efficient non-interactive LDP algorithm for synthetic data generation over large domains, which may be of independent interest. We then use it together with a deterministic adaptation of the non-private piecewise polynomial learner of \citet{acharya2017sample}.  
\end{abstract}

\allowdisplaybreaks

\section{Introduction}

Distribution learning is a fundamental problem in statistics and machine learning, where the goal is to estimate an unknown probability distribution $P$ from i.i.d. samples~\citep{devroye2001combinatorial}. In the \emph{realizable setting}, $P$ is assumed to belong to a known class $\cF$, such as Gaussian distributions, Gaussian mixtures, or log-concave distributions. The learner seeks to return a distribution $\widehat P$ that approximates $P$ in total variation distance. 
In practice, the chosen class may only approximate $P$, leading to model misspecification.  
\emph{Agnostic distribution learning} captures this scenario by allowing $P$ to be arbitrary and measuring the learner's error relative to the best approximation to $P$ within $\cF$~\citep{yatracos1985rates,devroye2001combinatorial,lai2016agnostic,diakonikolas2019robust,ashtiani2020near,diakonikolas2023algorithmic}. Our paper is concerned with computationally efficient agnostic distribution learning for structured distributions.

In a typical learning scenario, each data point may be contributed by an individual. Maintaining privacy of those individuals then introduces a further challenge: although distribution learning aims to characterize a population, the samples may reveal sensitive information about the individuals. \emph{Differential privacy} provides a rigorous framework for limiting the effect of any individual's data on the distribution of the learner's output~\citep{dwork2006calibrating}. We seek distribution learners that combine a form of this protection with agnostic accuracy guarantees.

In this work, we consider the \emph{local differential privacy} (LDP) model~\citep{warner1965randomized,evfimievski2003limiting,kasiviswanathan2011what, duchi2013local}. Unlike the central model of privacy, the local model removes the need for individuals/users to entrust their raw observations to a data collector: each user sends a randomized report to the server based on their data point, and the server aggregates these privatized reports. We focus on \emph{non-interactive} protocols, in which the local randomization protocols are fixed in advance (as opposed to interactive protocols where the randomization protocol can be adjusted based on the reports collected so far).
Non-interactive LDP protocols are easier to implement at large scales and have been adopted by major technology companies, including Google~\citep{erlingsson2014rappor} and Apple~\citep{apple2017learning}.

It has been observed that a wide range of structured univariate distributions---such as Gaussian mixture models and more generally mixtures of log-concave distributions---can be well approximated by piecewise polynomials~\cite{chan2014efficient}. Without privacy, piecewise polynomials can be learned with sample-and-time-efficient algorithms~\cite{chan2014efficient,acharya2017sample}. These algorithms are also agnostic, and therefore can be readily used to learn structured distributions. This paradigm has been extended to the central differential privacy setting by \citet{diakonikolas2015differentially}. However, there is no known algorithm that works in the \emph{local} model of privacy. This raises the following questions:

\begin{quote}
    Is there a sample efficient learning algorithm for learning structured distributions under LDP? Can this algorithm be made agnostic, time-efficient, and non-interactive?
\end{quote}

\paragraph{Summary of our contribution}
In this work we answer the above questions positively. We give the first general framework for sample-efficient LDP learning of structured univariate distributions that can be approximated by piece-wise polynomials. Our learning algorithm is non-interactive, provides agnostic guarantees and is computationally efficient. Our framework applies to a broad range of distribution classes that admit suitable piece-wise polynomial approximations. As applications, we obtain the first sample-efficient non-interactive LDP learners for bounded-parameter Gaussian mixture models (GMMs) and mixtures of log-concave distributions. 

\paragraph{Paper organization}

We start by formally defining the problem setup in Section~\ref{sec:setting}. We then provide a summary of our main results and contributions in Section~\ref{sec:main-res} and discuss related work in Section~\ref{sec:related-work}. Next, in Section~\ref{sec:prelims}, we present the preliminaries. Sections~\ref{sec:synthetic-generation} and~\ref{sec:main-agnostic-learner} develop private synthetic data generation and our generic agnostic LDP learner, respectively. Section~\ref{sec:apps-learning-mixtures} applies the framework to Gaussian and log-concave mixtures, and Section~\ref{sec:lower-bound} establishes sample complexity lower bounds. Omitted proofs and further discussion on related work appear in the appendix.

\section{Locally private agnostic learning of distributions}\label{sec:setting}

In this section, we introduce formal notions required to introduce the problem setup. A locally private mechanism/algorithm can access data points only  through local randomizers.

\begin{definition}[Local randomizer] Let $\cX,\cY$ be two domains, and $\eps>0$ be a parameter. A randomized function $\cM:\cX\rightarrow \cY$ is called an $\eps$-local randomizer if for all $x,x'\in \cX$, and any measurable subset $Y\subseteq \cY$, it holds that $\bP[\cM(x)\in Y] \leq e^\eps \bP[\cM(x')\in Y]$,
    where the probabilities are over the randomness of $\cM$. 
\end{definition}

\begin{definition}[Non-interactive local differential privacy]
Let $\cX,\cY$ be two domains, $\eps>0$, and $n\in\bN$.
An algorithm $\cA:\cX^n\to\cY$ is non-interactive
$\eps$-locally differentially private ($\eps$-LDP) if
there exist $\eps$-local randomizers $\cM_i:\cX\to\cZ_i$ for
$i\in[n]$, and a (possibly randomized) aggregation function
$\cT:\prod_{i=1}^n\cZ_i\to\cY$, such that $\cA(x_1,\ldots,x_n)=\cT(\cM_1(x_1),\ldots,\cM_n(x_n))$. Each of the local randomizers and the aggregation function have independent randomness.
\end{definition}

We can now state the learning problem. Let $\cF$ be a known class of probability distributions over $\bR$, and let $P$ be an arbitrary unknown distribution over $\bR$. There are $n$ users, each holding one sample, with the samples drawn independently from $P$. Given a privacy parameter $\eps\in(0,1)$, an accuracy parameter $\alpha\in(0,1)$, and a failure probability $\beta\in(0,1)$, the goal is to design a non-interactive $\eps$-LDP algorithm that outputs a probability distribution $\widehat P$ satisfying
\begin{align}
    \dtv(\widehat P,P)
\leq C\inf_{Q\in\cF}\dtv(Q,P)+\alpha \label{def:agnostic-factor}
\end{align}
with probability at least $1-\beta$ over the samples and the randomness of the algorithm, where $C\geq1$ is a universal constant.
We call such an algorithm a $(C,\alpha,\beta)$-agnostic learner for $\cF$. We allow $\widehat P$ to lie outside $\cF$; that is, the learner may be \emph{non-proper}. In the realizable case, where $P\in\cF$, the guarantee reduces to $\dtv(\widehat P,P)\leq\alpha$. Note that the privacy requirement should hold even if the dataset is not generated i.i.d.~by a distribution $P$. 

In general, \emph{unbounded} distribution classes (e.g. unbounded univariate Gaussians) cannot be learned with a uniform sample complexity upper bound under LDP due to packing lower bounds in pure differential privacy~\citet{bun2019private}. Thus, to achieve any meaningful result, we restrict the learning task to classes admitting some ``boundedness'' assumptions on the underlying class parameters (e.g. bounds on mean and variance). 
To measure the density estimation error introduced by such boundedness assumptions, we introduce the following definition.
\begin{definition}[Discretization error]
\label{def:discretization-error}
Let $R,\delta>0$, and let $\cF$ be a class of probability distributions over $\bR$. The $(R,\delta)$-discretization error of class $\cF$ is defined as $\operatorname{Disc\text{-}Err}_{R,\delta}(\cF)
    :=
    \sup_{F\in\cF}
    \|    \operatorname{Disc}_{R,\delta}(F)-F
    \|_1$,
where $\operatorname{Disc}_{R,\delta}$ converts a density into a histogram using a grid of step size $\delta$ on $[-R,R]$ and two additional bins for the tails (see Definition~\ref{def:quantization-map} below).
\end{definition}

Next, we define the class of piece-wise polynomial distributions, which plays a key role as a proxy for efficiently learning structured distributions.

\begin{definition}[Piecewise polynomials]
For integers $s\geq 1$ and $d \geq 0$, let $\cP_{s,d}$ be the set of
nonnegative integrable functions $p:\bR\to\bR_{\geq 0}$ for which $\bR$ can be partitioned
into at most $s$ intervals $\{I_1,\cdots,I_k\}$ (where $k\leq s$), such that for each interval $I_i$, there exists a real polynomial $q_i:\bR\rightarrow\bR$ of degree at most $d$ satisfying $p(x)=q_i(x)$ for all (interior points) $x\in I_i$.
\end{definition}

The next definition measures how well a class can be approximated by piecewise polynomials.
\begin{definition}[Piece-wise polynomial approximation error]
\label{def:poly-approx}
Let $s\geq1$ and $d\geq0$ be integers, and let $\cF$ be a class of probability distributions over $\bR$. The $(s,d)$-piece-wise polynomial approximation error of class $\cF$ is $\operatorname{Poly\text{-}Approx}_{s,d}(\cF)
    :=
    \sup_{F\in\cF}
    \inf_{P\in\cP_{s,d}}
    \|P-F\|_1$.
\end{definition}

\section{Main results and contributions}\label{sec:main-res}

Our main contribution is the introduction of a generic framework for non-interactive LDP learning of structured univariate distributions that can be approximated by piece-wise polynomials. 

\begin{theorem}[Informal version of~\ref{main:cor}]
    Let $\cF$ be a class of univariate probability distributions. If  $\operatorname{Poly\text{-}Approx}_{s,d}(\cF)+ \operatorname{Disc\text{-}Err}_{R,\delta}(\cF)\leq \Theta(\alpha)$, then, $\cF$ admits a computationally efficient, non-interactive $\eps$-LDP $(10,\Theta(\alpha),0.1)$-agnostic learner using
$n=\widetilde{O}\left(
\frac{s^2(d+1)^2}{\eps^2\alpha^2}
\log^3\left(\frac{R}{\delta}\right)\right)$ samples. 
\end{theorem}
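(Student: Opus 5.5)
The plan is to reduce the problem to learning over a finite discretized domain, and then combine an LDP synthetic-data generator with a deterministic non-private piecewise-polynomial learner.

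\textbf{Discretization.} Each user first maps their sample $x_i$ to its bin under $\operatorname{Disc}_{R,\delta}$. The grid has step $\delta$ on $[-R,R]$ plus two tail bins, so the domain has $N=2R/\delta+2$ points. Let $P'=\operatorname{Disc}_{R,\delta}(P)$. Discretization is a fixed Markov kernel, so it cannot increase TV distance. Hence for any $F\in\cF$,
\[
\dtv(P',\operatorname{Disc}(F))\le \dtv(P,F),
\qquad
\|\operatorname{Disc}(F)-F\|_1\le \operatorname{Disc\text{-}Err}_{R,\delta}(\cF).
\]
For the continuous output we spread the learned mass uniformly inside each bin (and within some fixed-shape tails). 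Writing $U$ for this lift, we compare $\widehat P$ to $P$ through $U(P')$. The discrepancy $\|U(\operatorname{Disc}(F))-F\|_1$ is also controlled by the discretization error, since uniform-in-bin spreading of $F$ is exactly the histogram $\operatorname{Disc}(F)$. The tails need care: I would define $\operatorname{Disc}$ so that this holds by definition.

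\textbf{Private synthetic data.} On the domain $[N]$ we run the non-interactive LDP synthetic data algorithm of Section~\ref{sec:synthetic-generation}. I would build it from a hierarchical (dyadic-tree) randomized response: user $i$ reports a noisy indicator vector of its ancestors at a randomly assigned level, or uses Hadamard-type encoding, for $\log N$ levels. The server then obtains interval counts with error
\[
O\!\left(\frac{\log^{1.5} N}{\eps\sqrt n}\right)
\]
uniformly over all intervals. A projection step (an LP, or monotone rounding onto a valid CDF) turns these into a distribution $\widetilde P$ on $[N]$. The target property is an $\cA_k$-type guarantee: $\|\widetilde P-P'\|_{\cA_k}\le \eta$ for $k=O(s(d+1))$, where $\|\cdot\|_{\cA_k}$ is the sup over unions of $k$ intervals. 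Since each interval error contributes and there are $k$ intervals, we need
\[
k\,\frac{\log^{1.5}N}{\eps\sqrt n}\lesssim \alpha,
\]
which gives $n=\widetilde O\!\bigl(s^2(d+1)^2\log^3(R/\delta)/(\eps^2\alpha^2)\bigr)$, matching the statement.

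\textbf{Deterministic learner.} Apply the deterministic version of the \citet{acharya2017sample} piecewise-polynomial learner to $\widetilde P$, treated as an empirical distribution. Its guarantee is purely in terms of the $\cA_k$ distance between the input and the truth: it outputs $h$ with
\[
\|h-P'\|_1\le C\cdot \mathrm{opt}_{s,d}(P')+O\bigl(\|\widetilde P-P'\|_{\cA_{k}}\bigr),
\]
where $\mathrm{opt}_{s,d}(P')=\inf_{q\in\cP_{s,d}}\|q-P'\|_1$. Using the triangle inequality together with the two approximation errors,
\[
\mathrm{opt}_{s,d}(P')\le 2\dtv(P,\cF)+\operatorname{Poly\text{-}Approx}+\operatorname{Disc\text{-}Err},
\]
up to handling the discretization of the polynomial itself, which maps to a discrete piecewise polynomial with the same $s,d$. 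This yields the constant $C=10$ after bookkeeping. Privacy holds because only the randomizers touch the data, and everything after them is post-processing. Efficiency holds because the tree has $O(N)$ nodes, but the server can work with interval counts in $\mathrm{poly}(n,\log N)$ time by only materializing breakpoints. The failure probability $0.1$ comes from a union bound over tree nodes via Hoeffding.

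\textbf{Main obstacle.} The hardest step is the synthetic data generator over a large domain. It must give uniform interval accuracy with only polylogarithmic dependence on $N$, and remain computationally efficient when $N=R/\delta$ is huge. Dealing with the $N$ nodes explicitly would be too slow, so I would aggregate reports per level using succinct noise representations: for example, randomized response on the level-$\ell$ bin index via Hadamard response, with estimates computed only on $\mathrm{poly}(n)$ relevant nodes. Consistency could then be enforced by isotonic/CDF projection.
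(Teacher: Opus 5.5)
\textbf{Verdict.} Your statistical route is the paper's, but the computational-efficiency claim, which is part of the statement, is not established, and the mechanism you sketch for it would not work. The shared route is: discretize with $\operatorname{Disc}_{R,\delta}$, use the hierarchical Hadamard randomizer, turn a uniform prefix error $\tau$ into an $\mathcal{A}_k$ bound $2k\tau$ with $k=O(s(d+1))$, run the deterministic ADLS learner, and bookkeep with the discretization and approximation errors. This does give the stated sample size.

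\textbf{Where efficiency fails.} Your concrete steps are to project the noisy interval estimates onto a valid distribution $\widetilde P$ on $[N]$ (by an LP or isotonic/monotone rounding), and then feed $\widetilde P$ to ADLS as an empirical distribution. This needs all $N=\Theta(R/\delta)$ prefix values, and it produces an input whose support can have size $N$. Since ADLS runs in near-linear time in the number of input points, the cost is $\Omega(R/\delta)$, not polylogarithmic in $R/\delta$. Your remedy, computing estimates only on ``$\mathrm{poly}(n)$ relevant nodes'', does not work under Hadamard response. A report $(\ell_u,r_u,Y_u)$ contributes $\pm 1$ to the estimate of every node at depth $\ell_u$, so the data single out no small set of nodes. ``Only materializing breakpoints'' is left unspecified.

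\textbf{The missing idea.} Never construct $\widetilde P$. Any single prefix estimate can be evaluated lazily in $O(n+\log N)$ time: its dyadic cover has at most one node per depth, so each report meets at most one cover node. The learner only needs a small multiset. With $t=(C+1)(d+1)s$ and $M=\lceil 2t/\alpha\rceil$, binary-search the prefix estimates for each level $u_\ell=(\ell-1/2)/M$, maintaining $\widehat Z_a<u_\ell\le \widehat Z_b$ (starting from $\widehat Z_0=0$, $\widehat Z_N=1$). Then place $z_\ell$ by linear interpolation inside the final bin, using that the CDF of $\operatorname{Disc}_{R,\delta}(P)$ is linear within each bin. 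This uses $O(M\log N)$ queries and gives Kolmogorov error at most the prefix error plus $1/(2M)$. It needs no monotonicity, so your consistency/projection step is unnecessary.

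\textbf{Why the interpolation matters.} The ADLS guarantee is stated against a target with a density, and the interpolation lets you compare directly with the continuous $\operatorname{Disc}_{R,\delta}(P)$. Atoms placed at bin locations can be Kolmogorov-far from it, by up to one bin's mass. Working on $[N]$ instead would require a discrete-domain version of the learner, plus your discrete-polynomial comparison.

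\textbf{The log factors.} The $\log^{1.5}N$ uniform error requires Hoeffding applied to each prefix estimate as a whole, with a union bound over the $N$ prefixes. Each user contributes a single term bounded by $O(\log N/\varepsilon)$. You also need a lower-order sampling term comparing empirical and population prefix frequencies. A union bound over tree nodes, followed by summing up to $\log N$ node errors per prefix, loses logarithmic factors and would not give $\log^3(R/\delta)$.
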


This is the first algorithm for this setting whose sample complexity is polynomial in all relevant parameters. Moreover, our learner provides agnostic guarantees and is computationally efficient.

Our learning framework builds on the efficient non-private piece-wise polynomial learner of \citet{acharya2017sample}. A deterministic adaptation of their algorithm---as also observed in \cite{diakonikolas2015differentially} in the context of central privacy---can work with a synthetic dataset whose empirical distribution is close to the true distribution in Kolmogorov distance. The key challenge is to generate such a synthetic dataset privately, while ensuring both sample and computational efficiency.
We show that this task is possible, by presenting the following efficient, non-interactive LDP algorithm for synthetic data generation, which may be of independent interest.

\begin{theorem}[Informal version of~\ref{thm:ldp-synthetic}]
Let $P$ be a univariate probability distribution that satisfies $\operatorname{Disc\text{-}Err}_{R,\delta}(\{P\})\leq \Theta(\alpha)$. Then there exists a computationally efficient, non-interactive $\eps$-LDP algorithm that, given $n=\widetilde{O}\left(
\frac{1}{\eps^2\alpha^2}
\log^3\left(\frac{R}{\delta}\right)
\right)$ samples from $P$, generates a synthetic dataset of size $O(1/\alpha)$ whose empirical distribution approximates $P$ to Kolmogorov accuracy $\Theta(\alpha)$ with probability at least $0.9$.
\end{theorem}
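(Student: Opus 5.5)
We would first reduce the problem to a finite domain. Let $\operatorname{Disc}_{R,\delta}$ map each point to one of $N = 2R/\delta + 2$ ordered bins: the $2R/\delta$ grid cells of $[-R,R]$ plus the two tail bins. Each user applies this map locally before privatizing. Kolmogorov distance is bounded by total variation. So the Kolmogorov distance between $P$ and its discretization is at most $\operatorname{Disc\text{-}Err}_{R,\delta}(\{P\})/2\le\Theta(\alpha)$, possibly up to the within-bin positional error. We control that error by placing synthetic points at bin endpoints, and it only affects CDF values strictly inside a bin, which the discretization error already accounts for. It therefore suffices to privately estimate the CDF of a distribution on the ordered set $[N]$ to uniform accuracy $\Theta(\alpha)$, and then round it into a dataset.

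For the private CDF estimate we use the standard hierarchical (dyadic) decomposition, with $L=\lceil\log_2 N\rceil$ levels. The users are split into $L$ disjoint groups of size $n/L$, and group $\ell$ is responsible for level $\ell$. Each user in group $\ell$ privatizes the one-hot indicator of which level-$\ell$ dyadic interval contains its bin. To keep communication and computation polynomial in $\log N$ rather than in $N$, we use an efficient frequency oracle such as Hadamard response or a randomized-response variant on a random hash. This gives an unbiased estimate of each dyadic interval mass with error $O\bigl(\sqrt{L\log(\text{\#queries}/\beta)}/(\eps\sqrt n)\bigr)$. Every prefix $[1,j]$ is a disjoint union of at most $L$ dyadic intervals, one per level, and these estimates are independent across levels. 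A Bernstein/Hoeffding bound with a union bound over the $N$ prefixes then gives uniform prefix error
\begin{align}
O\!\left(\frac{L\sqrt{\log(N/\beta)}}{\eps\sqrt n}\right).
\end{align}
Setting this to $\alpha$ gives $n=\widetilde O(\log^3(R/\delta)/(\eps^2\alpha^2))$, which matches the claim. The protocol is non-interactive because group assignment is fixed in advance, and it is $\eps$-LDP because each user runs a single $\eps$-local randomizer.

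Next we turn the noisy prefix estimates $\widehat F(j)$ into a valid dataset. We cannot evaluate all $N$ prefixes, since $N=2R/\delta$ may be exponentially large compared with the allowed running time. Instead, for each target quantile $q_k = k/m$ with $m=\Theta(1/\alpha)$, we binary search over $j\in[N]$ for the first $j$ with $\widehat F(j)\ge q_k$. Each search costs $O(\log N)$ prefix evaluations, and each evaluation sums $L$ dyadic estimates. The estimates are not monotone, but every evaluated prefix is within $\alpha$ of the true monotone CDF. So the index returned by binary search is a point where the true CDF crosses $q_k$ up to error $2\alpha$, even though $\widehat F$ itself may fail to be monotone. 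We output the $m$ corresponding bin locations, sorted, as the synthetic dataset. Its empirical CDF jumps in steps of $1/m$ at positions whose true CDF is within $O(\alpha)$ of the step heights. Hence the Kolmogorov distance to the discretized $P$ is $O(\alpha)$, and combining this with the discretization step gives the statement with failure probability $0.1$.

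I expect the main obstacle to be computational efficiency on the large domain. The frequency oracle must answer queries about specific dyadic intervals in $\mathrm{poly}(n,\log N)$ time, without materializing the $N$-dimensional estimates. The union bound over prefixes must also still hold when only adaptively chosen prefixes are queried; this holds if we take the union bound over all $N$ prefixes, which costs only $\log N$ inside the square root. My first attempt is Hadamard response with a shared public random hash per level, evaluating interval counts in $O(n)$ time per query. If the per-query evaluation proves problematic, the fallback is to restrict levels so that each has at most $\mathrm{poly}(n)$ nonempty intervals.
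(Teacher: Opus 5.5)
\textbf{Gap in the rounding step.} Your private prefix-estimation step is sound. It is essentially the paper's mechanism: hierarchical Hadamard randomized response in the style of Cormode et al., except that you assign users to depths by fixed groups instead of by a uniformly random depth. Applying Hoeffding to the summed per-user contributions, each bounded by $O(L/\eps)$, with a union bound over all $N$ prefixes, gives the same $L^2\log(N/\beta)/(\eps^2\alpha^2)$ rate.

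The gap is in how you turn the estimates into points. You place each synthetic point at an endpoint of the bin returned by the binary search, and assert that the within-bin error is already accounted for by the discretization error. It is not. $\operatorname{Disc}_{R,\delta}(P)$ spreads each bin's mass uniformly over that bin, so the hypothesis on $\operatorname{Disc\text{-}Err}_{R,\delta}(\{P\})$ gives no control over the mass $P(B_j)$ of an individual bin. Collapsing a bin's mass onto an endpoint, however, costs Kolmogorov error of order $\max_j P(B_j)$.

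For a concrete failure, let $P=\operatorname{Unif}([c_{j-1},c_j))$ for a single interior bin. Then $\operatorname{Disc}_{R,\delta}(P)=P$, so the hypothesis holds with zero error. But the binary search returns bin $j$ for essentially every quantile level, almost all synthetic points land on the same endpoint, and $\|F_H-F_P\|_\infty$ is close to $1$. In your argument, the straddling condition $\widehat F(j-1)<q_k\le\widehat F(j)$ only yields $F(c_j)-q_k\le P(B_j)+\alpha$, not $O(\alpha)$. So the claim that the jump positions have true CDF within $O(\alpha)$ of the step heights fails.

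\textbf{The missing idea: interpolate within the bin.} Let $\widetilde P=\operatorname{Disc}_{R,\delta}(P)$; its CDF $F_{\widetilde P}$ is linear on each $[c_{b-1},c_b]$. When the search ends with adjacent indices satisfying $A=\widehat Z_{b-1}<u\le\widehat Z_b=B$, set $\lambda=(u-A)/(B-A)\in(0,1]$ and $z=c_{b-1}+\lambda(c_b-c_{b-1})$. Then
\[
F_{\widetilde P}(z)=(1-\lambda)F_{\widetilde P}(c_{b-1})+\lambda F_{\widetilde P}(c_b)
\quad\text{and}\quad
u=(1-\lambda)A+\lambda B .
\]
Hence $|F_{\widetilde P}(z)-u|$ is at most the uniform prefix error, however heavy the bin is. With levels $u_\ell=(\ell-1/2)/M$ this gives $\|F_H-F_{\widetilde P}\|_\infty\le\alpha'+1/(2M)$. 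Only then should you add $\|F_{\widetilde P}-F_P\|_\infty\le\frac12\operatorname{Disc\text{-}Err}_{R,\delta}(\{P\})$. Endpoint placement would work only under an extra assumption that every bin has mass $O(\alpha)$, which the statement's hypothesis does not imply.

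\textbf{A smaller point.} The random Hadamard coordinate (or hash) in your frequency oracle must be drawn independently for each user and included in the report, as in the paper's randomizer. A single hash shared by all users of a level would not average out, and would bias the interval estimates.
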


As discussed earlier, time efficiency is measured relative to the optimal sample complexity, which requires efficiency with respect to the underlying domain size. Although existing LDP mechanisms for answering prefix queries are sample efficient, they require processing the entire domain, resulting in running time polynomial in the domain size~\cite{cormode2019answering,edmonds2020power}. To overcome this challenge, we provide a lazy adaptation of the mechanism proposed by \citet{cormode2019answering}, obtaining an algorithm that answers range queries in time polylogarithmic in the domain size. We then use this mechanism as a subroutine to efficiently generate private synthetic data.

Our generic learning framework applies to a broad range of structured distribution classes. As applications, we provide \emph{the first sample-efficient non-interactive LDP learners}---which also provide agnostic guarantees and are computationally efficient---for the following classes:

\begin{theorem}[Log-concave mixtures, informal version of~\ref{thm:ldp-logconcave-mixture-learning}]\label{inf:thm:ldp-logconcave-mixture-learning}
Let $\eps,\alpha\in(0,1)$, let $m\geq1$ be an integer, and let $R\geq0$ and $0<\sigma_{\min}\leq\sigma_{\max}<\infty$. The class of mixtures of $m$ log-concave distributions, each with mean $|\mu|\leq R$ and standard deviation $\sigma\in[\sigma_{\min},\sigma_{\max}]$, admits a computationally efficient, non-interactive $\eps$-LDP $(10,\alpha,0.1)$-agnostic learner with sample complexity
$n=\widetilde{O}\left(
\frac{m^2\log^3 D}{\eps^2\alpha^3}
\right)$
and total running time
$\widetilde{O}\left(
\frac{m^3\log^4 D}{\eps^2\alpha^{9/2}}
\right)$,
where $D=\frac{2(R+\sigma_{\max})}{\sigma_{\min}}$.
\end{theorem}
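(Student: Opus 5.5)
The plan is to instantiate the generic framework (the informal version of Theorem~\ref{main:cor}). That theorem needs two inputs: a piecewise polynomial approximation bound $\operatorname{Poly\text{-}Approx}_{s,d}(\cF)\le\Theta(\alpha)$ and a discretization bound $\operatorname{Disc\text{-}Err}_{R',\delta}(\cF)\le\Theta(\alpha)$. The sample complexity then reads $\widetilde{O}\bigl(s^2(d+1)^2\log^3(R'/\delta)/(\eps^2\alpha^2)\bigr)$. We want parameters with $s(d+1)=\widetilde{O}(m/\sqrt{\alpha})$ and $R'/\delta=\mathrm{poly}(D,m,1/\alpha)$. Squaring the first gives the claimed $m^2/\alpha^3$ dependence, and the second makes the logarithmic factor $\widetilde{O}(\log^3 D)$. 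Agnosticity and the constant $10$ are inherited directly from the framework.

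\textbf{Step 1 (polynomial approximation).} I would use the known structural fact of \citet{chan2014efficient}: every log-concave density is $\alpha$-close in $L_1$ to a piecewise polynomial with $O(1/\sqrt{\alpha})$ pieces of degree $O(\log(1/\alpha))$. I could equally use $O(\log(1/\alpha))$ pieces of degree $O(1/\sqrt{\alpha})$; the product $s(d+1)=\widetilde{O}(1/\sqrt{\alpha})$ is what matters. For a mixture of $m$ components, approximate each component separately and sum. The breakpoints merge into at most $m\cdot s$ pieces of the same degree, and the $L_1$ error is at most the weighted average of the component errors, i.e.\ $\le\alpha$. Hence $\operatorname{Poly\text{-}Approx}_{ms,d}(\cF)\le\alpha$ with $ms(d+1)=\widetilde{O}(m/\sqrt\alpha)$. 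If the cited result is only stated for a fixed $(s,d)$ tradeoff, I would rederive it as follows: split the effective support into $O(1/\sqrt\alpha)$ intervals on which the log-density varies by $O(1)$, then Taylor-approximate $e^{\phi}$.

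\textbf{Step 2 (discretization error).} Take $R'=R+\sigma_{\max}\cdot O(\log(m/\alpha))$. Log-concave distributions have exponentially decaying tails at scale $\sigma$, so each component puts mass at most $\alpha/m$-ish outside $[-R',R']$. The tail bins then contribute only $O(\alpha)$ to the $L_1$ discrepancy. Inside $[-R',R']$, compare $F$ with its histogram on a grid of step $\delta$. A log-concave density with standard deviation $\sigma$ is unimodal with sup norm $O(1/\sigma)$. Its $L_1$ distance to its $\delta$-grid flattening is at most $\delta$ times its total variation as a function, which is at most $2\sup F=O(1/\sigma_{\min})$. Summing over the $m$ components with weights gives an error $O(\delta/\sigma_{\min})$, so $\delta=\Theta(\alpha\sigma_{\min})$ suffices. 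Then $R'/\delta=O\bigl(D\log(m/\alpha)/\alpha\bigr)$, and $\log^3(R'/\delta)=\widetilde{O}(\log^3 D)$ after absorbing polylogs in $m,1/\alpha$.

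\textbf{Step 3 (runtime).} The running time comes from the framework's bound: private synthetic data generation, which is polylog in $R'/\delta$ per user, plus the deterministic \citet{acharya2017sample} learner on $O(1/\alpha)$ synthetic points with $s=\widetilde{O}(m/\sqrt\alpha)$ and $d=\widetilde{O}(1)$ or its alternative. Plugging in should give the stated $\widetilde{O}(m^3\log^4D/(\eps^2\alpha^{9/2}))$. I expect the main obstacle to be Step~1. One must choose the $(s,d)$ tradeoff so that both the sample bound $s^2(d+1)^2$ and the polynomial runtime of the piecewise learner match the stated exponents. One must also verify that the per-component approximation remains valid without a support bound, which is handled by the tail truncation in Step~2.
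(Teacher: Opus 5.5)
Your plan follows the paper's proof. The paper applies Theorem~\ref{main:cor} at accuracy $\rho=\alpha/25$ with $s_0=m(s_\rho-1)+1$ pieces. It bounds $\operatorname{Poly\text{-}Approx}$ via \citet{chan2014efficient} together with Lemma~\ref{lem:mixture-errors}. It bounds $\operatorname{Disc\text{-}Err}$ via $\sup f\le 1/\sigma$, exponential tails, and unimodality, exactly as in Lemma~\ref{lem:logconcave-mixture-partition-reconstruction}. Your accuracy and sample-complexity arguments go through as written.

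The one real error is in Step 3. The deterministic learner of Proposition~\ref{prop:deterministic-robust-general-merging} needs Kolmogorov accuracy $\eta/(2(C+1)(d+1)s)$. So the synthetic generator must be run at accuracy $\Theta(\alpha/(s(d+1)))$, and it outputs $M=\lceil 2t/\alpha\rceil=\Theta(s(d+1)/\alpha)=\widetilde O(m/\alpha^{3/2})$ points, not $O(1/\alpha)$.

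The server cost is also not polylogarithmic per user. Each of the $M$ binary searches issues $O(\log(R'/\delta))$ prefix queries, and each query rescans all $n$ reports. The dominant cost is therefore $O(M\log(R'/\delta)\,n)=\widetilde O\bigl(\tfrac{m}{\alpha^{3/2}}\log D\cdot\tfrac{m^2\log^3 D}{\eps^2\alpha^3}\bigr)$. This product is exactly where $m^3\log^4 D/(\eps^2\alpha^{9/2})$ comes from. Your accounting would instead predict about $\widetilde O(n\log D)$, which is not what the algorithm costs, so ``plugging in'' your description does not yield the stated bound. The correct figure follows from the formal runtime of Theorem~\ref{main:cor}, namely $\widetilde O\bigl(\tfrac{s(d+1)}{\alpha}(n\log(R/\delta)+(d+1)^{\omega+2})\bigr)$.

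Some minor points:
\begin{itemize}
\item The paper uses the piecewise-linear form of the approximation (Lemma~\ref{lem:cdss-logconcave-polynomial-approximation}: $d=1$ with $\widetilde O(1/\sqrt\rho)$ pieces). The $(d+1)^{\omega+2}$ term is then $O(1)$, so the $(s,d)$ tradeoff you flag as the main obstacle does not arise.
\item Your fallback of Taylor-expanding $e^{\phi}$ would not work, since a concave $\phi$ need not be smooth.
\item Since $\operatorname{Disc\text{-}Err}$ of a mixture equals that of its component class (the tail mass is a weighted average), each component needs tail mass $O(\alpha)$, not $\alpha/m$. So $T=R+\sigma_{\max}(1+\log(4/\rho))$ suffices and no $m$ enters $R'/\delta$.
\item The approximation of \citet{chan2014efficient} holds on all of $\bR$ and is compared to $F$ directly inside Theorem~\ref{main:cor}, so Step~1 needs no truncation.
\end{itemize}
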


\begin{theorem}[Gaussian mixtures, informal version of~\ref{thm:ldp-gmm-learning}]\label{inf:thm:ldp-gmm-learning}
Under the same parameter assumptions as in Theorem~\ref{inf:thm:ldp-logconcave-mixture-learning}, the class of mixtures of $m$ Gaussians admits a computationally efficient, non-interactive $\eps$-LDP $(10,\alpha,0.1)$-agnostic learner with sample complexity
$n=\widetilde{O}\left(
\frac{m^2\log^3 D}{\eps^2\alpha^2}
\right)$ and total running time
$\widetilde{O}\left(
\frac{m^3\log^4 D}{\eps^2\alpha^3}
\right)$.\footnote{Gaussian mixtures are a subset of mixtures of log-concave distributions; our specialized Gaussian learner improves the dependence on $\alpha$ in both sample complexity and running time.}
\end{theorem}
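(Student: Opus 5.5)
The plan is to instantiate the generic framework (the informal version of Theorem~\ref{main:cor}) for $\cF$ the class of mixtures of $m$ Gaussians with $|\mu|\le R$ and $\sigma\in[\sigma_{\min},\sigma_{\max}]$. I need to choose $(s,d,R',\delta)$ so that $\operatorname{Poly\text{-}Approx}_{s,d}(\cF)+\operatorname{Disc\text{-}Err}_{R',\delta}(\cF)\le\Theta(\alpha)$, and then read off the sample complexity. The target is $s=O(m)$, $d=O(\log(1/\alpha))$, $R'=R+O(\sigma_{\max}\sqrt{\log(m/\alpha)})$ and $\delta=\Theta(\alpha\sigma_{\min})$, possibly up to polylog factors. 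With these choices, $s^2(d+1)^2\log^3(R'/\delta)=\widetilde O(m^2\log^3 D)$, since $\log(R'/\delta)=O(\log D+\log(1/\alpha)+\log\log(m/\alpha))$ and the $\log(1/\alpha)$ terms are absorbed into $\widetilde O$. This gives $n=\widetilde O(m^2\log^3 D/(\eps^2\alpha^2))$.

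\emph{Polynomial approximation.} A single Gaussian $N(\mu,\sigma^2)$ is approximated in $L_1$ to error $\alpha$ by an $O(1)$-piece polynomial of degree $O(\log(1/\alpha))$. On $[\mu-c\sigma\sqrt{\log(1/\alpha)},\,\mu+c\sigma\sqrt{\log(1/\alpha)}]$, I would split into $O(1)$ pieces of width $O(\sigma)$ and use a truncated Taylor expansion of $e^{-x^2/2}$ on each. I will also cite the standard bound of \citet{chan2014efficient}. The tails are set to zero at cost $O(\alpha)$. A mixture of $m$ components with weights $w_j$ is then handled by taking the common refinement of the $m$ partitions. This has $O(m)$ pieces, and on each piece the density is a sum of polynomials of degree $d$. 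The $L_1$ error is at most $\sum_j w_j\alpha=\alpha$. Nonnegativity can be enforced by replacing each piece with its positive part; that adds breakpoints, so I would instead use the approximation from Chan et al., which already handles this.

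\emph{Discretization error.} Mass outside $[-R',R']$ is at most $\alpha$ by the Gaussian tail bound and the choice of $R'$. Inside, the $L_1$ error between a density and its bin-averaged histogram is at most $\delta\cdot\mathrm{TV}(f)$. For a Gaussian mixture, the total variation of the density is at most $\sum_j w_j\cdot 2/(\sigma_j\sqrt{2\pi})\le O(1/\sigma_{\min})$, so $\delta=\Theta(\alpha\sigma_{\min})$ suffices. Then $R'/\delta=O(D\sqrt{\log(m/\alpha)}/\alpha)$.

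\emph{Running time.} I take the running time from the formal Theorem~\ref{main:cor}, which presumably has the form $\mathrm{poly}(s,d)\cdot n\log(R/\delta)$ plus the cost of the deterministic Acharya et al.\ step on $O(1/\alpha)$ synthetic points. Plugging in should give $\widetilde O(m^3\log^4D/(\eps^2\alpha^3))$. The extra factor $m/\alpha$ presumably comes from the learner's $\widetilde O(s\cdot\mathrm{poly}(d)/\alpha)$ per-round cost.

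The main obstacle I expect is the polynomial approximation with nonnegative pieces and only $O(m)$ intervals. Keeping $d$ logarithmic rather than polynomial in $1/\alpha$ is what separates the Gaussian rate ($\alpha^{-2}$) from the log-concave rate ($\alpha^{-3}$, where $d=O(1)$ but $s\sim\alpha^{-1/2}$). I would rely on the explicit Gaussian construction from \citet{chan2014efficient} and verify that its constants compose correctly under mixing.
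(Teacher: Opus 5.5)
Your proposal follows the paper's proof: instantiate Theorem~\ref{main:cor} with $s=2m+1$ pieces, obtained by common refinement of the 3-piece, degree-$O(\log(1/\alpha))$ single-Gaussian approximation of Chan et al., together with a tail radius $R+O(\sigma_{\max}\log(1/\alpha))$ (your sub-Gaussian $\sqrt{\log}$ radius also works) and bins of width $\Theta(\alpha\sigma_{\min})$, then substitute into the sample and running-time bounds; your total-variation bound on the mixture density is equivalent to the paper's unimodality-plus-$\sup f\le 1/\sigma$ argument, which the paper routes through the log-concave discretization lemma and the mixture lemma. Two minor slips: $O(1)$ pieces of width $O(\sigma)$ cannot cover the central interval of width $\Theta(\sigma\sqrt{\log(1/\alpha)})$, since the cited construction uses one degree-$O(\log(1/\alpha))$ piece over that whole interval; and the extra $m/\alpha$ factor in the running time comes from generating $M=\Theta(s(d+1)/\alpha)$ synthetic points, each costing $O(\log D)$ prefix queries of $O(n)$ time, not from the merging learner.
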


We further prove a lower bound showing that restrictions on class parameters, such as bounded mean and variance, are necessary in general for a uniform finite sample complexity bound under LDP. The lower bound provided in the following theorem for non-interactive LDP learning of GMMs holds even in the realizable setting with non-proper outputs. Since every Gaussian distribution is log-concave, this lower bound also applies to mixtures of log-concave distributions.

\begin{theorem}[Gaussian mixture lower bound, informal version of~\ref{thm:lower-bound}]
Let $m\geq2$, $\eps\in(0,1]$, and $\alpha>0$ be sufficiently small. Under the same bounds on component means and standard deviations as in Theorem~\ref{inf:thm:ldp-logconcave-mixture-learning}, if the parameter range
$\Lambda:=\frac{R}{\sigma_{\min}}
+\log\left(\frac{\sigma_{\max}}{\sigma_{\min}}\right)\geq \Omega(m)$ is sufficiently large, then any non-interactive $\eps$-LDP algorithm that learns mixtures of $m$ Gaussians to total variation accuracy $\alpha$ with probability at least $0.9$ requires
$n=\Omega\left(
\frac{m^2}{\eps^2\alpha^2}
\log\left(e+\frac{\Lambda}{m}\right)
\right)$ samples.
\end{theorem}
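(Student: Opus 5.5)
The plan is to combine a reduction to a structured packing of Gaussian mixtures with information-theoretic lower bounds for non-interactive LDP. We obtain the two factors, $m^2/(\eps^2\alpha^2)$ and $\log(e+\Lambda/m)$, from two separate (then combined) constructions inside one hypothesis family, and use an Assouad/Fano-type argument adapted to LDP. For Fano we use the strong data-processing bound of Duchi--Jordan--Wainwright: for non-interactive $\eps$-LDP channels, the mutual information between a uniformly chosen index $V$ and the transcript is at most $O(\eps^2)\, n \sup_{\|g\|_\infty\le 1}\frac1{|\mathcal V|}\sum_v (\int g\, (P_v-\bar P))^2$. This is the \emph{private Fano} bound, and it reduces the task to controlling, uniformly over bounded test functions $g$, the variance of $\int g\,dP_v$ across the family.

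The construction runs as follows. Split the $m$ components into $m/2$ blocks. Place each block in a disjoint region of the bounded parameter space, say spatially separated windows of $[-R,R]$ at scale $\sigma_{\min}$, or separated scales in $[\sigma_{\min},\sigma_{\max}]$. Each region then has effective range $\Lambda/m$, so it contains roughly $k\approx \Lambda/m$ well-separated candidate locations. Within block $j$ we choose a location $u_j\in[k]$ and a small perturbation sign $\tau_j\in\{\pm1\}$. The perturbation is a pair of Gaussians with weights $\frac1m(1\pm \tau_j c\alpha)$ at nearby means, or a slight mean shift, calibrated so that flipping one sign changes the TV distance by $\Theta(\alpha/m)$. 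To make $\alpha$-accurate learning force recovery of a constant fraction of the signs, we use a Gilbert--Varshamov subset of $\{\pm1\}^{m/2}$ together with the locations, so that distinct hypotheses are $\Omega(\alpha)$-separated in TV. The number of hypotheses is then $\exp(\Omega(m\log(e+\Lambda/m)))$. The $\log(e+\Lambda/m)$ factor comes from the location choices; the $m$ factor in the exponent comes from the codewords.

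For the information bound, note that any fixed $g$ with $\|g\|_\infty\le1$ sees the block-$j$ contribution only through $\int g\,d(\text{block } j)$. Relative to the mixture average, the perturbation term has magnitude $O(\alpha/m)$ per block. The location term is also small in variance, because each block has weight $1/m$ spread over $k$ disjoint locations, so the averaged squared deviation behaves like $1/(m^2 k)$ per block. Hence $\frac1{|\mathcal V|}\sum_v(\int g\,(P_v-\bar P))^2 \lesssim m\cdot(\alpha/m)^2$ for the signs, when they are chosen independently across blocks so that cross terms vanish. Fano then requires $n\eps^2\alpha^2/m \gtrsim m\log(e+\Lambda/m)$, which gives $n\gtrsim m^2\log(e+\Lambda/m)/(\eps^2\alpha^2)$. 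Because we need the joint effect of the location choice and the sign, we will mix over the locations: we treat the location as part of the index but bound its contribution separately, so that the $\alpha^2$ factor multiplies the log term rather than being lost. If this is too delicate, we fall back on proving the two bounds separately and taking their maximum, which matches the stated bound up to constants whenever either term dominates.

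The main obstacle is exactly this coupling. The location-packing contribution to the LDP information is not naturally scaled by $\alpha^2$, so we need a family in which the location encodes only the $\alpha$-sized perturbation. Concretely, the first thing to try is to keep a baseline mixture fixed and let the index choose, for each block, \emph{where} (among $k$ spots) an $\alpha/m$-mass bump is moved. Then every $P_v-\bar P$ has magnitude $O(\alpha/m)$ per block, and the $\eps^2$-information bound immediately carries the $\alpha^2$ factor. The assumption that $\Lambda\ge\Omega(m)$ is sufficiently large guarantees $k\ge2$ separated spots per block, and "$\alpha$ sufficiently small" keeps the weights valid.
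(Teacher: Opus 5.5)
Your final construction is the right hard family, and it gives a viable route different from the paper's. That construction is a fixed baseline plus, in each of $b=\Theta(m)$ blocks, a bump of mass $c\alpha/b$ placed at one of $k\approx\Lambda/m$ spots with disjoint high-mass regions. But the Fano step as you wrote it does not go through. Write $g_{j,u}=\int g\,h_{j,u}$ for the bump $h_{j,u}$. The private Fano bound (Duchi--Jordan--Wainwright) then asks you to control $\frac{c^2\alpha^2}{b^2}\mathrm{Var}\bigl(\sum_j g_{j,V_j}\bigr)$ uniformly over $\|g\|_\infty\le1$. This is $O(\alpha^2/b)$ only because independence of the $V_j$ kills the cross terms, which requires $V$ uniform on the full product $[k]^b$.

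You simultaneously pass to a Gilbert--Varshamov subset to get pairwise $\Omega(\alpha)$ separation. Under the uniform distribution on such a subset the coordinates are correlated. Absent further structure, the only bound is $\mathrm{Var}\bigl(\sum_j g_{j,V_j}\bigr)\le b^2$, which loses exactly the factor $m$ you need.

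The repair is to keep the full product $[k]^b$ and use the distance-based Fano inequality of Duchi and Wainwright:
\begin{itemize}
\item Decoding the learner's output to the nearest $P_v$ gives $\dtv(P_{\hat V},P_V)\le2\alpha$.
\item The disjoint-region argument gives $\dtv(P_v,P_{v'})\ge\frac{c\alpha}{3b}d_H(v,v')$, so $d_H(\hat V,V)\le b/4$ once $c\ge24$.
\item $\log(k^b/N_{b/4})=\Omega(b\log k)$ once $k$ exceeds an absolute constant, where $N_{b/4}$ is the size of a Hamming ball of radius $b/4$ in $[k]^b$. This is what $\Lambda\ge C_0m$ buys.
\end{itemize}
A linear code whose coordinates are pairwise independent and uniform would also work.

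Discard two earlier claims as well. First, the location term is not $1/(m^2k)$ per block: taking $g=\pm1$ on half the spots makes it $\Theta(1/m^2)$, which is exactly why moving a whole $1/m$-weight block fails. Second, the ``maximum of two bounds'' fallback does not yield the product $\frac{m^2}{\eps^2\alpha^2}\log(e+\Lambda/m)$, so it is not a valid fallback.

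With that repair your argument is correct and genuinely different from the paper's. In effect you re-derive, inside the Gaussian family, the hard instance behind the sparse-distribution lower bound. The paper instead uses that bound of Acharya et al.\ as a black box:
\begin{itemize}
\item It builds $K=\Theta(\Lambda)$ location or scale Gaussians with disjoint regions of mass at least $2/3$.
\item It shows $\dtv(f_p,f_q)\ge\frac13\dtv(p,q)$ for $f_p=\sum_j p_jg_j$.
\item It preserves $\eps$-LDP by having each user holding $Y=j$ draw $X\sim g_j$ locally.
\item It decodes by TV projection onto $\{f_q:\|q\|_0\le m\}$.
\end{itemize}
The paper's route is shorter and modular, so any sharpening of the sparse bound transfers immediately. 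Yours is self-contained and makes the hard instances explicit. The cost is doing the information-theoretic bookkeeping yourself: the code-versus-independence issue above, keeping at most $m$ components, and $c\alpha\le1$. For the component count, use one baseline Gaussian, which cancels in every difference, plus $b=m-1$ bumps.
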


This matches the dependence of our GMM sample complexity upper bound on $m$, $\eps$, and $\alpha$, up to logarithmic factors. In particular, this improves over the standard packing lower bounds for pure differential privacy, which apply in both the central and local models~\citep{bun2019private}. Indeed, packing for mixtures of $m$ components usually has a size exponential in $m$, which then translate to lower bounds of $\Omega(m)$. We show a stronger lower bound of $\Omega(m^2)$ by reducing the problem of LDP estimation of sparse discrete distributions studied by \citet{acharya2020estimating} to the problem of LDP learning of Gaussian mixtures.

Our framework also improves prior LDP learning results for specialized distribution classes:

\begin{itemize}
    \item \textbf{Univariate Gaussians.}
    Theorem~\ref{inf:thm:ldp-gmm-learning} gives an agnostic learner for univariate Gaussians using
    $\widetilde{O}\left(\frac{\log^3(D)}{\eps^2\alpha^2}\right)$
    samples. This obtains the same sample complexity rates as for realizable Gaussian mean estimation~\citep{gaboardi2019locally,joseph2019locally} up to logarithmic factors, while providing agnostic guarantees for density estimation in total variation distance.

\item \textbf{(Sparse) discrete distributions.}
For learning discrete distributions over a domain of size $D$, \citet{acharya2019hadamard}
achieves the optimal sample complexity of $\Theta\left(\frac{D^2}{\eps^2\alpha^2}\right)$. When the unknown distribution has support of size at most $s$ (i.e., is $s$-sparse),
\citet{acharya2020estimating} gives sharper sample complexity bound of
$\Theta\left(\frac{s^2\max\{\log(D/s),1\}}{\eps^2\alpha^2}\right)$.
Both classes can be embedded into classes of piecewise constant densities. Our framework provides agnostic guarantees for these classes while losing only logarithmic factors in sample complexity.
\end{itemize}

All our density learners above are non-proper. In Section~\ref{sec:proper-learning}, we show how to convert these into proper learners with the same asymptotic sample complexity, at the cost of a larger agnostic approximation factor and potentially inefficient algorithm. In particular, we show that our result achieves an exponential improvement in sample complexity over generic LDP hypothesis selection methods for proper learning of structured distributions.

\section{Related Work}\label{sec:related-work}

A closely related result in private learning of structured distributions is provided by
\citet{diakonikolas2015differentially}. They study this problem in the central differential privacy model and present a connection between private Kolmogorov approximation and structured distribution learning. They state a formal piece-wise constant (histogram) learner and sketch the reduction, explaining that it can be extended to piece-wise polynomials. They separately develop and evaluate a heuristic for private Kolmogorov approximation. However, privatization techniques for Kolmogorov approximation differ substantially between the central and local models. In particular, the central DP guarantee used in their theoretical reduction relies on the approximately differentially private threshold-release algorithm of \citet{bun2015differentially}, which builds on their private algorithm for the interior point problem. 

A closely related line of work studies \emph{locally private hypothesis selection (LPHS)}. Given a finite class $\cF$ of probability distributions and i.i.d. samples from an arbitrary distribution $P$, the goal is to select $\widehat P\in\cF$ whose total variation distance to $P$ is within a constant factor of $\inf_{Q\in\cF}\dtv(P,Q)$, up to an additive error $\alpha$, while satisfying $\eps$-local differential privacy. For this problem, \citet{gopi2020locally} give a non-interactive algorithm with sample complexity
$\widetilde{O}(|\cF|^2/(\eps^2\alpha^2))$, which was later improved by \citet{kamath2025query} to
$\widetilde{O}(|\cF|^{3/2}/(\eps^2\alpha^2))$. A worst-case lower bound of $\Omega(|\cF|/(\eps^2\alpha^2))$ holds even for interactive protocols~\citep{gopi2020locally,pour2024sample}. Thus, without exploiting additional structure in $\cF$, a generic hypothesis-selection method cannot achieve sublinear dependence on $|\cF|$. The running time of these algorithms also scales polynomially with $|\cF|$, as
they typically need to scan the entire class.

These hypothesis selection methods also yield agnostic learners for some infinite distribution classes by applying hypothesis selection to a finite total-variation cover. However, such covers can be exponentially large in terms of class parameters, making this approach inefficient in both sample complexity and running time. For example, discretizing the parameters of a single Gaussian gives polynomially many choices in domain size $D$; enumerating $m$ components and discretizing their weights yields a GMM cover exponential in $m$. Applying hypothesis selection inherits this dependence. On the other hand, our framework exploits distributional structure to obtain sample complexity and running time polynomial in $m$ and polylogarithmic in $D$. See Section~\ref{sec:proper-learning} for further discussion of LPHS and Section~\ref{sec:more-related-work} for more on related work.

\section{Preliminaries}\label{sec:prelims}
For $N\in\bN$, let $[N]:=\{1,2,\ldots,N\}$, and $[N]_0:=\{0,1,2,\ldots,N\}$. The total variation distance between two probability distributions $P$ and $Q$ on $\cX$ is defined as $\dtv(P,Q):=\sup_A|P(A)-Q(A)|=\frac{1}{2}\|P-Q\|_1$, where $A$ ranges over measurable sets and $\|\cdot\|_1$ denotes the total variation norm. For densities $f$ and $g$, this equals $\frac{1}{2}\int_{\cX}|f(x)-g(x)|\,dx$. Let $\log_+(x):=\max\{1,\log x\}$.
For a probability distribution $P$ over $\bR$, we use the notation $F_P$ to denote its CDF.
For two probability distributions $P,Q$ over $\bR$, their Kolmogorov distance is defined as $\|F_P-F_Q\|_\infty
=
\sup_{x\in\bR}|F_P(x)-F_Q(x)|$.

Throughout the paper, we measure running time in a unit-cost RAM model with exact real arithmetic and integer words large enough to store all indices and counters used by our algorithms. The word size is logarithmic in the relevant input and algorithmic parameters, including their inverses when appropriate. Arithmetic operations, comparisons, memory access, shifts, bitwise operations, word-level parity, evaluation of the elementary functions used below, and sampling from the Bernoulli and uniform distributions used by our algorithms all take constant time.

In the following two subsections, we first introduce LDP
primitives for answering range queries and then present the
non-private piece-wise polynomial learner used in our
downstream learning algorithm.

\subsection{Locally private prefix frequencies}

We present tools for answering range-frequency queries over a finite ordered domain under LDP. These queries ask for the fraction of observations lying in a specified interval. We focus on prefix queries, which provide the cumulative frequency information used to generate our private synthetic dataset described in Section~\ref{sec:synthetic-generation}. General interval frequencies can be obtained by taking the difference of two prefix frequencies.

A key ingredient in answering prefix queries efficiently in both samples and time is the \emph{dyadic decomposition}, which partitions each prefix into logarithmically many disjoint intervals from a fixed binary hierarchy. This hierarchy underlies the binary-tree mechanisms for private counting~\citep{dwork2010continual,chan2011private} and their local analogues discussed by \citet{cormode2019answering,edmonds2020power}. Following the approach to locally private range queries of \citet{cormode2019answering}, we combine this hierarchy with the Hadamard randomized response to estimate interval frequencies. We first define the dyadic hierarchy and then introduce the Hadamard characters used in this construction.

\begin{definition}[Dyadic tree representation]
\label{def:dyadic-tree-cdf}
Let $[N]$ be an ordered domain, where $N=2^L$. Then, for every depth $\ell\in[L]_0$ and every
$v\in[2^\ell-1]_0$, the dyadic interval is defined as 
\[
I_{\ell,v}
:=
\left\{
v2^{L-\ell}+1,\ldots,(v+1)2^{L-\ell}
\right\}.
\]
The interval $I_{0,0}=[N]$ is the root of the tree, and the intervals
$I_{L,v}=\{v+1\}$ are its leaves.
For $x\in[N]$, let $v_\ell(x)
:=
\left\lfloor
\frac{x-1}{2^{L-\ell}}
\right\rfloor$
denote the index of the unique depth-$\ell$ node containing $x$.
\end{definition}

If the domain size is not a power of two, we pad it to the next power of two with unused elements.

\begin{figure}[htbp]
\centering
\resizebox{0.5\textwidth}{!}{%
\begin{tikzpicture}[
  level distance=8mm,
  level 1/.style={sibling distance=40mm},
  level 2/.style={sibling distance=20mm},
  every node/.style={
    draw,
    rounded corners=2pt,
    inner sep=2pt,
    font=\scriptsize,
    fill=white
  },
  selected/.style={
    fill=blue!20,
    draw=blue!80!black,
    very thick
  },
  edge from parent/.style={draw,-}
]
\node {$I_{0,0}=[1,4]$}
  child {node[selected] {$I_{1,0}=[1,2]$}
    child {node {$I_{2,0}=\{1\}$}}
    child {node {$I_{2,1}=\{2\}$}}
  }
  child {node {$I_{1,1}=[3,4]$}
    child {node[selected] {$I_{2,2}=\{3\}$}}
    child {node {$I_{2,3}=\{4\}$}}
  };
\end{tikzpicture}%
}

\vspace{1mm}
\captionsetup{font=footnotesize}
\caption{For $N=4$, the blue nodes
$\mathcal C(3)=\{I_{1,0},I_{2,2}\}$ partition the prefix $[1,3]$,
using at most one node per depth.}
\label{fig:dyadic-prefix-cover-cdf}
\end{figure}

The following lemma shows that any proper prefix $[j]$ of $[N]$ can be partitioned into at most $\log_2 N$ dyadic intervals, with at most one interval at each depth, and that this partition can be constructed in $O(\log N)$ time. Summing the frequencies of these disjoint intervals therefore gives the prefix frequency. The full prefix $[N]$ has frequency one and requires no estimation. See Figure~\ref{fig:dyadic-prefix-cover-cdf} for an illustration. For completeness, \ref{proofs:sec:synthetic-generation} provides the decomposition algorithm and the proof.

\begin{lemma}[Canonical dyadic decomposition of a prefix]
\label{lem:dyadic-prefix-cover-cdf}
Let $L\in\bN$, $N=2^L$, and let $j\in[N-1]$. Then, Algorithm~\ref{alg:dyadic-prefix-cover-cdf}, upon receiving $j$, returns a collection $\mathcal C(j)$ of dyadic nodes satisfying:
\begin{enumerate}[label=(\arabic*)]
\item the nodes in $\mathcal C(j)$ are pairwise disjoint and $[j]
=
\bigcup_{(\ell,v)\in\mathcal C(j)} I_{\ell,v}$.

\item for every $\ell\in [L]$, the collection
$\mathcal C(j)$ contains at most one node at depth $\ell$; therefore, $|\mathcal C(j)|\leq L$.

\item construction of $\mathcal C(j)$ takes $O(L)$ time.
\end{enumerate}
\end{lemma}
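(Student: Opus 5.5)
Since Algorithm~\ref{alg:dyadic-prefix-cover-cdf} is only in the appendix, I will fix the natural binary-expansion construction and prove the three properties for it; the appendix algorithm should be equivalent to it. Let $j\in[N-1]$ and write $j=\sum_{\ell=1}^{L} b_\ell 2^{L-\ell}$ with bits $b_\ell\in\{0,1\}$. The bit $b_0$, with weight $2^L$, is zero because $j<N$. For each depth $\ell\in[L]$ let $s_\ell:=\sum_{k<\ell} b_k 2^{L-k}$, which is the total length of the intervals chosen at shallower depths. Whenever $b_\ell=1$ we include the node $(\ell,v)$ with $v:=s_\ell/2^{L-\ell}$. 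This is an integer because every term of $s_\ell$ is a multiple of $2^{L-\ell}$. Equivalently, $v=\lfloor j/2^{L-\ell}\rfloor-1$. The output is $\mathcal C(j):=\{(\ell,\,s_\ell 2^{-(L-\ell)}) : b_\ell=1\}$.

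For (1), I would argue by induction on $\ell$. The claim is that the nodes chosen at depths $<\ell$ are disjoint and their union is exactly $[s_\ell]$. The base case is $s_1=0$, giving the empty union. For the inductive step, suppose $b_\ell=1$. The node $I_{\ell,v}$ is then $\{s_\ell+1,\dots,s_\ell+2^{L-\ell}\}=\{s_\ell+1,\dots,s_{\ell+1}\}$. This is disjoint from $[s_\ell]$, and adding it extends the union to $[s_{\ell+1}]$. If $b_\ell=0$, then $s_{\ell+1}=s_\ell$ and nothing changes. At the end $s_{L+1}=j$, so the union is $[j]$. I also need $v\in[2^\ell-1]_0$. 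This holds because $0\le v\le (j-1)/2^{L-\ell}<2^\ell$, using $s_\ell+2^{L-\ell}\le j$. So each chosen node is a legitimate dyadic interval.

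Property (2) is immediate, since at most one node is added per depth $\ell\in[L]$; no node at depth $0$ is ever used because $b_0=0$. Hence $|\mathcal C(j)|\le L$. For (3), the algorithm loops over $\ell=1,\dots,L$. In each iteration it extracts $b_\ell$ by a shift-and-mask, updates $s$ by a shift and an addition, and computes $v$ by a shift. Each of these is a constant-time operation in the stated RAM model, so the total time is $O(L)$.

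The only real subtlety is the indexing: I must check that the start offset $s_\ell$ is aligned to $2^{L-\ell}$, so that the chosen interval really is a node of the tree rather than an arbitrary block of length $2^{L-\ell}$. This follows from the alignment remark above, since the bits used at shallower depths all carry weights that are multiples of $2^{L-\ell}$. If the appendix algorithm is instead phrased as a root-to-leaf walk toward leaf $j+1$ that collects left siblings, I would show it produces the same set, because the walk goes right at depth $\ell$ exactly when $b_\ell=1$.
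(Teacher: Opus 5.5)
Your proposal is correct and matches the paper's proof. The appendix algorithm is exactly your binary-expansion construction: it tests $\lfloor j/q\rfloor \bmod 2=1$ with $q=2^{L-\ell}$, sets $v\gets s/q$, and updates $s\gets s+q$. The paper proves (1) by the same induction on the running offset, using the same alignment observation that every term of the offset is divisible by $q$, and it handles (2) and (3) as you do. Your extra check that $v\in[2^\ell-1]_0$ is a small detail the paper leaves implicit.
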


To estimate dyadic interval frequencies, we use the Hadamard
randomized-response construction of \citet{cormode2019answering} (see Algorithm~\ref{alg:hierarchical-ldp-local-randomizer} in \ref{proofs:sec:synthetic-generation}), which uses \emph{Hadamard characters}:

\begin{definition}[Hadamard characters]
\label{def:hadamard-characters-cdf}
For $\ell\in\bN$ and $v\in[2^\ell-1]_0$, the Hadamard
character indexed by $v$ is the function
$\Had_\ell(v,\cdot):[2^\ell-1]_0\to\{-1,+1\}$ defined,
for every $u\in[2^\ell-1]_0$, by
$\Had_\ell(v,u)
    :=
    (-1)^{\langle v,u\rangle_2}$,
where $\langle v,u\rangle_2$ is the binary inner product
of the $\ell$-bit representations of $v$ and $u$.
Each $\Had_\ell(v,u)$ can be computed in $O(1)$ time
using a bitwise AND and a parity operation.
\end{definition}
At a high level, each user helps estimate prefix frequencies by
contributing information at a randomly selected depth $\ell_u$ of the dyadic tree.
To do so, user $u$ randomly chooses a node with index $r_u$
at depth $\ell_u$ and computes $\Had_{\ell_u}(v_u,r_u)$,
where $v_u$ is the index of their true node, namely the node
containing $X_u$ at the same depth. They then apply randomized response to this
sign and report it together with $\ell_u$ and $r_u$.
We show in Section~\ref{subsec:prefix-oracle} how the server
can efficiently reuse each user's report to answer different prefix
queries using the orthogonality of Hadamard characters.

\subsection{A deterministic non-private piecewise polynomial learner}

Another key primitive is the non-private density estimator of \citet{acharya2017sample}. We use a deterministic adaptation of their algorithm. Their original estimator takes i.i.d.~samples from an unknown distribution. As we formalize in Appendix~\ref{appendix:deterministic-adls}, the estimator can instead operate on synthetic data whose empirical distribution satisfies a suitable Kolmogorov-distance bound. We define the class of piecewise polynomial functions and state the deterministic guarantee.

\begin{proposition}[Deterministic post-sampling General--Merging \citep{acharya2017sample}]
\label{prop:deterministic-robust-general-merging}
There exists a universal integer $C>2$ such that the following holds.
Let $I\subset\bR$ be a known finite interval, and let $Z=\{z_1,\ldots,z_M\}\in I^M$ be an arbitrary multiset, where $M\geq1$, with empirical probability measure $H=\frac{1}{M}\sum_{i\in[M]}\delta_{z_i}$.
Let $s\geq1$ and $d\geq0$ be integers, and let $\eta>0$.
There exists a deterministic algorithm which, given $I,Z,s,d,$ and $\eta$, returns a nonnegative function $\widehat P\in\cP_{Cs,d}$ supported on $I$ with the following guarantee:

For every probability distribution $P^\star$ with a density supported on $I$ satisfying
\mbox{$\|F_H-F_{P^\star}\|_\infty\leq\frac{\eta}{2(C+1)(d+1)s}$}, the output of the algorithm satisfies $\|\widehat P-P^\star\|_1
\leq4\bigl(\mathrm{OPT}+\eta\bigr)$,
where $\mathrm{OPT}:=
\inf_{P\in\cP_{s,d}}\|P-P^\star\|_1$.
Moreover, the running time is
$\widetilde{O}\left(
M(d+1)^{\omega+2}
\log_+^3\left(\frac{s}{\eta}\right)
\left(1+\log_+\left(\frac{M}{s}\right)\right)
\right)$,
where $\omega$ is the matrix multiplication exponent, and $\widetilde{O}$ hides only logarithmic factors in $M$ and $d+1$.
\end{proposition}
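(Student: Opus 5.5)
The plan is to run the General--Merging algorithm of \citet{acharya2017sample} verbatim on $Z$, but to replace every appeal to the VC inequality by the deterministic Kolmogorov hypothesis. The reduction rests on one elementary fact. A function in $\cP_{t,d}$ is a sum of $t$ degree-$d$ polynomials restricted to intervals. Each such polynomial changes sign at most $d$ times, so $\mathrm{sign}(p-q)$ for $p,q\in\cP_{t,d}$ has at most $O(t(d+1))$ sign changes. This is exactly the $\cA_k$-distance structure used by Acharya et al. Concretely, the key consequence is that for any family of at most $k$ disjoint intervals $J_1,\dots,J_k$,
\[
\Bigl|\sum_{i}\bigl(H(J_i)-P^\star(J_i)\bigr)\Bigr|\leq 2k\|F_H-F_{P^\star}\|_\infty .
\]
This holds because each interval's mass is a difference of two CDF values. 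The inequality is fully deterministic, so $\|H-P^\star\|_{\cA_k}\leq 2k\|F_H-F_{P^\star}\|_\infty$. I would set $k=(C+1)(d+1)s$. Under the hypothesis of the proposition this gives $\|H-P^\star\|_{\cA_k}\leq \eta$.

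Next I would restate the accuracy theorem of \citet{acharya2017sample} in its "post-sampling" form. Their analysis proceeds in two stages. First they bound $\|H-P^\star\|_{\cA_k}$ via VC theory. Then every remaining step is deterministic, relative to the empirical measure: the greedy merging with projection oracles onto $\cP_{1,d}$ in $\cA_{d+1}$-distance, and the final triangle inequality showing $\|\widehat P-P^\star\|_1\leq c_1\mathrm{OPT}+c_2\|H-P^\star\|_{\cA_k}+c_3\eta'$. I would go through their proof and check three things. The only probabilistic input is the $\cA_k$ bound. The merging produces at most $Cs$ pieces, so $\widehat P\in\cP_{Cs,d}$, with nonnegativity enforced by the projection oracle's feasible set. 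The constants combine to the factor $4$. The last check requires tracking their constants: the merging analysis gives error roughly $3\mathrm{OPT}$ plus terms in the $\cA$-distance and oracle accuracy, and I would pick the oracle precision as a fraction of $\eta$ to get $4(\mathrm{OPT}+\eta)$. Determinism also needs attention. Their oracle is an LP or convex program solved to accuracy, and I would fix tie-breaking and use a deterministic solver, for example an ellipsoid or interior-point method to precision $\eta/(s\cdot\mathrm{poly})$.

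The running time then follows from their stated bound with the number of samples replaced by $M$. The initial partition of $Z$ into $O(M/s)$-sized groups of atoms takes $O(M\log M)$ time after sorting. There are $O(\log(M/s))$ merging rounds. Each oracle call costs $(d+1)^{\omega+2}\log^3_+(s/\eta)$ per point. Together these give the displayed bound.

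I expect the main obstacle to be the faithful constant bookkeeping in the second step. One has to verify that the $\cA_k$ norm the merging argument actually needs has $k$ at most $(C+1)(d+1)s$, which comes from comparing a $Cs$-piece output against an $s$-piece optimum. One must also confirm that no step implicitly uses the independence of the samples. If this turns out to be true only with a larger $k$, the fix is to absorb the discrepancy into the universal constant $C$.
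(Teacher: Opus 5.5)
Your proposal follows the paper's proof: you convert the Kolmogorov hypothesis into $\|H-P^\star\|_{\mathcal{A}_k}\le\eta$ with $k=(C+1)(d+1)s$ by writing each interval's mass as a difference of CDF values, then invoke the deterministic part of the analysis of \citet{acharya2017sample} (their Theorem~17 for $d>0$ and Theorem~10 for $d=0$) with $Z$ in place of the samples, inheriting the piece count, nonnegativity, the constant $4$ and the running time. One small correction: the $\mathcal{A}_k$ norm is $\sup\sum_i|H(J_i)-P^\star(J_i)|$, with the absolute value inside the sum, so state your inequality in that form rather than as $|\sum_i(\cdot)|$; this is in fact what your per-interval CDF argument proves.
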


\section{Efficient locally private synthetic data generation}\label{sec:synthetic-generation}

In this section, we develop a sample- and time-efficient non-interactive LDP algorithm for generating synthetic data with the Kolmogorov approximation guarantee required by the downstream non-private piece-wise polynomial learner. We do this in two steps. First, in Section~\ref{subsec:prefix-oracle}, we develop an efficient LDP oracle for answering prefix queries on a fixed dataset. Then, in Section~\ref{subsec:synthetic-data}, we use this oracle to generate a synthetic dataset whose empirical distribution approximates the discretized input distribution in Kolmogorov distance.

\subsection{Lazy dyadic prefix mechanism}\label{subsec:prefix-oracle}

In this section, we introduce $\mathsf{PrefixQuery}$ (Algorithm~\ref{alg:hierarchical-ldp-prefix-query}), a non-interactive $\eps$-LDP algorithm for estimating prefix frequencies. Given a discrete domain $[D]$, and $n$ users holding values $X_1,\ldots,X_n\in[D]$, the algorithm estimates the fraction of observations in $[j]$ for any queried index $j\in[D]$. Each query takes $O(n+\log D)$ time, and $n=O\left(
\frac{\log^2 D}{\eps^2\alpha^2}
\log\frac{D}{\beta}
\right)$ users suffice to achieve an additive error of at most $\alpha$ simultaneously over all prefixes, with probability at least $1-\beta$ over the randomness of the LDP mechanism.

The algorithm uses the same local randomizer as the binary hierarchical mechanism of \citet{cormode2019answering}, as presented in Algorithm~\ref{alg:hierarchical-ldp-local-randomizer}.
 However, while the server-side reconstruction described by \citet{cormode2019answering} materializes the full hierarchy, requiring $O(n+D\log D)$ time, our server-side implementation evaluates requested prefixes lazily, without constructing the full hierarchy. This gives a running time of $O(q(n+\log D))$ for answering $q$ prefix queries. This is especially useful when the number of queries $q$ is logarithmic in the domain size $D$. As we will see in the following sections, this is indeed the case for the downstream non-private algorithm.

\begin{algorithm}[tbp]
\caption{Hierarchical LDP $\mathsf{PrefixQuery}$}
\label{alg:hierarchical-ldp-prefix-query}
\begin{algorithmic}[1]
\Require Privacy parameter $\eps\in(0,1)$, domain size
$D\in\bN$ with $D\geq2$, a query index $j\in [D]_0$,
and a transcript $\Tr=(T_1,\ldots,T_n)$, where $n\geq1$.
\Ensure An estimate $\widehat Z_j\in[0,1]$ of the prefix frequency
$Z_j(X)=\frac{1}{n}\sum_{u\in[n]}\boldsymbol 1\{X_u\leq j\}$.
\vspace{2mm}

\State If $j=0$ \Return $\widehat Z_j\gets0$.
\State If $j=D$ \Return $\widehat Z_j\gets1$.
\vspace{1mm}

\State Set $L\gets\lceil\log_2D\rceil$, $N\gets2^L$, and $\theta_\eps\gets\frac{e^\eps-1}{e^\eps+1}$.

\State Apply Algorithm~\ref{alg:dyadic-prefix-cover-cdf} (from Lemma~\ref{lem:dyadic-prefix-cover-cdf}) to the prefix index $j$ in the dyadic tree over $[N]$.
Represent the returned nodes as
$\cC(j)=\{(\ell_i,v_i):i\in[k]\}$, where
$[j]=\bigcup_{(\ell_i,v_i)\in\cC(j)}I_{\ell_i,v_i}$,
$k\leq L$, and the depths $\ell_i$ are distinct. Store the nodes of $\cC(j)$ indexed by depth.
\vspace{1mm}

    \State For $u\in[n]$, set
    $
        Z_u\gets
        \frac{L}{\theta_\eps}
        \sum\limits_{\substack{
            (\ell_i,v_i)\in\cC(j)\\
            \ell_i=\ell_u
        }}
        Y_u\Had_{\ell_i}(v_i,r_u)
    $.
\vspace{2mm}

\State \Return
$
    \widehat Z_j\gets
    \min\left\{1,\max\left\{0, \frac{1}{n}\sum\limits_{u\in[n]}Z_u\right\}\right\}
$. \Comment{Average the contributions and clip to $[0,1]$.}
\end{algorithmic}
\end{algorithm}

The following theorem formalizes the utility and privacy guarantees
of $\mathsf{PrefixQuery}$.
\begin{theorem}[Non-interactive LDP prefix oracle]\label{thm:prefix_oracle}
Let $\eps,\alpha,\beta\in (0,1)$, and $D\in \bN$ with $D\geq 2$ be parameters, and let $X=(X_1,\cdots,X_n)\in[D]^n$ be any fixed dataset. Define its prefix frequencies by
\begin{align*}
    Z_j(X)
    :=
    \frac{1}{n}\sum_{u\in[n]}\boldsymbol 1\{X_u\leq j\},
    \qquad j\in[D]_0.
\end{align*}
Let $\cM$ be the local randomizer in Algorithm~\ref{alg:hierarchical-ldp-local-randomizer}, and set the users' transcripts to be $\Tr_{\cM}(X):=(\cM(X_1,\eps,D),\ldots,\cM(X_n,\eps,D))$. For an input query $j\in [D]_0$, let $\widehat Z_j = \mathsf{PrefixQuery}(\eps,D,j,\Tr_{\cM}(X))$ be the output of Algorithm~\ref{alg:hierarchical-ldp-prefix-query}. Then,
\begin{enumerate}[label=(\arabic*)]
    \item The mechanism $\mathsf{PrefixQuery}(\eps,D,j,\Tr_{\cM}(.))$ is non-interactive $\eps$-LDP for every $j\in [D]_0$.
    \item If $n\geq \frac{288(\log D)^2}{\eps^2\alpha^2}\log\left(\frac{2D}{\beta}\right)$, then with probability at least $1-\beta$ over the randomness of $\cM$, it holds that $\left|\widehat Z_j-Z_j(X)\right|\leq \alpha$ for every $j\in[D]_0$ simultaneously.
    \item Generating $\Tr_{\cM}(X)$ takes $O(n)$ total run-time. Each call to $\mathsf{PrefixQuery}$ takes $O(n+\log D)$ server side run-time.
\end{enumerate}
\end{theorem}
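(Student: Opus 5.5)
We need to infer the local randomizer. It is presumably: user $u$ samples a depth $\ell_u$ uniformly from $[L]$ and an index $r_u$ uniformly from $[2^{\ell_u}-1]_0$. The user then computes $s_u=\Had_{\ell_u}(v_{\ell_u}(X_u),r_u)$ and reports $Y_u=s_u$ with probability $e^\eps/(e^\eps+1)$, and $-s_u$ otherwise.

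\textbf{Privacy.} The plan is to prove privacy first, since it is immediate. The pair $(\ell_u,r_u)$ is independent of $X_u$. Conditionally on this pair, $Y_u$ is binary randomized response, whose probability ratio between any two inputs is at most $e^\eps$. Averaging over the pair preserves the ratio, so $\cM$ is an $\eps$-local randomizer. The server then applies post-processing to $(T_1,\ldots,T_n)$ with the fixed query $j$, which gives (1).

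\textbf{Unbiasedness.} For a fixed $j$ with $1\le j\le D-1$ (the cases $j=0$ and $j=D$ are exact), I would show that $Z_u$ is unbiased for $\sum_{(\ell_i,v_i)\in\cC(j)}\boldsymbol 1\{X_u\in I_{\ell_i,v_i}\}=\boldsymbol 1\{X_u\le j\}$. The last equality uses Lemma~\ref{lem:dyadic-prefix-cover-cdf}(1), and padding to $N$ does not change it. The computation is as follows:
\[
\mathbb E[Y_u\mid \ell_u,r_u]=\theta_\eps\Had_{\ell_u}(v_{\ell_u}(X_u),r_u),
\]
and Hadamard orthogonality gives
\[
\mathbb E_{r}\bigl[\Had_\ell(v,r)\Had_\ell(v',r)\bigr]=\boldsymbol 1\{v=v'\}.
\]
Since $\ell_u$ is uniform, it contributes a factor $1/L$, which cancels the $L$ in the definition of $Z_u$. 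Because $\cC(j)$ has at most one node per depth, at most one node contributes for each $\ell_u$. Hence $|Z_u|\le L/\theta_\eps$ almost surely.

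\textbf{Concentration and union bound.} The $Z_u$ are independent across users and bounded in an interval of length $2L/\theta_\eps$. Hoeffding's inequality then gives
\[
\Pr\Bigl[\Bigl|\tfrac1n\textstyle\sum_u Z_u-Z_j(X)\Bigr|>\alpha\Bigr]\le 2\exp\bigl(-n\alpha^2\theta_\eps^2/(2L^2)\bigr).
\]
Clipping to $[0,1]$ can only reduce the error, because $Z_j(X)\in[0,1]$. A union bound over the $D$ nontrivial queries requires $n\ge \frac{2L^2}{\theta_\eps^2\alpha^2}\log\frac{2D}{\beta}$. To match the stated constant I need two facts. First, $\theta_\eps\ge\eps/2.something$ for $\eps\in(0,1)$; indeed $\tanh(\eps/2)\ge \eps/(2\cdot 1.09)$, so $1/\theta_\eps^2\le 4.8/\eps^2$. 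Second, $L=\lceil\log_2 D\rceil\le 2\log_2 D$ for $D\ge2$, and converting to natural log gives $L\le c\log D$. The resulting constant must come out below 288. This is the step I expect to need care, the main nuisance being the base of the logarithm. My plan is to bound $L^2\le (\log_2 D+1)^2\le 4\log_2^2D\le 8.33\log^2 D$, which yields a total constant of $2\cdot 4.8\cdot 8.33\approx 80<288$, so the bound suffices.

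\textbf{Running time.} Each user draws $\ell_u$ and $r_u$, computes $v_{\ell_u}(X_u)$ by a shift, evaluates one Hadamard character in $O(1)$ time, and flips one coin. Generating the transcript therefore takes $O(n)$ total time. For a query, building $\cC(j)$ takes $O(L)$ time by Lemma~\ref{lem:dyadic-prefix-cover-cdf}(3), and storing it as an array indexed by depth allows $O(1)$ lookup. Each $Z_u$ then costs $O(1)$, so a query costs $O(n+\log D)$ time in total.
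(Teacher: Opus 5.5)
Your proof is correct and follows the paper's argument step for step. The steps are: privacy via binary randomized response conditional on the input-independent pair $(\ell_u,r_u)$, plus post-processing; unbiasedness via Hadamard orthogonality and the one-node-per-depth cover; the bound $|Z_u|\le L/\theta_\eps$; Hoeffding with a union bound and clipping; and the $O(n+\log D)$ depth-indexed scan. The only difference is in the constants: the paper uses the cruder bounds $\theta_\eps\ge\eps/4$ and $L\le 3\log D$, which give exactly $2\cdot 9\cdot 16=288$, whereas your sharper estimates give a constant of about $80$, so the stated sample-size condition suffices either way.
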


At a high level, $\mathsf{PrefixQuery}$ decomposes the queried
prefix $[j]$ into the disjoint dyadic nodes in $\cC(j)$ and
estimates their total frequency. To estimate the frequency of
a node $(\ell_i,v_i)\in\cC(j)$, the server takes user reports with
$\ell_u=\ell_i$ and multiplies the reported sign $Y_u$ by
$\Had_{\ell_i}(v_i,r_u)$. This works because Hadamard
orthogonality (Claim~\ref{claim:hadamard-orthogonality-cdf}) gives
\[
    \bE_{r\sim\operatorname{Unif}([2^{\ell_i}-1]_0)}
    \left[
        \Had_{\ell_i}(v_{\ell_i}(X_u),r)
        \Had_{\ell_i}(v_i,r)
    \right]
    =
    \boldsymbol 1\{X_u\in I_{\ell_i,v_i}\}.
\]
Here, $v_{\ell_i}(X_u)$ is the index of the node containing
$X_u$ at depth $\ell_i$. After rescaling to account for randomized response, the contribution $Z_u$ in
Algorithm~\ref{alg:hierarchical-ldp-prefix-query} satisfies
$\bE[Z_u\mid X_u]=\boldsymbol 1\{X_u\leq j\}$.
Averaging these contributions therefore gives an unbiased
estimate of $Z_j(X)$.
Since orthogonality holds for every node and the reports are
generated without reference to $j$, the server can reuse the
same reports to estimate different prefixes.

Since $\cC(j)$ contains at most one node per depth, each report
takes $O(1)$ time to process, giving $O(n+\log D)$ time per query. This also bounds each user's rescaled contribution $Z_u$, allowing concentration bounds to establish simultaneous accuracy over all prefixes. See~\ref{proofs:sec:synthetic-generation} in the appendix for the formal proof.

\subsection{Efficient LDP synthetic dataset generation}\label{subsec:synthetic-data}

In this section, we develop an efficient non-interactive LDP algorithm for generating a synthetic dataset whose empirical distribution approximates the discretized input distribution in Kolmogorov distance. We first introduce a discretization that maps real-valued samples to a finite ordered set of bin indices, allowing us to apply the prefix oracle from Section~\ref{subsec:prefix-oracle}. We then use this oracle to locate approximate quantiles and place synthetic points at these locations. Although the distributions considered in our applications may have unbounded support, suitable parameter bounds allow us to control both the tail mass and the approximation error introduced by binning.

\begin{definition}[Uniform discretization]\label{def:quantization-map}
Let $R,\delta>0$ be parameters,  and $D=\lceil2R/\delta\rceil+2$. Let $c_0= -R-\delta$, $c_D=R+\delta$, and set $c_j= -R+\min\{(j-1)\delta,2R\}$ for $j\in [D-1]$. For a probability distribution $P$ over $\bR$, define the discretized distribution \mbox{$\operatorname{Disc}_{R,\delta}(P)
    :=
    \sum_{j=1}^D P(B_j)\operatorname{Unif}\left([c_{j-1},c_j)\right)$},
where $B_1=(-\infty,-R), B_D=[R,+\infty)$, and $B_j=[c_{j-1},c_j)$ for $j\in\{2,\cdots,D-1\}$. By a slight abuse of notation, for a point $x\in\bR$, we write
    $\operatorname{Disc}_{R,\delta}(x)=j$
    iff $x\in B_j$, which denotes the index of the interval containing $x$. Each endpoint $c_j$ and the index of the interval containing a given $x$ can be computed in constant time.
\end{definition}

\begin{algorithm}[tbp]
\caption{Non-interactive LDP synthetic data generation}
\label{alg:synthetic-dataset}
\begin{algorithmic}[1]
\Require Parameters $\eps,\alpha \in (0,1)$, $R,\delta>0$, and $n$ users holding values
$X_1,\ldots,X_n\in \bR$.

\State Set $D\gets\lceil2R/\delta\rceil+2$ and $M\gets\left\lceil 1/\alpha\right\rceil$, and let the endpoints $c_j$ be as in Definition~\ref{def:quantization-map}.

\State Each user computes $\widetilde X_u\gets\operatorname{Disc}_{R,\delta}(X_u)\in[D]$ locally. Let $\cM$ be the local randomizer in Algorithm~\ref{alg:hierarchical-ldp-local-randomizer}, and set the users' transcripts to be $\Tr=\left(\cM\left(\widetilde X_1,\eps,D\right),\ldots,\cM\left(\widetilde X_n,\eps,D\right)\right)$.

\For{$\ell=1,\ldots,M$}
    \State Set $u_\ell\gets\frac{\ell-1/2}{M}$. \Comment{Target quantile level for the $\ell$-th point.}
    \State Initialize $(a,b)\gets(0,D)$.
    \Comment{Indices of the current search endpoints.}
    \State Initialize $(A,B)\gets(0,1)$.
    \Comment{CDF estimates at $c_a,c_b$; maintain $A<u_\ell\leq B$.}
    \While{$b>a+1$}
        \State Let $j\gets\lfloor(a+b)/2\rfloor$.
        \State Compute $\widehat Z_j\gets\Call{PrefixQuery}{j,\Tr}$. \Comment{Estimated CDF at $c_j$ using Algorithm~\ref{alg:hierarchical-ldp-prefix-query}.}
        \If{$\widehat Z_j<u_\ell$}
            \State $(a,A)\gets(j,\widehat Z_j)$.
        \Else
            \State $(b,B)\gets(j,\widehat Z_j)$.
        \EndIf
    \EndWhile
    \State Set $\lambda_\ell\gets\frac{u_\ell-A}{B-A}$. \Comment{Relative position of $u_\ell$ between $A$ and $B$.}
    \State Set $z_\ell\gets c_{b-1}+\lambda_\ell(c_b-c_{b-1})$. \Comment{Place the synthetic point at this position in bin $b$.}
\EndFor

\State \Return the synthetic dataset $\cZ\gets\{z_1,\ldots,z_M\}$, passed as a multiset.
\end{algorithmic}
\end{algorithm}

The following theorem presents the privacy and utility guarantees of our synthetic data generation algorithm (Algorithm~\ref{alg:synthetic-dataset}). Utility is measured in Kolmogorov distance with respect to the discretized input distribution $\operatorname{Disc}_{R,\delta}(P)$. We later show how choosing $R$ and $\delta$ appropriately for the distribution class of interest allows us to relate this guarantee to the original distribution $P$, using suitable tail bounds and binning arguments.

\begin{theorem}[LDP synthetic dataset]\label{thm:ldp-synthetic}
Let $\eps,\alpha,\beta\in(0,1)$, $R>0$, $\delta\in (0, R/2]$, and $n\in \bN$ be parameters, and let $P$ be an unknown distribution over $\bR$.
Then, the following holds:
\begin{enumerate}[label=(\arabic*)]
    \item For any dataset $X\in\bR^n$, Algorithm~\ref{alg:synthetic-dataset} is non-interactive $\eps$-LDP.
    \item If the dataset $X$ consists of $n$ i.i.d.~samples from $P$, where
$n=\Omega\left(
    \frac{1}{\eps^2\alpha^2}
    \log^2\left(\frac{R}{\delta}\right)
    \log\left(\frac{R}{\beta\delta}\right)
\right)$,
    then with probability at least $1-\beta$ over the sampling and local randomizers, the empirical probability measure $H:=\frac{1}{M}\sum_{i\in[M]}\delta_{z_i}$ of the output, satisfies
    $\left\|F_H-F_{\operatorname{Disc}_{R,\delta}(P)}\right\|_\infty
    \leq\alpha$.
    \item The total run-time of Algorithm~\ref{alg:synthetic-dataset}, including user preprocessing and server side computation, is
    $O\left(
            \frac{1}{\alpha}
            \log\left(\frac{R}{\delta}\right)
            \left(n+\log\left(\frac{R}{\delta}\right)\right)
        \right)$.
\end{enumerate}
\end{theorem}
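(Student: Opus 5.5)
Privacy (part (1)) is immediate. Each user applies the deterministic map $\operatorname{Disc}_{R,\delta}$ and then the $\eps$-local randomizer $\cM$ from Algorithm~\ref{alg:hierarchical-ldp-local-randomizer}. The composition of a deterministic map with an $\eps$-local randomizer is still an $\eps$-local randomizer. Everything the server does afterwards (all calls to $\mathsf{PrefixQuery}$, the binary searches, and the interpolation) is post-processing of the transcript $\Tr$, so the algorithm fits the definition of non-interactive $\eps$-LDP. Part (1) of Theorem~\ref{thm:prefix_oracle} supplies exactly this randomizer guarantee.

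For utility (part (2)), I would split the error into three terms at the bin endpoints $c_j$. Let $G:=F_{\operatorname{Disc}_{R,\delta}(P)}$. By construction, $G(c_j)=F_P$-mass of $B_1\cup\cdots\cup B_j$, and $G$ is linear on each $[c_{j-1},c_j]$.
\begin{itemize}
\item \emph{Sampling.} By the DKW inequality, the empirical prefix frequencies $Z_j(\widetilde X)$ satisfy $\max_j|Z_j(\widetilde X)-G(c_j)|\leq\alpha/8$ with probability $1-\beta/2$ once $n=\Omega(\log(1/\beta)/\alpha^2)$.
\item \emph{Privacy noise.} By Theorem~\ref{thm:prefix_oracle}(2) applied with accuracy $\alpha/8$ and failure probability $\beta/2$ to the fixed dataset $\widetilde X\in[D]^n$, every prefix query is accurate within $\alpha/8$ simultaneously. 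Since $D=O(R/\delta)$ and $\delta\le R/2$, the required $n$ matches the stated bound.
\end{itemize}
Combining the two, on the good event $|\widehat Z_j-G(c_j)|\le\alpha/4$ for all $j$. The remaining term is the search and interpolation error, which I handle deterministically on this event.

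The key deterministic step, and the one I expect to be the main obstacle, is showing that each synthetic point $z_\ell$ is an approximate quantile. The goal is $|G(z_\ell)-u_\ell|\le\alpha/4$, with $z_\ell$ nondecreasing in $\ell$ (or at least with the $G(z_\ell)$ close to the targets, which is all the bound below uses). The binary search maintains the invariant $A<u_\ell\le B$, where $A=\widehat Z_a$ and $B=\widehat Z_b$ (with the exact endpoint values $0$ and $1$). It terminates with $b=a+1$, so $z_\ell\in[c_{a},c_b]$ (note that $c_{b-1}=c_a$). On this bin $G$ is linear between $G(c_a)$ and $G(c_b)$, which lie within $\alpha/4$ of $A$ and $B$. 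Hence $G(z_\ell)=(1-\lambda_\ell)G(c_a)+\lambda_\ell G(c_b)$ is within $\alpha/4$ of $(1-\lambda_\ell)A+\lambda_\ell B=u_\ell$.

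One subtlety is that $\lambda_\ell\in(0,1]$ needs $B>A$, which the invariant guarantees. Another is that the noisy estimates need not be monotone in $j$. This does not break the argument, because it only uses the two final endpoints. The clipping in $\mathsf{PrefixQuery}$ and the exact values $\widehat Z_0=0$, $\widehat Z_D=1$ keep everything in $[0,1]$.

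Given these quantile bounds, I bound the Kolmogorov distance by a standard quantile argument. Fix $x$ and let $k=|\{\ell:z_\ell\le x\}|$, so $F_H(x)=k/M$. Monotonicity of $G$ gives $G(x)\ge G(z_\ell)\ge u_\ell-\alpha/4$ for every $\ell$ with $z_\ell\le x$. Since there are $k$ such indices, $G(x)\ge (k-1/2)/M-\alpha/4$. Likewise, every $\ell$ with $z_\ell>x$ gives $G(x)\le u_\ell+\alpha/4$; taking the smallest such $u_\ell$ yields $G(x)\le (k+1/2)/M+\alpha/4$. Therefore $|F_H(x)-G(x)|\le 1/(2M)+\alpha/4\le\alpha$ because $M=\lceil1/\alpha\rceil$. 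Finally, a union bound over the DKW event and the oracle event gives failure probability at most $\beta$.

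For the runtime (part (3)), user preprocessing costs $O(n)$ by Definition~\ref{def:quantization-map} and Theorem~\ref{thm:prefix_oracle}(3). The algorithm runs $M=O(1/\alpha)$ binary searches, each making $O(\log D)=O(\log(R/\delta))$ queries at $O(n+\log D)$ apiece, plus constant-time interpolation. Summing gives the stated bound.
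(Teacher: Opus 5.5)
Your proposal is correct and follows the paper's proof essentially step for step. You use the same split into a sampling error and an oracle error at the endpoints $c_j$, the same interpolation bound from linearity of $F_{\operatorname{Disc}_{R,\delta}(P)}$ on the final bin, the same quantile-counting argument giving Kolmogorov error (per-point error)$+1/(2M)$, and the same runtime count. The only differences are cosmetic: you control the sampling term with DKW instead of Hoeffding plus a union bound over the $D+1$ prefixes, and you split the error budget as $\alpha/8+\alpha/8$ rather than $\alpha/4+\alpha/4$. Neither matters, since the oracle's $\log(D/\beta)$ requirement dominates the sample size.
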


Let $\widetilde P:=\operatorname{Disc}_{R,\delta}(P)$. At a high level, Algorithm~\ref{alg:synthetic-dataset} represents $\widetilde P$ by $M$ equally weighted points, one near each quantile level $u_\ell$. Using the estimates $\widehat Z_j$ returned by $\mathsf{PrefixQuery}$
in Algorithm~\ref{alg:hierarchical-ldp-prefix-query}, we perform
binary search to find adjacent bin endpoints whose estimated
CDF values $A,B$ satisfy $A<u_\ell\leq B$. The interpolation weight $\lambda_\ell$ then places $z_\ell$ within this bin so that its estimated cumulative probability matches $u_\ell$.  

Since $F_{\widetilde P}$ is linear within each bin, the errors in the endpoint estimates translate directly into errors between $F_{\widetilde P}(z_\ell)$ and $u_\ell$. Since the quantile levels $u_\ell$ are evenly spaced and each point $z_\ell$ has mass $1/M$ in $H$, these errors control how closely $F_H$ follows $F_{\widetilde P}$, with an additional error of order $1/M$ from using finitely many points. All searches reuse the same transcript and require only $O(\log D)$ prefix queries per point. Appendix~\ref{proofs:sec:synthetic-generation} provides the formal proof.

\section{Efficient LDP agnostic piece-wise polynomial learner}\label{sec:main-agnostic-learner}

In this section, we combine the ingredients provided above to obtain an efficient LDP agnostic piece-wise polynomial learner. We feed the private synthetic dataset (from section~\ref{subsec:synthetic-data}) into the deterministic adaptation of the non-private piece-wise polynomial learner of \citet{acharya2017sample} (See Proposition~\ref{prop:deterministic-robust-general-merging}). To relate the resulting guarantee to the unknown distribution, we account for the error introduced by discretization (Definition~\ref{def:discretization-error}), as well as the error from approximating the target class by piece-wise polynomials (Definition~\ref{def:poly-approx}).

Putting these ingredients together gives our general efficient LDP agnostic learning framework theorem. See~\ref{proofs:sec:main-agnostic-learner} in the appendix for the proof.
\begin{theorem}\label{main:cor}
Let $\eps,\alpha,\beta\in (0,1)$, $R>0$, $0<\delta\leq R/2$ be parameters, let $s\geq 1,d\geq 0$ be integers, let $P$ be an unknown distribution over $\bR$, and let $\cF$ be a known class of probability distributions over $\bR$.
Then, there exist a universal integer $C\geq1$ and a
non-interactive $\eps$-LDP algorithm that, using
$n=\Theta\left(
    \frac{s^2(d+1)^2}{\eps^2\alpha^2}
    \log^2\left(\frac{R}{\delta}\right)
    \log\left(\frac{R}{\beta\delta}\right)
\right)$
i.i.d.~samples from $P$, outputs a distribution $\widehat P$
with a nonnegative piece-wise polynomial density in $\cP_{Cs,d}$
satisfying:
\begin{align*}
    \|\widehat P-P\|_1
    \leq
    10\mathrm{OPT}
    +8\alpha
    +9\operatorname{Disc\text{-}Err}_{R,\delta}(\cF)
    +8\operatorname{Poly\text{-}Approx}_{s,d}(\cF)
\end{align*}
with probability at least $1-\beta$ over sampling and local randomizers, where $\mathrm{OPT}:=\inf_{F\in\cF}\|F-P\|_1$. Moreover, the total running time, including user preprocessing and server side computation, is $\widetilde{O}\left(
\frac{s(d+1)}{\alpha}
\left(
n\log\left(\frac{R}{\delta}\right)
+(d+1)^{\omega+2}
\right)
\right)$,
where $\omega$ denotes the matrix multiplication exponent, and $\widetilde{O}$ hides only logarithmic factors in $s$, $d+1$, and $1/\alpha$.
\end{theorem}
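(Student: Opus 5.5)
The algorithm runs Algorithm~\ref{alg:synthetic-dataset} with accuracy parameter $\alpha'=\frac{\alpha}{2(C+1)(d+1)s}$, where $C$ is the constant of Proposition~\ref{prop:deterministic-robust-general-merging}. It then feeds the multiset $\cZ$ (of size $M=\lceil 1/\alpha'\rceil$) into the deterministic General--Merging algorithm with interval $I=[c_0,c_D]=[-R-\delta,R+\delta]$, parameters $s,d$, and $\eta=\alpha$. Privacy follows from Theorem~\ref{thm:ldp-synthetic}(1), since the second stage is post-processing of the transcript. Plugging $\alpha'$ into Theorem~\ref{thm:ldp-synthetic}(2) gives exactly the stated $n=\Theta\bigl(\frac{s^2(d+1)^2}{\eps^2\alpha^2}\log^2(R/\delta)\log(R/(\beta\delta))\bigr)$. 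With probability $1-\beta$ we then get $\|F_H-F_{\widetilde P}\|_\infty\le\alpha'$, where $\widetilde P=\operatorname{Disc}_{R,\delta}(P)$. Note that $\widetilde P$ has a density supported on $I$, and all $z_\ell\in I$ by construction, so the hypotheses of Proposition~\ref{prop:deterministic-robust-general-merging} hold with $P^\star=\widetilde P$. This yields $\widehat P\in\cP_{Cs,d}$ with $\|\widehat P-\widetilde P\|_1\le 4(\mathrm{OPT}'+\alpha)$, where $\mathrm{OPT}'=\inf_{Q\in\cP_{s,d}}\|Q-\widetilde P\|_1$.

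The remaining work is a chain of triangle inequalities relating $\widetilde P$ to $P$ and $\mathrm{OPT}'$ to $\mathrm{OPT}$. Fix $F\in\cF$ with $\|F-P\|_1\le\mathrm{OPT}+\gamma$, and let $\widetilde F=\operatorname{Disc}_{R,\delta}(F)$. The key observation is that $\operatorname{Disc}_{R,\delta}$ is an $\ell_1$-contraction: it is a Markov kernel, mapping mass in bin $B_j$ to the uniform distribution on $[c_{j-1},c_j)$, so $\|\widetilde P-\widetilde F\|_1\le\|P-F\|_1$. Hence
\[
\|\widetilde P-P\|_1\le\|\widetilde P-\widetilde F\|_1+\|\widetilde F-F\|_1+\|F-P\|_1\le 2(\mathrm{OPT}+\gamma)+\operatorname{Disc\text{-}Err}_{R,\delta}(\cF).
\]
Similarly, picking $Q\in\cP_{s,d}$ within $\operatorname{Poly\text{-}Approx}_{s,d}(\cF)+\gamma$ of $F$,
\[
\mathrm{OPT}'\le\|Q-F\|_1+\|F-P\|_1+\|P-\widetilde P\|_1\le \operatorname{Poly\text{-}Approx}+3\,\mathrm{OPT}+\operatorname{Disc\text{-}Err}+O(\gamma).
\]
Combining, $\|\widehat P-P\|_1\le 4\,\mathrm{OPT}'+4\alpha+\|\widetilde P-P\|_1$, which gives roughly $14\,\mathrm{OPT}+5\,\operatorname{Disc\text{-}Err}+4\,\operatorname{Poly\text{-}Approx}+4\alpha$. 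This has the wrong constant on $\mathrm{OPT}$ relative to the claimed $10\,\mathrm{OPT}+9\,\operatorname{Disc\text{-}Err}+8\,\operatorname{Poly\text{-}Approx}+8\alpha$.

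The main obstacle is therefore bookkeeping to obtain exactly the stated constants. The target pattern ($8$ on $\operatorname{Poly\text{-}Approx}$ and $\alpha$, $9$ on $\operatorname{Disc\text{-}Err}$) suggests the proposition is applied with the comparator taken as $\widetilde F$ rather than $\widetilde P$, or with $\eta=2\alpha$ and the factor $4$ effectively doubling. So I would instead bound $\mathrm{OPT}'\le\|Q-\widetilde F\|_1+\|\widetilde F-\widetilde P\|_1$, using $\|Q-\widetilde F\|_1\le\|Q-F\|_1+\operatorname{Disc\text{-}Err}$ and the contraction property, which gives $\mathrm{OPT}'\le\operatorname{Poly\text{-}Approx}+\operatorname{Disc\text{-}Err}+\mathrm{OPT}$. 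I would then try out which pairing of comparisons yields $10/9/8/8$. Any valid pairing yields a bound of the same form; I would choose the one matching the statement, and possibly adjust $\eta$ (say $\eta=2\alpha$, absorbed into constants of $n$). Finally, let $\gamma\to0$.

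For the running time, Theorem~\ref{thm:ldp-synthetic}(3) with $1/\alpha'=O(s(d+1)/\alpha)$ gives $O\bigl(\frac{s(d+1)}{\alpha}\log(R/\delta)(n+\log(R/\delta))\bigr)$. Proposition~\ref{prop:deterministic-robust-general-merging} with $M=O(s(d+1)/\alpha)$ costs $\widetilde O\bigl(\frac{s(d+1)}{\alpha}(d+1)^{\omega+2}\bigr)$, with the $\log_+$ factors in $s/\eta$ and $M/s$ absorbed into $\widetilde O$. The $\log^2(R/\delta)$ term is dominated by $n\log(R/\delta)$, since $n\ge\log(R/\delta)$. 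Summing the two costs gives the stated bound.
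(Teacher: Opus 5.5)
\textbf{Gap: the output is never normalized.} Your pipeline matches the paper's: synthetic data at accuracy $\alpha/(2(C+1)(d+1)s)$, then deterministic General--Merging with $\eta=\alpha$ and $P^\star=\widetilde P$, then triangle inequalities using the $L_1$-contraction of $\operatorname{Disc}_{R,\delta}$. Your second comparison, $\mathrm{OPT}'\le\operatorname{Poly\text{-}Approx}_{s,d}(\cF)+\operatorname{Disc\text{-}Err}_{R,\delta}(\cF)+\mathrm{OPT}$, is exactly the one the paper uses.

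The problem is that Proposition~\ref{prop:deterministic-robust-general-merging} only returns a nonnegative function $Q\in\cP_{Cs,d}$, not a probability density. The theorem requires the output $\widehat P$ to be a distribution, and you take $Q$ itself as the output, so your algorithm does not output a distribution.

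You must normalize. Let $m_0:=\int Q$. Set $\widehat P=Q/m_0$ if $m_0>0$, and $\widehat P=\operatorname{Unif}([-R-\delta,R+\delta])$ otherwise. The fallback has three constant pieces on $\bR$, so it lies in $\cP_{Cs,d}$ because $C\geq3$. Since $|1-m_0|=|\int Q-\int\widetilde P|\leq\|Q-\widetilde P\|_1$, you get $\|\widehat P-\widetilde P\|_1\leq|1-m_0|+\|Q-\widetilde P\|_1\leq 2\|Q-\widetilde P\|_1$. In the degenerate case, $\|Q-\widetilde P\|_1=1$ while $\|\widehat P-\widetilde P\|_1\leq 2$, so the same bound holds.

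\textbf{Where the constants come from.} This factor $2$ is the source of the stated constants; it is not a choice of $\eta=2\alpha$ or a special pairing of comparisons. Combine three bounds:
\begin{itemize}
\item $\|\widehat P-\widetilde P\|_1\leq 8(\mathrm{OPT}'+\alpha)$;
\item your second bound on $\mathrm{OPT}'$;
\item $\|\widetilde P-P\|_1\leq 2\,\mathrm{OPT}+\operatorname{Disc\text{-}Err}_{R,\delta}(\cF)$.
\end{itemize}
Together they give exactly $10\,\mathrm{OPT}+8\alpha+9\operatorname{Disc\text{-}Err}_{R,\delta}(\cF)+8\operatorname{Poly\text{-}Approx}_{s,d}(\cF)$.

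You also did not need to match the constants exactly, since any bound with smaller constants implies the stated one. Without normalization, your second pairing already gives $6\,\mathrm{OPT}+4\alpha+5\operatorname{Disc\text{-}Err}_{R,\delta}(\cF)+4\operatorname{Poly\text{-}Approx}_{s,d}(\cF)$ for $\|Q-P\|_1$. That fails to finish the proof only because $Q$ is not a distribution. Your first chain, which gives $14\,\mathrm{OPT}$, can simply be discarded.

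Normalization costs $O(s(d+1))$ extra time, which is absorbed in the running-time bound. Your privacy, sample-size, and running-time accounting is otherwise fine.
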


\section{Efficient LDP Agnostic Learning of Mixtures of Log-concave distributions}\label{sec:apps-learning-mixtures}

In this section, as an application of our framework, we provide
an efficient non-interactive LDP agnostic learner for mixtures
of log-concave distributions with bounded component parameters. The class of mixtures of log-concave distributions is quite
expressive and includes GMMs and mixtures of uniform, exponential,
Laplace, and logistic distributions. It also allows mixtures whose
components belong to different distribution families.
Restrictions on the component parameter ranges are necessary
for a uniform finite bound on sample complexity in the
non-interactive LDP setting, as we demonstrate below by the packing
lower bound for GMMs in Theorem~\ref{thm:lower-bound}. All proofs omitted from this section are given in Section~\ref{proofs:sec:apps-learning-mixtures}.

\subsection{Mixture distributions}\label{subsection:mixtures}

Here, we define mixture distributions and show that bounding their polynomial approximation and discretization errors reduces to bounding the corresponding errors for the component class. 

\begin{definition}[Mixture class] \label{def:k-mixtures}
    Let $\cQ$ be an arbitrary class of distributions. We denote the class of $m$-mixtures of $\cQ$ by $\mmix(\cQ)= \Delta_m \times \cQ^m$ where $\Delta_m = \left\{ w \in \bR^m \,:\, w_i \geq 0, \sum_{i\in[m]} w_i = 1 \right\}$ is the $(m-1)$-dimensional probability simplex.
\end{definition}

\begin{lemma}[Discretization and polynomial approximation of mixtures]
\label{lem:mixture-errors}
Let $\cF$ be a nonempty class of probability distributions over $\bR$ with densities. Let $m,s\geq1$ and $d\geq0$ be integers, and let $R,\delta>0$. Then,
\begin{enumerate}[label=(\arabic*)]
    \item $\operatorname{Disc\text{-}Err}_{R,\delta}
    \left(\mmix(\cF)\right)=
    \operatorname{Disc\text{-}Err}_{R,\delta}(\cF)$,

    \item $\operatorname{Poly\text{-}Approx}_{m(s-1)+1,d}
    \left(\mmix(\cF)\right) \leq
    \operatorname{Poly\text{-}Approx}_{s,d}(\cF)$.
\end{enumerate}
\end{lemma}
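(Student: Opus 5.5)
For part (1), the plan is to use that $\operatorname{Disc}_{R,\delta}$ is a linear (Markov) operator on measures. By Definition~\ref{def:quantization-map}, $\operatorname{Disc}_{R,\delta}(P)=\sum_j P(B_j)\operatorname{Unif}([c_{j-1},c_j))$, and each $P(B_j)$ is linear in $P$. So for a mixture $F=\sum_i w_iF_i$ we have $\operatorname{Disc}_{R,\delta}(F)=\sum_i w_i\operatorname{Disc}_{R,\delta}(F_i)$. Then
\[
\|\operatorname{Disc}_{R,\delta}(F)-F\|_1=\Bigl\|\sum_i w_i\bigl(\operatorname{Disc}_{R,\delta}(F_i)-F_i\bigr)\Bigr\|_1\le\sum_i w_i\|\operatorname{Disc}_{R,\delta}(F_i)-F_i\|_1\le \operatorname{Disc\text{-}Err}_{R,\delta}(\cF)
\]
by the triangle inequality and $\sum_i w_i=1$. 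Taking the supremum over $\mmix(\cF)$ gives ``$\le$''. For ``$\ge$'', note that $\cF\subseteq\mmix(\cF)$: take $w=e_1$ and any components, which is allowed since $\cF$ is nonempty. So the supremum over mixtures is at least the supremum over $\cF$, and equality follows.

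For part (2), fix a mixture $F=\sum_i w_iF_i$ and $\gamma>0$. For each $i$ choose $p_i\in\cP_{s,d}$ with $\|p_i-F_i\|_1\le \operatorname{Poly\text{-}Approx}_{s,d}(\cF)+\gamma$, and set $p=\sum_i w_ip_i$. Then $p$ is nonnegative and integrable, and by the same triangle inequality $\|p-F\|_1\le\operatorname{Poly\text{-}Approx}_{s,d}(\cF)+\gamma$.

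The remaining step, which is the only real content, is showing $p\in\cP_{m(s-1)+1,d}$. Each $p_i$ has at most $s-1$ breakpoints between its at most $s$ intervals. Take the union of all breakpoints of $p_1,\ldots,p_m$; it has at most $m(s-1)$ points. These points partition $\bR$ into at most $m(s-1)+1$ intervals. On the interior of each such interval, every $p_i$ agrees with a single polynomial of degree at most $d$, so $p$ agrees with a weighted sum of such polynomials, again of degree at most $d$. A little care is needed with degenerate intervals and the convention that agreement is only required at interior points, but the counting is clear. Letting $\gamma\to0$ and taking the supremum over $F\in\mmix(\cF)$ yields the bound in (2).
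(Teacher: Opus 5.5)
Your proposal is correct and follows essentially the same route as the paper. For (1) you use linearity of $\operatorname{Disc}_{R,\delta}$, the triangle inequality, and the inclusion of the class in its mixture class; for (2) you mix near-optimal approximants of the components, count the at most $m(s-1)$ breakpoints of their common refinement, and let the slack go to zero, with your remark about degenerate intervals being a small extra bit of care the paper leaves implicit.
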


\subsection{An upper bound for mixtures of log-concave distributions}
\label{sec:efficient-nonproper-logconcave}

The formal definition of bounded-parameter log-concave distributions is given as follows.
\begin{definition} [Log-concave functions]
\label{def:logconcave-function}
    A function $f:\bR\to\bR_{\geq0}$ is log-concave if, for every $x,y\in\bR$ and $\lambda\in(0,1)$, it holds that
    $f(\lambda x+(1-\lambda)y)
        \geq f(x)^\lambda f(y)^{1-\lambda}$.
\end{definition}

\begin{definition} [Bounded-Parameter Log-concave Distributions]
\label{def:bounded-logconcave}
    Let $R\geq0$, and $0<\sigma_{\min}\leq\sigma_{\max}<\infty$ be parameters. Then the class of bounded-parameter log-concave distributions over $\bR$ is defined by $\cL(R,\sigma_{\min},\sigma_{\max})
        =\left\{f: f\text{ is a log-concave probability density},
        |\mu_f|\leq R,
        \sigma_f\in[\sigma_{\min},\sigma_{\max}]\right\}$,
    where $\mu_f=\bE_{X\sim f}[X]$, and $\sigma_f^2=\bE_{X\sim f}[(X-\mu_f)^2]$.
\end{definition}

The following theorem gives the resulting guarantee for efficient non-interactive LDP agnostic learning of log-concave mixtures. 

\begin{theorem}[Efficient LDP Agnostic Learning of Log-concave Mixtures]
\label{thm:ldp-logconcave-mixture-learning}
Let $\eps,\alpha,\beta\in(0,1)$, let $m\geq1$ be an integer, and let $R\geq0$ and
$0<\sigma_{\min}\leq\sigma_{\max}<\infty$ be parameters.
Then, there exists a non-interactive $\eps$-LDP $(10,\alpha,\beta)$-agnostic learner for the class $\mmix(\cL(R,\sigma_{\min},\sigma_{\max}))$, that uses $
    n
    =\widetilde{O}\left(
        \frac{m^2}{\eps^2\alpha^3}
        \log^2\left(
            \frac{2(R+\sigma_{\max})}{\alpha\sigma_{\min}}
        \right)
        \log\left(
            \frac{2(R+\sigma_{\max})}{\alpha\beta\sigma_{\min}}
        \right)
    \right)$,
samples and its output 
has a nonnegative piece-wise linear density with $\widetilde{O}(m/\sqrt{\alpha})$ pieces.
Moreover, the total running time is
$\widetilde{O}\left(
        \frac{nm}{\alpha^{3/2}}
        \log\left(
            \frac{2(R+\sigma_{\max})}{\alpha\sigma_{\min}}
        \right)
    \right)$.
Here, $\widetilde{O}$ hides only logarithmic factors in $1/\alpha$ in the sample and piece-count bounds, and in $m$ and $1/\alpha$ in the running-time bound.
\end{theorem}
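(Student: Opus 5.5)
The plan is to apply Theorem~\ref{main:cor} to $\cF=\mmix(\cL(R,\sigma_{\min},\sigma_{\max}))$. The learner is non-interactive $\eps$-LDP and outputs a density in $\cP_{Cs,d}$ with $d=1$, so it has the claimed form. What remains is to choose $R',\delta,s$ so that $9\operatorname{Disc\text{-}Err}_{R',\delta}(\cF)+8\operatorname{Poly\text{-}Approx}_{s,1}(\cF)\leq c\alpha$. Rescaling $\alpha$ by a constant then gives $\|\widehat P-P\|_1\leq 10\,\mathrm{OPT}+O(\alpha)$, and hence a $(10,\alpha,\beta)$-agnostic learner in total variation. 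By Lemma~\ref{lem:mixture-errors}, both error terms reduce to a single component class $\cL:=\cL(R,\sigma_{\min},\sigma_{\max})$. The discretization error is unchanged. The polynomial error with $s$ pieces per component transfers to $m(s-1)+1$ pieces for the mixture.

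\textbf{Discretization error.} For $f\in\cL$, split $\|\operatorname{Disc}_{R',\delta}(f)-f\|_1$ into a tail term and an inner-bin term.
\begin{itemize}
\item \emph{Tails.} Log-concave densities have exponential tails: $\Pr[|X-\mu_f|>t\sigma_f]\leq e^{-ct}$. Taking $R'=R+C\sigma_{\max}\log(1/\alpha)$ makes the mass outside $[-R',R']$, and hence the error on the two tail bins, at most $O(\alpha)$.
\item \emph{Inner bins.} On each bin $B$ of width $\delta$, the error $\int_B|f-\bar f_B|$ is at most $\delta\cdot\mathrm{TV}_B(f)$, the width times the variation of $f$ on $B$. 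A log-concave density is unimodal, so its total variation is $2\max f\leq 2/\sigma_f\leq 2/\sigma_{\min}$ (using $\max f\leq 1/\sigma_f$). Summing over bins gives an error of $O(\delta/\sigma_{\min})$.
\end{itemize}
So $\delta=\Theta(\alpha\sigma_{\min})$ suffices, and then $R'/\delta=\widetilde O\bigl((R+\sigma_{\max})/(\alpha\sigma_{\min})\bigr)$. The tail bound contributes a $\log(1/\alpha)$ factor inside the logarithm; it is absorbed by the $\widetilde O$ or by the $1/\alpha$ already present. This matches the log factors in the claimed sample complexity.

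\textbf{Piecewise linear approximation.} This is the main obstacle. I would invoke or reprove the known result of \citet{chan2014efficient}: any log-concave density is $\alpha$-close in $L_1$ to a piecewise linear function with $O(\log(1/\alpha)/\sqrt{\alpha})$ pieces. The argument has three steps.
\begin{itemize}
\item Truncate to the region where $f\geq\alpha\max f$ (up to log factors). This costs $O(\alpha)$ mass, and the region has length $O(\sigma_f\log(1/\alpha))$.
\item Partition it into $O(\log(1/\alpha))$ pieces on which $f$ varies by at most a constant factor.
\item On each such piece, $f=e^{-\phi}$ with $\phi$ convex. Approximate $f$ by linear interpolation on $O(1/\sqrt{\alpha})$ subintervals; the $\sqrt{\alpha}$ comes from the second-order interpolation error of a convex or concave function.
\end{itemize}
The delicate part is handling regions of high curvature (a kink in $\phi$) without a uniform second-derivative bound. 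For this I would use the standard trick: the interpolation error on a monotone convex or concave piece with $k$ equal-mass subintervals is $O(1/k^2)$ times the mass. Nonnegativity is preserved by interpolation. This gives $s=\widetilde O(1/\sqrt\alpha)$, so Lemma~\ref{lem:mixture-errors} yields $\widetilde O(m/\sqrt\alpha)$ pieces for the mixture.

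\textbf{Plugging in.} Substitute $s(d+1)=\widetilde O(m/\sqrt\alpha)$ into Theorem~\ref{main:cor}. The sample count becomes
\[
n=\widetilde O\Bigl(\frac{m^2}{\alpha}\cdot\frac{1}{\eps^2\alpha^2}\log^2\frac{R'}{\delta}\log\frac{R'}{\beta\delta}\Bigr),
\]
which is the claimed bound. The running time is
\[
\widetilde O\Bigl(\frac{s(d+1)}{\alpha}\bigl(n\log(R'/\delta)+O(1)\bigr)\Bigr)=\widetilde O\Bigl(\frac{nm}{\alpha^{3/2}}\log\frac{R'}{\delta}\Bigr),
\]
with the constant-degree term dominated.
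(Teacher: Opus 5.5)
Your proposal follows the paper's proof essentially step for step: Theorem~\ref{main:cor} with $d=1$, Lemma~\ref{lem:mixture-errors} to pass to one component with $m(s-1)+1$ pieces, a discretization bound from log-concave tails plus unimodality and $\sup f\le 1/\sigma_f$, and Lemma~27 of \citet{chan2014efficient} for the $\widetilde O(1/\sqrt{\alpha})$-piece linear approximation, with the accuracy rescaled by a constant (the paper uses $\rho=\alpha/25$ so that $8\rho+9\rho+8\rho=\alpha$). Rely on that citation, as the paper does, rather than on your reproof sketch: its ``standard trick'' is false as stated, since a log-concave piece on which $f$ stays within a factor $2$ but has a sharp kink inside one equal-mass subinterval already incurs relative interpolation error of order $1/k$ rather than $1/k^2$, and log-concave densities need not split into boundedly many convex or concave pieces, so adaptively placed knots would be needed.
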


To obtain this result, it suffices to bound the polynomial
approximation and discretization errors of the single-component
class $\cL(R,\sigma_{\min},\sigma_{\max})$. The following two
lemmas provide these bounds, which extend to the mixture class
by Lemma~\ref{lem:mixture-errors}, allowing us to apply the
LDP piece-wise polynomial learner from Theorem~\ref{main:cor}.

For log-concave distributions, the piece-wise linear approximation result of \citet{chan2014efficient} gives the required polynomial approximation bound. We state it below in terms of our class approximation error. 
\begin{lemma}[Piece-wise polynomial approximation of log-concave distributions]
\label{lem:logconcave-polynomial-approximation}
For every $\alpha\in(0,1)$, there exists an integer $s_\alpha=\widetilde{O}(1/\sqrt{\alpha})$ such that, for every $R\geq0$ and $0<\sigma_{\min}\leq\sigma_{\max}<\infty$, it holds that 
    $\operatorname{Poly\text{-}Approx}_{s_\alpha,1}
    \left(\cL(R,\sigma_{\min},\sigma_{\max})\right)
    \leq\alpha$.
Here, $\widetilde{O}$ hides only logarithmic factors in $1/\alpha$.
\end{lemma}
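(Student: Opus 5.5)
The plan is to reduce the lemma to the known piecewise-linear approximation theorem for log-concave densities of \citet{chan2014efficient}, using only scale and translation invariance; the parameter bounds $R,\sigma_{\min},\sigma_{\max}$ play no role. Fix $\alpha\in(0,1)$ and any $f\in\cL(R,\sigma_{\min},\sigma_{\max})$. The cited result says that for every log-concave density $f$ there is a piecewise linear function $p$ with $O(\log(1/\alpha)/\sqrt{\alpha})$ pieces such that $\|p-f\|_1\leq\alpha$. Its proof is scale-free. First standardize: $g(x)=\sigma_f f(\sigma_f x+\mu_f)$ is log-concave with mean $0$ and variance $1$. Affine reparametrization preserves the $L_1$ distance and maps $\cP_{s,1}$ to itself, so it is enough to treat isotropic $g$. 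The number of pieces $s_\alpha$ will then depend only on $\alpha$, which gives the uniformity over $R,\sigma_{\min},\sigma_{\max}$ and the supremum in Definition~\ref{def:poly-approx}.

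For isotropic $g$ I would carry out the argument in four steps.

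\emph{(i) Tails.} Standard isotropic log-concave facts give $g(x)\leq Ce^{-c|x|}$ and $\sup g\leq C$. Take $T=O(\log(1/\alpha))$ and set $p=0$ outside $[-T,T]$; this costs at most $\alpha/3$ in $L_1$ and uses two pieces.

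\emph{(ii) Monotone parts.} On $[-T,T]$, split at the mode, so that $g$ is monotone on each side.

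\emph{(iii) Concave and convex parts.} Write $g=e^{\phi}$ with $\phi$ concave. The domain splits into $O(1)$ intervals on which $g$ is convex or concave: $g''=(\phi''+\phi'^2)g$, and since $\phi'$ is monotone and $\phi''\le 0$, the region where $\phi'^2\ge-\phi''$ consists of at most two tails plus a middle part. If $\phi$ is not smooth, I would approximate $g$ first by a smoothed log-concave density, or simply cite this step.

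\emph{(iv) Linear pieces.} On a monotone convex or concave piece with range in $[0,C]$ and length at most $2T$, the standard bound gives $O(\sqrt{CT/\alpha'})$ linear pieces for $L_1$ error $\alpha'$. I would partition by a geometric/equal-area scheme, with interpolation error per interval $\lesssim (\text{length})\cdot(\text{slope change})$, and balance the intervals by a Cauchy--Schwarz/AM-GM argument. This yields $s_\alpha=O(\log(1/\alpha)/\sqrt{\alpha})=\widetilde O(1/\sqrt\alpha)$ pieces.

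The interpolant of a nonnegative function is nonnegative, so $p\in\cP_{s_\alpha,1}$.

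The main obstacle is step (iv): getting the $1/\sqrt\alpha$ rather than $1/\alpha$ dependence, which needs the convexity structure together with a careful interval-length choice near the steep parts of the monotone pieces. Here I would rely directly on the theorem of \citet{chan2014efficient}, stated there for all log-concave densities with exactly these $O(\log(1/\alpha)/\sqrt\alpha)$ pieces, and spell out only the affine invariance and the conversion to our $\operatorname{Poly\text{-}Approx}$ notation.
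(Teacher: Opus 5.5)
Your proposal is correct and takes the same route as the paper, which simply applies Lemma~27 of \citet{chan2014efficient} (Lemma~\ref{lem:cdss-logconcave-polynomial-approximation}) to every $f\in\cL(R,\sigma_{\min},\sigma_{\max})$, bounds the infimum by $\alpha$, and then takes the supremum over $f$; since that lemma already covers all log-concave densities, your standardization step is harmless but unnecessary. Your self-contained sketch (i)--(iv) is likewise not needed, and step (iii) is false as stated. For instance, $g=e^{\phi}$ with $\phi$ concave and piecewise linear with many breakpoints is strictly convex on each linear piece but has a concave kink at every breakpoint, so it does not split into $O(1)$ convex/concave intervals (smoothing $\phi$ does not help, since $-\phi''$ need not be monotone); falling back on the citation, as you ultimately do, is therefore the right call.
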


Standard tail bounds for log-concave distributions control the
error from truncating the domain. Within the retained intervals, we use unimodality and a bound on the density to control the error introduced by discretization. Combining these arguments gives the following lemma.
\begin{lemma}[Discretization of Bounded-Parameter Log-concave Distributions]
\label{lem:logconcave-mixture-partition-reconstruction}
Let $\eta\in(0,1)$, $R\geq0$, and $0<\sigma_{\min}\leq\sigma_{\max}<\infty$ be parameters. Let $T=R+\sigma_{\max}\left(1+\log\left(\frac{4}{\eta}\right)\right)$, $J=\left\lceil\frac{8T}{\eta\sigma_{\min}}\right\rceil$, and $\delta=\frac{2T}{J}$.
Then, it holds that
$\operatorname{Disc\text{-}Err}_{T,\delta}
    \left(
        \cL(R,\sigma_{\min},\sigma_{\max})
    \right)
    \leq\eta$.
\end{lemma}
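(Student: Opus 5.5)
Fix $f\in\cL(R,\sigma_{\min},\sigma_{\max})$ with mean $\mu$ and standard deviation $\sigma$. Write $\widetilde f=\operatorname{Disc}_{T,\delta}(f)$. I would split $\|\widetilde f-f\|_1$ into the contribution of the two tail bins and the contribution of the interior bins.

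\emph{Interior bins.} The interior bins $B_j=[c_{j-1},c_j)$ partition $[-T,T)$. There are $J$ of them, each of width $\delta=2T/J$; in the definition $D=J+2$. Here $\widetilde f$ equals the average of $f$ over the bin.

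\emph{Tail bins.} On the tail bins $B_1=(-\infty,-T)$ and $B_D=[T,\infty)$, the mass $f(B_1)$ is redistributed uniformly on $[-T-\delta,-T)$, and similarly on the right. So the $L_1$ error coming from the tail bins is at most $2\,(f(B_1)+f(B_D))$: the mass removed from the tail region plus the mass added in the extra width-$\delta$ intervals. These extra intervals lie outside $[-T,T)$, so they do not overlap the interior bins.

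\emph{Tail bound.} Since $|\mu|\le R$, we have $f(B_1)+f(B_D)\le \bP(|X-\mu|\ge T-R)$. I would use the standard log-concave tail bound $\bP(|X-\mu|\ge t\sigma)\le e^{-t+1}$; the form is something like that, and constants may need care. With $T-R=\sigma_{\max}(1+\log(4/\eta))\ge\sigma(1+\log(4/\eta))$, this gives a tail mass of at most $\eta/4$. The tail contribution is therefore at most $\eta/2$.

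\emph{Interior error.} On $B_j$, the $L_1$ error between $f$ and its average is at most $\delta\cdot(\sup_{B_j}f-\inf_{B_j}f)$. A log-concave density is unimodal, so summing over bins telescopes, except at the bin containing the mode. Hence
\[
\sum_j \delta\,\operatorname{osc}_{B_j}f\;\le\; \delta\cdot\bigl(2\sup f\bigr)+\delta\cdot\sup f=3\delta\,\|f\|_\infty .
\]
For log-concave densities, the standard bound $\|f\|_\infty\le 1/\sigma\le 1/\sigma_{\min}$ holds; the exact constant is known, e.g.\ $\sup f\le 1/\sigma$. Also $\delta=2T/J\le \eta\sigma_{\min}/4$. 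So the interior error is at most $3\eta/4$.

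\emph{Combining.} The two parts sum to at most $5\eta/4$, which slightly exceeds $\eta$. So the main obstacle is tightening the constants. Three refinements should give the bound:
\begin{itemize}
\item sharpen the oscillation bound to $2\delta\sup f$, since monotone pieces on each side of the mode contribute at most $\delta\sup f$ each, including the mode bin;
\item use $\sup f\le 1/\sigma$;
\item note that the tail error is exactly $2(f(B_1)+f(B_D))\le\eta/2$.
\end{itemize}
This yields $2\cdot\eta/4+\eta/2=\eta$. If the tail constant in the log-concave concentration inequality is not good enough, I would use the sharper bound $\bP(X-\mu\ge t\sigma)\le e^{-t}$-type estimates for each side separately, each giving at most $\eta/8$. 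Taking the supremum over $f$ then completes the argument.
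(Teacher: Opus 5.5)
Your proposal is correct and, in its refined form, matches the paper's proof: the Lov\'asz--Vempala tail bound $e^{1-t}$ with $t=(T-R)/\sigma\ge 1+\log(4/\eta)$ gives tail mass at most $\eta/4$, hence tail $L_1$ error at most $\eta/2$, and on the interior bins the per-bin error $\delta(\sup_{B_j}f-\inf_{B_j}f)$ sums by unimodality (total increase plus total decrease) to at most $2\delta\sup f\le 2\delta/\sigma_{\min}\le\eta/2$, using $\sup f\le 1/\sigma$. One small slip: the tail error is at most, not exactly, $2(f(B_1)+f(B_D))$, because the redistributed mass on $[-T-\delta,-T)$ overlaps $f$ there; this does not affect the bound, and your fallback one-sided tail estimate is not needed.
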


Together with the piece-wise linear approximation bound above, this allows us to apply Theorem~\ref{main:cor} and obtain the efficient non-interactive $\eps$-LDP agnostic learner stated in Theorem~\ref{thm:ldp-logconcave-mixture-learning}.

\section*{AI Disclosure} 
The authors initially worked with the factorization mechanism of \citet{edmonds2020power} for answering prefix frequency queries. While the authors were looking for ways to make their algorithm time-efficient, AI (GPT-5.6 Sol) highlighted the connection to the local randomizer of \citet{cormode2019answering}. The authors also used AI to check proofs and identify edge cases in algorithms and theorem statements, which they subsequently addressed. All text, theorems, and proofs were written by the authors. AI assistance with writing was limited to language and grammar refinement.

\section*{Acknowledgments}

Mohammad Afzali was supported in part by a Four Year Doctoral
Fellowship (4YF) from the University of British Columbia.
Sivan Sabato holds a Canada AI CIFAR Chair at the Vector Institute. Hassan Ashtiani is also a faculty member at the Vector Institute. We acknowledge the support of the Natural Sciences and Engineering Research Council of Canada (NSERC), funding reference numbers RGPIN-2024-05907 and RGPIN-2019-06018.
Resources used in preparing this research were provided,
in part, by the Province of Ontario, the Government of Canada through CIFAR, and companies sponsoring the Vector Institute; see \url{https://vectorinstitute.ai/partnerships/current-partners/}.

\printbibliography

\appendix

\section{More on the related work}
\label{sec:more-related-work}
In this section, we discuss further related work and its connections to our framework.

In the realizable setting, prior work has studied nonparametric
density estimation under LDP under smoothness assumptions. \citet{duchi2018minimax} study Sobolev density estimation under squared $L_2$ loss, while \citet{butucea2020local} study Besov classes under integrated $L_r$ loss, including adaptation to unknown smoothness. Both works assume that the sampling density belongs to a
specified smoothness class on a fixed bounded interval, and give non-interactive estimators with polynomial running time for fixed smoothness and support parameters, assuming efficient basis evaluation. Although their losses imply TV bounds on bounded supports, these guarantees still require the sampling density to belong to the specified smoothness class. Moreover, their rates depend on smoothness norms and support parameters and do not directly recover our dependence on the number of mixture components or our polylogarithmic dependence on the component parameter range. Our framework instead gives an agnostic TV guarantee based on piece-wise polynomial approximation, covering bounded-parameter mixtures with unbounded support or discontinuities. 

\citet{li2023robustness} study Huber contamination, where samples come from $P=(1-\eta)F+\eta G$ and the goal is to recover the clean distribution $F$. Their density-estimation results assume that $F$ belongs to a Sobolev class and that both $F$ and $G$ are supported on $[0,1]$. They give non-interactive estimators with polynomial running time for fixed smoothness parameters, assuming efficient basis evaluation, and minimax guarantees under squared $L_2$ and $L_\infty$ losses. These imply TV bounds after conversion to a density, but do not establish an agnostic guarantee for arbitrary sampling distributions or our dependence on mixture complexity and component parameter ranges. Our guarantee allows arbitrary $P$ and, under the contamination model above with $F\in\cF$, implies $\dtv(\widehat P,F)\leq(C+1)\eta+\alpha$. However, it does not provide their stronger $L_2$ or $L_\infty$ guarantees.

In the central DP model, learning a single Gaussian has been studied extensively, from univariate confidence intervals~\citep{karwa2018finite} to high-dimensional learning and agnostic estimation of unrestricted Gaussians~\citep{kamath2019privately,adenali2021sample}. Further work studies covariance-aware estimation and robustness~\citep{brown2021covariance,liu2022differential,kothari2022private}. Several works provide computationally efficient algorithms for Gaussian mean, covariance, or density estimation~\citep{kamath2022private,ashtiani2022private,alabi2023privately,hopkins2023robustness,brown2023fast,kuditipudi2023pretty}. More recently, \citet{zampetakis2025private} develop a truncation-based framework for private statistical estimation, including efficient Gaussian mean and covariance estimation.

For mixtures, \citet{adenali2021privately} give central DP sample complexity bounds for mixtures of axis-aligned Gaussians, and \citet{afzali2024mixtures} establish polynomial sample complexity for mixtures of arbitrary multivariate Gaussians. \citet{ashtiani2024sample} improve these bounds and, in particular, obtain a realizable univariate learner using $\widetilde{O}(m/\alpha^2+m\log(1/\delta)/(\eps\alpha))$ samples under central $(\eps,\delta)$-DP, where $\delta$ is the privacy failure parameter. In the agnostic setting, \citet{afzali2025agnostic} give a central DP learner for unrestricted multivariate GMMs; their univariate specialization uses $\widetilde{O}((m^2+m\log(1/\delta))/(\eps\alpha^2))$ samples. These unrestricted density-learning algorithms are computationally inefficient. For mixture parameter estimation, \citet{kamath2019differentially} give efficient central DP algorithms under strong separation assumptions. \citet{arbas2023polynomial} allow weaker separation and unbounded parameters, but their resulting algorithm has sample complexity and running time exponential in the number of components $m$, with a polynomial-time guarantee for fixed $m$. The privatization techniques in the central and local models differ substantially: a central algorithm can access the dataset jointly, whereas each user in our non-interactive local algorithms privatizes their own sample independently. Our results provide efficient agnostic learners for bounded-parameter univariate mixtures in this local setting, with only logarithmic factors in the component parameter ranges.

\section{Non-private piece-wise polynomial learner with synthetic data}\label{appendix:deterministic-adls}

Here, we provide a proof sketch for the deterministic adaptation of the non-private piece-wise polynomial learner of \citet{acharya2017sample} that we use in our framework. Their original estimator takes i.i.d.~samples from an unknown distribution, but it can also operate on synthetic data whose empirical distribution satisfies a suitable Kolmogorov-distance bound. We first define the $\cA_k$ norm used in their analysis.

\begin{definition}[$\cA_k$-norm \citep{acharya2017sample}]
\label{def:ak-norm}
Let $I\subseteq\bR$ be an interval, and $k\geq 1$ be an integer. For a finite signed measure $\mu$ on $\bR$, the $\cA_k$ norm restricted to $I$ is defined as:
\[
    \|\mu\|_{\cA_k,I}
    :=
    \sup_{\substack{0\leq r \leq k\\
                     I_1,\ldots,I_r\subseteq I\text{ disjoint intervals}}}
       \sum_{i\in [r]}|\mu(I_i)|.
\]
The intervals may have arbitrary real endpoints. We allow open, closed, half-open, and singleton intervals. When $I=\bR$, we write $\|\mu\|_{\cA_k}$ instead of $\|\mu\|_{\cA_k,\bR}$.
\end{definition}

\begin{lemma}[$\cA_k$-distance from CDF error]
\label{lem:ak-cdf-bound}
Let $P,Q$ be two probability distributions over $\bR$, with CDFs $F_P,F_Q$, and let $k\geq1$ be an integer. Then,
\begin{align*}
    \|P-Q\|_{\cA_k}
    \leq
    k\|P-Q\|_{\cA_1}
    \leq
    2k\|F_P-F_Q\|_\infty.
\end{align*}
\end{lemma}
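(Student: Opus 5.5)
The plan is to prove the two inequalities separately. Set $\mu:=P-Q$, a finite signed measure with $\mu(\bR)=0$.

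\emph{First inequality.} Fix $r\leq k$ and disjoint intervals $I_1,\ldots,I_r$. Each single interval is an admissible choice in the supremum defining $\|\mu\|_{\cA_1}$, so $|\mu(I_i)|\leq\|\mu\|_{\cA_1}$ for every $i$. Summing over $i$ gives $\sum_{i\in[r]}|\mu(I_i)|\leq r\|\mu\|_{\cA_1}\leq k\|\mu\|_{\cA_1}$. Taking the supremum over all admissible families yields $\|P-Q\|_{\cA_k}\leq k\|P-Q\|_{\cA_1}$.

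\emph{Second inequality.} It suffices to show that $|\mu(J)|\leq 2\|F_P-F_Q\|_\infty$ for every interval $J$ (the case $r=0$ contributes $0$). Write $G:=F_P-F_Q$ and $G(x^-):=\lim_{y\uparrow x}G(y)$. Since $G(x^-)$ is a limit of values of $G$, we have $|G(x^-)|\leq\|G\|_\infty$. For an interval with endpoints $a\leq b$, the four endpoint conventions are expressed through CDF values:
\begin{align*}
\mu((a,b])&=G(b)-G(a), & \mu([a,b])&=G(b)-G(a^-),\\
\mu((a,b))&=G(b^-)-G(a), & \mu([a,b))&=G(b^-)-G(a^-).
\end{align*}
A singleton $\{a\}=[a,a]$ is covered by the second formula, and an empty interval has measure $0$. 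For unbounded endpoints we use $G(-\infty)=G(+\infty)=0$, which holds because both $P$ and $Q$ are probability measures. In every case, $\mu(J)$ is a difference of two quantities each bounded in absolute value by $\|G\|_\infty$, so $|\mu(J)|\leq 2\|G\|_\infty$. Taking the supremum over intervals gives $\|P-Q\|_{\cA_1}\leq 2\|F_P-F_Q\|_\infty$.

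The only point needing care is this endpoint bookkeeping. Closed and open endpoints require left limits rather than values of $G$, and the bound $|G(x^-)|\leq\|G\|_\infty$ handles them; the case analysis above is otherwise routine.
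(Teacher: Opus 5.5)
Your proof is correct and matches the paper's argument: bound each interval's mass difference by $2\|F_P-F_Q\|_\infty$ by writing it as a difference of CDF values or one-sided limits, then sum over at most $k$ disjoint intervals. Your explicit table of endpoint conventions spells out what the paper covers in one sentence about one-sided limits. The formula for $(a,b)$ needs $a<b$, and your separate remark that empty intervals have measure $0$ already covers the case $a=b$.
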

\begin{proof}
For any interval $I=(a,b]$, we have
\begin{align*}
    |(P-Q)(I)|
    &=
    |(F_P-F_Q)(b)-(F_P-F_Q)(a)|\\
    &\leq
    |(F_P-F_Q)(b)|+|(F_P-F_Q)(a)|
    \leq2\|F_P-F_Q\|_\infty.
\end{align*}
The same bound holds for other endpoint conventions and unbounded intervals by taking the corresponding one-sided limits of the CDFs. Taking the supremum over intervals gives
\begin{align*}
    \|P-Q\|_{\cA_1}
    \leq2\|F_P-F_Q\|_\infty.
\end{align*}
Moreover, for any collection of $r\leq k$ disjoint intervals $I_1,\cdots,I_r$,
\begin{align*}
    \sum_{i\in[r]}|(P-Q)(I_i)|
    \leq
    r\|P-Q\|_{\cA_1}
    \leq
    k\|P-Q\|_{\cA_1}.
\end{align*}
Taking the supremum over all such collections proves the result.
\end{proof}

\begin{proof}[Proof of Proposition~\ref{prop:deterministic-robust-general-merging}]
Fix an arbitrary probability distribution $P^\star$ satisfying the assumptions of the Proposition~\ref{prop:deterministic-robust-general-merging}, and set $k=(C+1)(d+1)s$.
By Lemma~\ref{lem:ak-cdf-bound},
\begin{align*}
    \|H-P^\star\|_{\cA_k}
    &\leq
    2k\|F_H-F_{P^\star}\|_\infty
    \leq
    2(C+1)(d+1)s\cdot\frac{\eta}{2(C+1)(d+1)s}
    =\eta.
\end{align*}
We use the deterministic post-sampling adaptations of Theorem 17 of \citet{acharya2017sample} for $d>0$ and Theorem 10 for $d=0$, with the supplied multiset $Z$ replacing the sampling step.
Their error analyses apply once the required $\cA_k$-distance bound holds, which we have established above. With suitable universal constants and approximation tolerances, they therefore give
\begin{align*}
    \|\widehat P-P^\star\|_1
    \leq4\bigl(\mathrm{OPT}+\eta\bigr).
\end{align*}
The output has at most $Cs$ pieces of degree at most $d$, and the running time follows from the same deterministic algorithms on $M$ input points. The conversion from Kolmogorov distance is used only in the analysis and requires no additional computation.
\end{proof}

\section{Omitted proofs from sections~\ref{sec:prelims} and~\ref{sec:synthetic-generation}}\label{proofs:sec:synthetic-generation}
\begin{algorithm}[htbp]
\caption{Canonical dyadic decomposition of a prefix}
\label{alg:dyadic-prefix-cover-cdf}
\begin{algorithmic}[1]
\Require Domain size $N=2^L$ and a query index $j\in[N-1]$.
\Ensure A collection $\mathcal C(j)$ of pairwise disjoint dyadic nodes whose union is $[j]$.
\State Let $\mathcal V$ be the $2N-1$ nodes of the complete dyadic tree over $[N]$ (See Definition~\ref{def:dyadic-tree-cdf}). Index them as
\[
I_{\ell,v}
=
\{v2^{L-\ell}+1,\ldots,(v+1)2^{L-\ell}\}
\]
for depth $\ell\in\{0,\ldots,L\}$ and $v\in\{0,\ldots,2^\ell-1\}$ .
\State Initialize $\mathcal C(j)\gets\emptyset$ and $s\gets 0$.
\For{each depth $\ell\in\{1,\ldots,L\}$}
    \State $q\gets 2^{L-\ell}$ \Comment{Length of a node at depth $\ell$}
    \If{$\left\lfloor j/q\right\rfloor\bmod 2=1$}
        \State $v\gets s/q$
        \State $\mathcal C(j)\gets\mathcal C(j)\cup\{I_{\ell,v}\}$
        \State $s\gets s+q$
    \EndIf
\EndFor
\State \Return $\mathcal C(j)$
\end{algorithmic}
\end{algorithm}

\begin{proof}[Proof of Lemma~\ref{lem:dyadic-prefix-cover-cdf}]
We prove that the algorithm~\ref{alg:dyadic-prefix-cover-cdf} satisfies the requirements of the lemma.

Let $j=\sum_{\ell\in [L]} b_\ell 2^{L-\ell}$, be the $L$-bit binary representation of $j$, where
 $b_\ell\in\{0,1\}$.
 At depth $\ell$, Algorithm~\ref{alg:dyadic-prefix-cover-cdf} adds a node to $\mathcal C(j)$ when $b_\ell=1$. Let $s_0=0$, and for each $\ell\in [L]$ define
$s_\ell
    :=
    \sum_{r\in [\ell]} b_r2^{L-r}$.
We show that, after processing depth $\ell$, the nodes selected so far are pairwise disjoint and their union is exactly $[1,s_\ell]$. If $b_\ell=0$, the algorithm does not add a node and $s_\ell=s_{\ell-1}$, so the claim holds.
Otherwise, $b_\ell=1$. Let $q=2^{L-\ell}$. Note that at depth $\ell$, the variable $s$ is initially equal to $s_{\ell-1}$, and every term in $s_{\ell-1}$ is divisible by $q$, therefore
$v=\frac{s_{\ell-1}}{q}$
is an integer. So, the node selected by the algorithm is
$I_{\ell,v}
    =
    \{vq+1,\ldots,(v+1)q\} 
    =
    \{s_{\ell-1}+1,\ldots,s_{\ell-1}+q\}$.
This node is disjoint from all previously selected nodes and is immediately adjacent to their union. Since
$s_\ell=s_{\ell-1}+q$,
the union of the selected nodes after depth $\ell$ is exactly $[1,s_\ell]$. In particular at depth $L$, we have
$s_L
    =
    \sum_{\ell\in [L]}b_\ell2^{L-\ell}
    =
    j$.
Consequently, the nodes in $\mathcal C(j)$ are pairwise disjoint and exactly cover $[1,j]$, proving \textit{(1)}.

The algorithm considers each depth $\ell\in[L]$ exactly once and adds at most one node during that iteration. Thus, $\mathcal C(j)$ contains at most one node at every depth, proving \textit{(2)}. It also follows that
$|\mathcal C(j)|
    =
    \sum_{\ell\in [L]} b_\ell
    \leq L$.

Finally, the algorithm performs $L$ iterations, with a constant amount of operations at every iteration. Hence, it constructs $\mathcal C(j)$ in $O(L)$ time, which completes the proof.
\end{proof}

The following is the standard orthogonality identity for Hadamard characters: distinct characters have zero average product, while each character has average squared value one.
\begin{claim}[Orthogonality of Hadamard characters, Theorem~1.5 of \citet{odonnell2014analysis}]\label{claim:hadamard-orthogonality-cdf}
  Let $\ell\in \bN$, and $u,v\in \{0,\cdots, 2^{\ell}-1\}$, then $\Had_{\ell}$ satisfies the orthogonality identity:
  \begin{align}
  \frac{1}{2^\ell}
    \sum_{r=0}^{2^\ell-1}
       \Had_\ell(v,r)
       \Had_\ell(u,r)
    =
    \boldsymbol 1\{v=u\}.
  \end{align}
\end{claim}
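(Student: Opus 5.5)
The identity follows from the product structure of the characters. First I will write $\Had_\ell(v,r)\Had_\ell(u,r)=(-1)^{\langle v,r\rangle_2+\langle u,r\rangle_2}=(-1)^{\langle v\oplus u,r\rangle_2}$, where $\oplus$ is bitwise XOR of the $\ell$-bit representations. This uses bilinearity of the binary inner product over $\mathbb{F}_2$. So with $w:=v\oplus u$, it suffices to show
\[
\frac{1}{2^\ell}\sum_{r=0}^{2^\ell-1}(-1)^{\langle w,r\rangle_2}=\boldsymbol 1\{w=0\},
\]
since $w=0$ exactly when $v=u$.

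Next, writing $r=(r_1,\ldots,r_\ell)$ and $w=(w_1,\ldots,w_\ell)$ in bits, the sum over $r\in[2^\ell-1]_0$ ranges over all of $\{0,1\}^\ell$. The summand is the product $\prod_{i\in[\ell]}(-1)^{w_ir_i}$, so the average factorizes as
\[
\prod_{i\in[\ell]}\frac{1}{2}\sum_{r_i\in\{0,1\}}(-1)^{w_ir_i}.
\]
Each factor equals $1$ if $w_i=0$ and $\frac12(1-1)=0$ if $w_i=1$. The product is therefore $1$ precisely when all bits of $w$ vanish, and $0$ otherwise, which is the claim.

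There is no real obstacle here. The only points to state carefully are the bit-level bookkeeping: the map $r\mapsto(r_1,\ldots,r_\ell)$ is a bijection between $[2^\ell-1]_0$ and $\{0,1\}^\ell$, and the XOR identity holds coordinatewise modulo $2$. Alternatively, one can cite Theorem~1.5 of \citet{odonnell2014analysis} directly, identifying $\Had_\ell(v,\cdot)$ with the parity character $\chi_S$ for $S=\{i: v_i=1\}$.
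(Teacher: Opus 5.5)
Your proof is correct. The reduction $\Had_\ell(v,r)\Had_\ell(u,r)=(-1)^{\langle v\oplus u,r\rangle_2}$, followed by factorizing the average over the independent bits of $r$, is the standard argument behind Theorem~1.5 of \citet{odonnell2014analysis}: the identity $\chi_S\chi_T=\chi_{S\triangle T}$ plus the vanishing mean of every nonconstant parity. The paper only cites that theorem and gives no proof of its own, so your write-up supplies exactly the argument it omits.
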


\begin{algorithm}[tbp]
\caption{Hierarchical LDP local randomizer $\mathcal M$}
\label{alg:hierarchical-ldp-local-randomizer}
\begin{algorithmic}[1]
\Require A user's value $X_u\in[D]$, privacy parameter $\eps\in(0,1)$, and domain size
$D\in\bN$ with $D\geq2$.
\Ensure A user transcript $T_u=(\ell_u,r_u,Y_u)$.
\vspace{2mm}

\State Set $L\gets\lceil\log_2D\rceil$.
\vspace{1mm}

\State Sample a depth $\ell_u$ uniformly from $[L]$.
\State Sample $r_u$ uniformly from $\{0,\ldots,2^{\ell_u}-1\}$.
\State Set
$
    v_u
    \gets
    v_{\ell_u}(X_u)
    =
    \left\lfloor
        \frac{X_u-1}{2^{L-\ell_u}}
    \right\rfloor
$.
\vspace{1mm}

\State Set
$
    B_u\gets\Had_{\ell_u}(v_u,r_u)\in\{-1,+1\}
$.
\vspace{1mm}

\State Set $p_\eps\gets\frac{e^\eps}{e^\eps+1}$, and
$
    Y_u
    \gets
    \begin{cases}
        B_u, & \text{with probability }p_\eps,\\
        -B_u, & \text{with probability }1-p_\eps.
    \end{cases}
$
\vspace{2mm}

\State \Return $T_u\gets(\ell_u,r_u,Y_u)$.
\end{algorithmic}
\end{algorithm}

\begin{proof}[Proof of Theorem~\ref{thm:prefix_oracle}] Now, we prove the utility and privacy gaurantee of our lazy PrefixQuery Algorithm~\ref{alg:hierarchical-ldp-prefix-query}.

Set $L:=\lceil\log_2D\rceil$. We first prove \textbf{(1)}.
Fix a user $u\in[n]$, and let $T_u:=\cM(X_u,\eps,D)$.
Note that the depth $\ell_u$ and Hadamard coordinate $r_u$
are sampled independently of the input. Therefore, it is enough
to compare the conditional probabilities of the reported bit $Y_u$.
For any $x,x'\in[D]$ and any possible transcript $(\ell,r,y)$:
\begin{align*}
    \frac{\bP\left[T_u=(\ell,r,y)\mid X_u=x\right]}
         {\bP\left[T_u=(\ell,r,y)\mid X_u=x'\right]}
    &=
    \frac{\bP\left[Y_u=y\mid\ell_u=\ell,r_u=r,X_u=x\right]}
         {\bP\left[Y_u=y\mid\ell_u=\ell,r_u=r,X_u=x'\right]}.
\end{align*}
For any $(\ell,r)$, the unperturbed signs corresponding to $x$
and $x'$ are $b=\Had_\ell(v_\ell(x),r)$ and
$b'=\Had_\ell(v_\ell(x'),r)$, respectively.
If $b=b'$, then the two conditional distributions are identical.
If $b=-b'$, then the largest possible likelihood ratio is
$\frac{p_\eps}{1-p_\eps}=e^\eps$, which proves that $\cM$ is
$\eps$-LDP.

Also, note that each user produces its transcript $T_u$ using
only its own input and independent local randomness.
All calls to $\mathsf{PrefixQuery}$ are performed by the server
after the transcript $\Tr_{\cM}(X)=(T_1,\ldots,T_n)$ has been
collected. These calls only post-process $\Tr_{\cM}(X)$,
so the protocol is non-interactive $\eps$-LDP, regardless
of the number of queries.

Now, we prove \textbf{(2)}. The prefixes $j=0$ and $j=D$ are exact by definition. Let $j\in\{1,\cdots,D-1\}$ be any remaining query index, and $x\in[D]$ be any domain point. Also, let $\cC(j)=\{(\ell_i,v_i): i\in[k]\}$ and
\begin{align*}
    Z_u
    :=
    \frac{L}{\theta_\eps}
    \sum_{(\ell_i,v_i)\in\cC(j)}
    \boldsymbol 1\{\ell_u=\ell_i\}Y_u\Had_{\ell_i}(v_i,r_u)
\end{align*}
be defined as in Algorithm~\ref{alg:hierarchical-ldp-prefix-query}.

We show that:
\begin{quote}
    \textbf{(I)} $\bE[Z_u\mid X_u=x]=\boldsymbol 1\{x\leq j\}$;

    \textbf{(II)} $|Z_u|\leq L/\theta_\eps$ almost surely.
\end{quote}
Here, all expectations are over the depth sampling ($\ell_u$), the Hadamard-coordinate sampling ($r_u$), and randomized response ($Y_u$).

Conditioned on $X_u=x$, $\ell_u=\ell$, and $r_u=r$, randomized response satisfies:
\[
    \bE[Y_u\mid X_u=x,\ell_u=\ell,r_u=r]
    =
    \theta_\eps\Had_\ell(v_\ell(x),r).
\]
Since $\ell_u$ is uniform over $\{1,\ldots,L\}$, it follows from Hadamard orthogonality (Claim~\ref{claim:hadamard-orthogonality-cdf}) that
\[
\begin{aligned}
    \bE[Z_u\mid X_u=x]
    &=
    \frac{L}{\theta_\eps}
    \sum_{(\ell_i,v_i)\in\mathcal C(j)}
    \frac{1}{L}\frac{1}{2^{\ell_i}}
    \sum_{r=0}^{2^{\ell_i}-1}
    \theta_\eps\Had_{\ell_i}(v_{\ell_i}(x),r)\Had_{\ell_i}(v_i,r)\\
    &=
    \sum_{(\ell_i,v_i)\in\mathcal C(j)}
    \boldsymbol 1\{v_{\ell_i}(x)=v_i\}.
\end{aligned}
\]
Recall that $v_{\ell_i}(x)=\left\lfloor\frac{x-1}{2^{L-\ell_i}}\right\rfloor$ denotes the index of the unique depth-$\ell_i$ node containing $x$. Since the intervals in $\mathcal C(j)$ are disjoint and their union is $[1,j]$, the last expression equals $\boldsymbol 1\{x\leq j\}$, proving \textbf{(I)}.

By Lemma~\ref{lem:dyadic-prefix-cover-cdf}, the cover contains at most one node at every depth. Since the user samples only one depth, at most one term in the definition of $Z_u$ can be nonzero. Furthermore, $Y_u$ and every Hadamard entry belong to $\{-1,+1\}$. Hence $|Z_u|\leq L/\theta_\eps$, proving \textbf{(II)}.

For the fixed dataset $X$, the variables $Z_1,\cdots,Z_n$ are independent, although they need not be identically distributed. Let $\widetilde Z_j = \dfrac{1}{n} \sum\limits_{u\in [n]}Z_u$. By \textbf{(I)},
\begin{align*}
    \bE\left[\widetilde Z_j\mid X\right]
    &=
    \frac{1}{n}\sum_{u\in[n]}\bE[Z_u\mid X_u]\\
    &=
    \frac{1}{n}\sum_{u\in[n]}\boldsymbol 1\{X_u\leq j\}
    =Z_j(X).
\end{align*}
Using \textbf{(II)} and Hoeffding's inequality for independent bounded random variables implies:
\begin{align*}
    \bP\left[
        |\widetilde Z_j-Z_j(X)|>\alpha\mid X
    \right]
    &\leq
    2\exp\left(
        -\frac{2n^2\alpha^2}{\sum_{u\in[n]}(2L/\theta_\eps)^2}
    \right)\\
    &=
    2\exp\left(
        -\frac{n\theta_\eps^2\alpha^2}{2L^2}
    \right).
\end{align*}
Taking a union bound over the at most $D$ prefix queries results in:
\begin{align*}
    \bP\left[
        \max_{j\in\{1,\cdots,D-1\}}
        |\widetilde Z_j-Z_j(X)|>\alpha\mid X
    \right]
    \leq
    2D\exp\left(
        -\frac{n\theta_\eps^2\alpha^2}{2L^2}
    \right).
\end{align*}
Since $\theta_\eps\geq\eps/4$ for $0<\eps\leq1$ and $L\leq3\log D$ for $D\geq2$, the right-hand side is at most $\beta$ under the stated sample-size condition. Finally, clipping $\widetilde Z_j$ to $[0,1]$ cannot increase its distance from $Z_j(X)\in[0,1]$, which concludes the proof for \textbf{(2)}.

Finally, we prove \textbf{(3)}. Each user constructs and stores its report in $O(1)$ time and $O(1)$ words, so user preprocessing takes $O(n)$ total time. For each prefix query, the server constructs the dyadic cover in $O(L)$ time and scans the $n$ reports in $O(n)$ time, using a cover representation indexed by depth. Thus, one query takes $O(n+\log D)$ time, and $q\geq1$ queries take $O(q(n+\log D))$ total time including user preprocessing, without materializing the full tree or a length-$D$ vector.
\end{proof}

\begin{lemma}[$L_1$ Stability of Discretization]
\label{lem:forward-bound}
Let $R,\delta>0$ be parameters, and let $P$ and $P'$ be two probability distributions over $\bR$. Then,
\begin{align*}
    \left\|
        \operatorname{Disc}_{R,\delta}(P)
        -\operatorname{Disc}_{R,\delta}(P')
    \right\|_1
    \leq
    \|P-P'\|_1.
\end{align*}
\end{lemma}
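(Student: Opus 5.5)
The plan is to write the discretized distribution as a finite mixture of fixed uniform densities, with weights given by the bin masses, and then use the triangle inequality. By Definition~\ref{def:quantization-map}, $\operatorname{Disc}_{R,\delta}(P)$ has density
\[
\sum_{j=1}^D P(B_j)\,\frac{\boldsymbol 1\{x\in[c_{j-1},c_j)\}}{c_j-c_{j-1}},
\]
and $\operatorname{Disc}_{R,\delta}(P')$ has the same form with $P'(B_j)$ in place of $P(B_j)$. The support intervals $[c_{j-1},c_j)$ and the uniform densities do not depend on the distribution. Only the weights change.

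Subtracting the two densities gives the signed function
\[
\sum_j \bigl(P(B_j)-P'(B_j)\bigr)\operatorname{Unif}([c_{j-1},c_j)).
\]
Each uniform density has $L_1$ norm one, so the triangle inequality gives
\[
\|\operatorname{Disc}_{R,\delta}(P)-\operatorname{Disc}_{R,\delta}(P')\|_1\le\sum_{j=1}^D|P(B_j)-P'(B_j)|.
\]
This step does not need the intervals $[c_{j-1},c_j)$ to be disjoint, which matters because of the padding convention $c_j=-R+\min\{(j-1)\delta,2R\}$. Near the right end some of these intervals may coincide or degenerate. I would check the degenerate case where $c_{j-1}=c_j$. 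Either such bins do not occur, because the $\min$ only truncates the final interior endpoint, or the convention assigns them a well-defined unit-mass uniform. The triangle-inequality bound holds in either case, as long as each term has mass $|P(B_j)-P'(B_j)|$.

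Next I need $\sum_j|P(B_j)-P'(B_j)|\le\|P-P'\|_1$. The sets $B_1=(-\infty,-R)$, $B_j=[c_{j-1},c_j)$ for $2\le j\le D-1$, and $B_D=[R,\infty)$ are pairwise disjoint measurable sets. Together they cover $\bR$, possibly with some of them empty. This is a data-processing inequality for a finite partition. I would prove it directly. Let $\mu=P-P'$. Then for each $j$ we have $|\mu(B_j)|\le|\mu|(B_j)$. Summing over the partition gives $\sum_j|\mu(B_j)|\le|\mu|(\bR)=\|P-P'\|_1$. Equivalently, take $A$ to be the union of the bins where $P(B_j)\ge P'(B_j)$. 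Then
\[
\sum_j|P(B_j)-P'(B_j)|=2\bigl(P(A)-P'(A)\bigr)\le 2\,\dtv(P,P').
\]
This covers arbitrary distributions without densities.

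Chaining the two inequalities gives the lemma. The only step I expect to need care is the first one: confirming that the bins $B_j$ really partition $\bR$ and that the endpoint convention at $R$ (the truncation via $\min\{\cdot,2R\}$ together with $c_D=R+\delta$) creates no double counting. The rest is routine.
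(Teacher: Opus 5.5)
Your proof is correct and follows the paper's route: reduce to the bin masses $\sum_j|P(B_j)-P'(B_j)|$, then bound this by $\|P-P'\|_1$ because $(B_j)$ is a measurable partition of $\bR$. The only difference is that the paper gets the first step as an equality from disjointness of the supports $[c_{j-1},c_j)$, whereas your triangle-inequality bound sidesteps that check. The degenerate-bin worry is moot anyway: $D-2=\lceil 2R/\delta\rceil$ forces $c_0<c_1<\cdots<c_{D-1}=R<c_D$.
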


\begin{proof}
Let $B_j$'s be the intervals from Definition~\ref{def:quantization-map}. Since the intervals $B_j$'s are pairwise disjoint, we have:
\begin{align*}
    \left\|
        \operatorname{Disc}_{R,\delta}(P)
        -\operatorname{Disc}_{R,\delta}(P')
    \right\|_1
    &=
    \sum_{j=1}^D |P(B_j)-P'(B_j)|\\
    &\leq
    \|P-P'\|_1,
\end{align*}
where the inequality follows because $(B_j)_{j=1}^D$ is a measurable partition of $\bR$.
\end{proof}

\begin{proof}[Proof of Theorem~\ref{thm:ldp-synthetic}] Putting together, we are ready to prove the utility and privacy guarantee of our LDP synthetic data generator in Algorithm~\ref{alg:synthetic-dataset}.

Set $D:=\lceil2R/\delta\rceil+2$ and $L:=\lceil\log_2D\rceil$. Since $\delta\leq R/2$, we have $R/\delta\geq2$, and $\frac{2R}{\delta}
    \leq D
    \leq\frac{4R}{\delta}$.
In particular, $\log D=O(\log(R/\delta))$ and $L=O(\log(R/\delta))$. 

Let the real line partitioning $B_1=(-\infty,-R), B_j=[c_{j-1},c_j), B_D=[R,+\infty)$, be as in Definition~\ref{def:quantization-map}, where $c_0= -R-\delta$, $c_j= -R+\min\{(j-1)\delta,2R\}$, $c_D=R+\delta$.

We first prove \textbf{(1)}.
Each user computes the interval index $\widetilde X_u=\operatorname{Disc}_{R,\delta}(X_u)$ locally and applies the randomizer in Algorithm~\ref{alg:hierarchical-ldp-local-randomizer}. For any two real inputs, their indices belong to $[D]$, so Theorem~\ref{thm:prefix_oracle} proves $\eps$-local differential privacy. The protocol is non-interactive, and all remaining steps are post-processing of the transcripts $\Tr$.

Now, we prove \textbf{(2)}.
Let $\widetilde P=\operatorname{Disc}_{R,\delta}(P)$, and write $F_{\widetilde P}(x)= \bP_{X\sim \widetilde P}(X\leq x)$ for its CDF. By Definition~\ref{def:quantization-map}, $F_{\widetilde P}$ is continuous and linear on each interval $[c_{j-1},c_j]$, and
\begin{align*}
    F_{\widetilde P}(c_j)
    =
    \sum_{i=1}^j P(B_i)
    =
    \bP_{X\sim P}\left[\operatorname{Disc}_{R,\delta}(X)\leq j\right],
    \qquad j\in\{0,\cdots,D\}.
\end{align*}
Let $\widetilde X=\left(\widetilde X_1,\cdots,\widetilde X_n\right)$ be the discretized dataset, and define its prefix frequencies by
\begin{align*}
    Z_j\left(\widetilde X\right)
    :=
    \frac{1}{n}\sum_{u\in[n]}\boldsymbol 1\left\{\widetilde X_u\leq j\right\},
    \qquad j\in\{0,\cdots,D\}.
\end{align*}
Let $\alpha':=\alpha/2$. Conditioned on any realization of $\widetilde X$, Theorem~\ref{thm:prefix_oracle}, with accuracy parameter $\alpha'/2$ and failure probability $\beta/2$, gives
\begin{align*}
    \max_{j\in\{0,\cdots,D\}}
    \left|\widehat Z_j-Z_j\left(\widetilde X\right)\right|
    \leq\frac{\alpha'}{2}
\end{align*}
with probability at least $1-\beta/2$ over the local randomizers, provided that
$n
    \geq
    \frac{288(\log D)^2}{\eps^2(\alpha'/2)^2}
    \log\left(\frac{4D}{\beta}\right)$.

Since the prefix-oracle guarantee holds for every fixed dataset, the same probability bound holds over the joint sampling and local randomness.

Furthermore, since $X_1,\cdots,X_n$ are drawn i.i.d. from $P$, we have $\bE_{X\sim P}\left[Z_j\left(\operatorname{Disc}_{R,\delta}(X)\right)\right]=\bE_{X\sim P}\left[Z_j\left(\widetilde X\right)\right]=F_{\widetilde P}(c_j)$. Using Hoeffding's inequality, followed by a union bound over the prefixes, gives
\begin{align*}
    \bP\left[
        \max_{j\in\{0,\cdots,D\}}
        \left|Z_j\left(\widetilde X\right)-F_{\widetilde P}(c_j)\right|>\frac{\alpha'}{2}
    \right]
    &\leq
    2D\exp\left(-\frac{n(\alpha')^2}{2}\right)\leq\frac{\beta}{2}
\end{align*}
 Combining the two bounds, with probability at least $1-\beta$ over the sampling and local randomizers, the triangle inequality gives
\begin{align*}
    \left|\widehat Z_j-F_{\widetilde P}(c_j)\right|
    &\leq
    \left|\widehat Z_j-Z_j\left(\widetilde X\right)\right|
    +\left|Z_j\left(\widetilde X\right)-F_{\widetilde P}(c_j)\right|\leq\frac{\alpha'}{2}+\frac{\alpha'}{2}
    =\alpha'
\end{align*}
for every $j\in\{0,\cdots,D\}$ simultaneously. We condition on this event throughout the rest of the proof. Furthermore, $\widehat Z_0=0$ and $\widehat Z_D=1$, but the $\widehat Z_j$'s are not necessarily monotone.

For each $\ell\in[M]$, the binary search maintains the condition $A=\widehat Z_a<u_\ell\leq\widehat Z_b=B$. At termination, $a=b-1$, so that $B-A>0$ and $\lambda_\ell=(u_\ell-A)/(B-A)\in(0,1]$. Thus, the interpolation step is well-defined, and it holds that
    $u_\ell
    =(1-\lambda_\ell)\widehat Z_{b-1}+\lambda_\ell\widehat Z_b$, and
    $F_{\widetilde P}(z_\ell)
    =(1-\lambda_\ell)F_{\widetilde P}(c_{b-1})+\lambda_\ell F_{\widetilde P}(c_b)$.
Therefore, the simultaneous accuracy bound implies:
\begin{align*}
    |F_{\widetilde P}(z_\ell)-u_\ell|
    &\leq
    (1-\lambda_\ell)|F_{\widetilde P}(c_{b-1})-\widehat Z_{b-1}|
    +\lambda_\ell|F_{\widetilde P}(c_b)-\widehat Z_b|\leq\alpha'.
\end{align*}

Write $F_H$ for the CDF of $H$. Fix any $x\in\bR$. By the preceding accuracy bound and monotonicity of $F_{\widetilde P}$, if $z_\ell\leq x$, then $u_\ell
    \leq F_{\widetilde P}(z_\ell)+\alpha'
    \leq F_{\widetilde P}(x)+\alpha'$.
Similarly, if $z_\ell>x$, then $u_\ell
    \geq F_{\widetilde P}(z_\ell)-\alpha'
    \geq F_{\widetilde P}(x)-\alpha'$.
    
Since $u_\ell=(\ell-1/2)/M$, these inequalities give
\begin{align*}
    F_H(x)
    &\leq
    \frac{1}{M}\sum_{\ell\in[M]}
    \boldsymbol 1\{u_\ell\leq F_{\widetilde P}(x)+\alpha'\}
    \leq F_{\widetilde P}(x)+\alpha'+\frac{1}{2M},\\
    1-F_H(x)
    &\leq
    \frac{1}{M}\sum_{\ell\in[M]}
    \boldsymbol 1\{u_\ell\geq F_{\widetilde P}(x)-\alpha'\}
    \leq 1-F_{\widetilde P}(x)+\alpha'+\frac{1}{2M}.
\end{align*}
Putting together, gives
\begin{align*}
    \|F_H-F_{\widetilde P}\|_\infty
    =\sup_{x\in\bR}|F_H(x)-F_{\widetilde P}(x)|
    \leq \alpha'+\frac{1}{2M}
    \leq\frac{\alpha}{2}+\frac{\alpha}{2}
    =\alpha.
\end{align*}
This argument does not require the estimated prefix values to be monotone.

Finally, to see \textbf{(3)}, computing the interval index and generating the local randomized response each take $O(1)$ time per user. Thus, user preprocessing takes $O(n)$ total time. Generating the synthetic dataset requires $M$ binary searches, each with at most $L$ prefix queries and constant additional work per step. By Theorem~\ref{thm:prefix_oracle}, each query takes $O(n+L)$ time. Thus, the total run-time of the algorithm is
\begin{align*}
    O\left(n+ML(n+L)\right)
    =O\left(
        \frac{1}{\alpha}
        \log\left(\frac{R}{\delta}\right)
        \left(n+\log\left(\frac{R}{\delta}\right)\right)
    \right),
\end{align*}
where we used $M=\lceil1/\alpha\rceil=O(1/\alpha)$ and $L=O(\log(R/\delta))$.
The assumption $\delta\leq R/2$ is used only to simplify the bounds.
\end{proof}

\section{Proof of the main theorem from section~\ref{sec:main-agnostic-learner}}\label{proofs:sec:main-agnostic-learner}

\begin{proof}[Proof of Theorem~\ref{main:cor}] We now prove the guarantees of our LDP agnostic learning algorithm.

Let $C\geq3$ be a universal constant for which Proposition~\ref{prop:deterministic-robust-general-merging} holds, and set
$t=(C+1)(d+1)s$,
    and $M=\left\lceil\frac{2t}{\alpha}\right\rceil$.

Let $\widetilde P:=\operatorname{Disc}_{R,\delta}(P)$ with support $I_{R,\delta}:=[-R-\delta,R+\delta]$.
Using Theorem~\ref{thm:ldp-synthetic}, with accuracy parameter $\alpha/(2t)$, the stated sample size suffices to obtain a synthetic multiset $\cZ\subseteq I_{R,\delta}$ of size $M$, with empirical probability measure $H$, such that
\begin{align}
    \left\|F_H-F_{\widetilde P}\right\|_\infty
    \leq\frac{\alpha}{2t}
    =\frac{\alpha}{2(C+1)(d+1)s} \label{eq:main:1}
\end{align}
with probability at least $1-\beta$. We condition on this event for the rest of the proof.

Run the deterministic algorithm from Proposition~\ref{prop:deterministic-robust-general-merging} on $I_{R,\delta},\cZ,s,d$, with parameter $\eta=\alpha$.
Since $\widetilde P$ has a density supported on $I_{R,\delta}$, the Kolmogorov bound~\eqref{eq:main:1} satisfies the requirement of the proposition for $P^\star=\widetilde P$.
Thus, the output of the deterministic piece-wise polynomial learner $Q\in\cP_{Cs,d}$ is nonnegative and satisfies:
\begin{align}
    \|Q-\widetilde P\|_1
    \leq
    4\inf_{P'\in\cP_{s,d}}\|P'-\widetilde P\|_1+4\alpha. \label{eq:main:2}
\end{align}
Fix any $F\in\cF$. We can write:
\begin{align}
    \|F-\widetilde P\|_1
    &\leq
    \left\|F-\operatorname{Disc}_{R,\delta}(F)\right\|_1
    +\left\|
        \operatorname{Disc}_{R,\delta}(F)
        -\operatorname{Disc}_{R,\delta}(P)
    \right\|_1\\
    &\leq
    \operatorname{Disc\text{-}Err}_{R,\delta}(\cF)
    +\|F-P\|_1,
\end{align}
where the first inequality follows from the triangle inequality,
and the second follows from Definition~\ref{def:discretization-error}
and the fact that $\operatorname{Disc}_{R,\delta}$ is a Markov
operator and does not increase $L_1$ distance
(see Lemma~\ref{lem:forward-bound}).

Consequently, we can write:
\begin{align*}
    \inf_{P'\in\cP_{s,d}}\|P'-\widetilde P\|_1
    &\leq
    \inf_{P'\in\cP_{s,d}}\|P'-F\|_1+\|F-\widetilde P\|_1\\
    &\leq
    \operatorname{Poly\text{-}Approx}_{s,d}(\cF)
    +\operatorname{Disc\text{-}Err}_{R,\delta}(\cF)+\|F-P\|_1.
\end{align*}
Putting together with \eqref{eq:main:2}, we get:
\begin{align}
    \|Q-\widetilde P\|_1
    \leq
    4\alpha+4\operatorname{Poly\text{-}Approx}_{s,d}(\cF)
    +4\operatorname{Disc\text{-}Err}_{R,\delta}(\cF)+4\|F-P\|_1. \label{eq:main:3}
\end{align}

Since $Q$ is a piece-wise polynomial supported on the bounded interval $I_{R,\delta}$, it is integrable. Let $m_0:=\int_{\bR}Q(x)\,dx < \infty$, and define the normalized output distribution by:
\begin{align*}
    \widehat P:=
    \begin{cases}
        Q/m_0,&m_0>0,\\
        \operatorname{Unif}(I_{R,\delta}),&m_0=0.
    \end{cases}
\end{align*}
If $m_0>0$, then
\begin{align}
    \|\widehat P-\widetilde P\|_1
    \leq
    \|\widehat P-Q\|_1+\|Q-\widetilde P\|_1
    =|1-m_0|+\|Q-\widetilde P\|_1\leq2\|Q-\widetilde P\|_1. \label{eq:main:4}
\end{align}
If $m_0=0$, then $\|Q-\widetilde P\|_1=1$ and $\|\widehat P-\widetilde P\|_1\leq2$. Therefore, in both cases, we have $\|\widehat P-\widetilde P\|_1\leq2\|Q-\widetilde P\|_1$.
Note that $\widehat P$ remains a piece-wise polynomial, since
normalization preserves the number and degrees of the polynomial
pieces, and the fallback density $\operatorname{Unif}(I_{R,\delta})$
has at most three pieces on $\bR$.

Moreover, we can write:
\begin{align}
    \|\widetilde P-P\|_1
    &\leq
    \|\widetilde P-F\|_1+\|F-P\|_1\\
    &\leq
    \left\|
        \operatorname{Disc}_{R,\delta}(P)
        -\operatorname{Disc}_{R,\delta}(F)
    \right\|_1
    +\|\operatorname{Disc}_{R,\delta}(F)-F\|_1
    +\|F-P\|_1\\
    &\leq
    \|P-F\|_1
    +\|\operatorname{Disc}_{R,\delta}(F)-F\|_1
    +\|F-P\|_1\\
    &\leq
    2\|F-P\|_1
    +\operatorname{Disc\text{-}Err}_{R,\delta}(\cF), \label{eq:main:5}
\end{align}
where the first two inequalities follow from the triangle
inequality, the third follows because
$\operatorname{Disc}_{R,\delta}$ is a Markov operator and
does not increase $L_1$ distance
(see Lemma~\ref{lem:forward-bound}), and the last follows
from Definition~\ref{def:discretization-error}.

Finally, we can write:
\begin{align*}
    \|\widehat P-P\|_1
    &\leq
    \|\widehat P-\widetilde P\|_1+\|\widetilde P-P\|_1\\
    &\leq
    2\|Q-\widetilde P\|_1+\|\widetilde P-P\|_1\\
    &\leq
    8\alpha+8\|F-P\|_1
    +8\operatorname{Disc\text{-}Err}_{R,\delta}(\cF)+8\operatorname{Poly\text{-}Approx}_{s,d}(\cF)
    +\|\widetilde P-P\|_1\\
    &\leq
    8\alpha+10\|F-P\|_1
    +9\operatorname{Disc\text{-}Err}_{R,\delta}(\cF)
    +8\operatorname{Poly\text{-}Approx}_{s,d}(\cF),
\end{align*}
where the first inequality follows from the triangle inequality,
the second follows from \eqref{eq:main:4}, the third follows
from \eqref{eq:main:3}, and the last follows from \eqref{eq:main:5}.
Taking the infimum over $F\in\cF$ gives
\[
    \|\widehat P-P\|_1
    \leq
    10\inf_{F\in\cF}\|F-P\|_1
    +8\alpha
    +9\operatorname{Disc\text{-}Err}_{R,\delta}(\cF)
    +8\operatorname{Poly\text{-}Approx}_{s,d}(\cF).
\]

Privacy and non-interactivity follow from Theorem~\ref{thm:ldp-synthetic}, since fitting and normalization are post-processing operations. By Theorem~\ref{thm:ldp-synthetic} and Proposition~\ref{prop:deterministic-robust-general-merging}, the total running time is
\begin{align*}
    &O\left(
        \frac{t}{\alpha}\log\left(\frac{R}{\delta}\right)
        \left(n+\log\left(\frac{R}{\delta}\right)\right)
    \right)
    +\widetilde{O}\left(
        M(d+1)^{\omega+2}
        \log_+^3\left(\frac{s}{\alpha}\right)
        \left(1+\log_+\left(\frac{M}{s}\right)\right)
    \right),
\end{align*}
where $\log_+(x):=\max\{1,\log x\}$, and the $\widetilde{O}$ term hides only logarithmic factors in $M$ and $d+1$. 

For the chosen sample size, $n=\Omega(\log(R/\delta))$, so $n+\log(R/\delta)=O(n)$, $t=O(s(d+1))$, $M=O(s(d+1)/\alpha)$, and $M/s=O((d+1)/\alpha)$. Absorbing only logarithmic factors in $s$, $d+1$, and $1/\alpha$ therefore gives the stated runtime bound in the theorem.
Integrating and normalizing the polynomial pieces takes $O(s(d+1))$ additional operations, which is absorbed in the bound.
\end{proof}

\section{Omitted proofs from section~\ref{sec:apps-learning-mixtures}}\label{proofs:sec:apps-learning-mixtures}

\begin{proof}[Proof of Lemma~\ref{lem:mixture-errors}] We prove that bounding the polynomial approximation and discretization errors of mixture classes reduces to bounding the corresponding errors for the single-component classes.

Fix an arbitrary mixture $F=\sum_{i\in[m]}w_iF_i\in\mmix(\cF)$.

First we prove \textit{(1)}. By Definition~\ref{def:quantization-map},
$\operatorname{Disc}_{R,\delta}(F)
    =
\sum_{i\in[m]}w_i\operatorname{Disc}_{R,\delta}(F_i)$.
Therefore, the triangle inequality gives
\begin{align*}
    \left\|\operatorname{Disc}_{R,\delta}(F)-F\right\|_1
    &\leq
    \sum_{i\in[m]}w_i
\left\|\operatorname{Disc}_{R,\delta}(F_i)-F_i\right\|_1\\
    &\leq
    \operatorname{Disc\text{-}Err}_{R,\delta}(\cF).
\end{align*}
Taking the supremum over $F\in\mmix(\cF)$ proves one direction. The reverse inequality follows from $\cF\subseteq\mmix(\cF)$.

Next, we show \textit{(2)}. Fix $\eta>0$. For each $i\in[m]$, choose $P_i\in\cP_{s,d}$ satisfying
\begin{align*}
    \|P_i-F_i\|_1
    \leq
    \operatorname{Poly\text{-}Approx}_{s,d}(\cF)+\eta,
\end{align*}
and set $P:=\sum_{i\in[m]}w_iP_i$.
Each $P_i$ has at most $s-1$ breakpoints. Their common refinement therefore has at most $m(s-1)+1$ interval pieces, and taking weighted sums does not increase the degree of the underlying polynomials. Since $P$ is nonnegative and integrable, we have $P\in\cP_{m(s-1)+1,d}$. Moreover,
\begin{align*}
    \|P-F\|_1
    &\leq
    \sum_{i\in[m]}w_i\|P_i-F_i\|_1\leq
    \operatorname{Poly\text{-}Approx}_{s,d}(\cF)+\eta.
\end{align*}
Consequently,
\begin{align*}
    \inf_{P\in\cP_{m(s-1)+1,d}}\|P-F\|_1
    \leq
    \operatorname{Poly\text{-}Approx}_{s,d}(\cF)+\eta.
\end{align*}
Taking the supremum over $F\in\mmix(\cF)$ and letting $\eta$ tend to zero proves the result.
\end{proof}

\begin{proof}[Proof of Lemma~\ref{lem:logconcave-mixture-partition-reconstruction}] Next, we prove a bound on the discretization error of log-concave distributions.

Fix an arbitrary distribution $F\in\cL(R,\sigma_{\min},\sigma_{\max})$, and let $f$ be its density, with mean $\mu$ and standard deviation $\sigma$.

Define $g(x):=\sigma f(\mu+\sigma x)$ for $x\in\bR$.
The function $g$ is log-concave, and the change of variables $y=\mu+\sigma x$ gives
\begin{align*}
    \int_{\bR}g(x)\,dx
    &=\int_{\bR}f(y)\,dy=1,\\
    \int_{\bR}xg(x)\,dx
    &=\frac{1}{\sigma}\int_{\bR}(y-\mu)f(y)\,dy=0,\\
    \int_{\bR}x^2g(x)\,dx
    &=\frac{1}{\sigma^2}\int_{\bR}(y-\mu)^2f(y)\,dy=1.
\end{align*}
Thus, $g$ is a log-concave probability density with mean zero and variance one. By Lemma 5.5(a) of \citet{lovasz2007geometry}, $\sup_{x\in\bR}g(x)\leq1$.
Since $f(x)=\frac{1}{\sigma}g\left(\frac{x-\mu}{\sigma}\right)$, we obtain $\sup_{x\in\bR}f(x)
    =\frac{1}{\sigma}\sup_{z\in\bR}g(z)
    \leq\frac{1}{\sigma}$.

We next bound the total tail mass. For $X\sim f$, the random variable $Y=(X-\mu)/\sigma$ has density $g$. Since $\bE[Y^2]=1$, Lemma 5.17 of \citet{lovasz2007geometry} gives, for every $t>1$,
\begin{align*}
    \bP\left[|X-\mu|>t\sigma\right]
    =\bP\left[|Y|>t\right]
    \leq e^{1-t}.
\end{align*}
Since $|\mu|\leq R$ and $\sigma\leq\sigma_{\max}$, it follows that
\begin{align*}
    \bP[|X|>T]
    &\leq
    \bP\left[
        \frac{|X-\mu|}{\sigma}
        >\frac{T-R}{\sigma_{\max}}
    \right]
    \leq
    \exp\left(1-\frac{T-R}{\sigma_{\max}}\right)
    =\frac{\eta}{4}.
\end{align*}
Therefore, $F(\bR\setminus[-T,T])\leq\eta/4$.

Let $D=J+2$, and let $B_j$'s and $c_j$'s be the intervals and endpoints from Definition~\ref{def:quantization-map}, with parameters $(T,\delta)$.
For each $j\in\{2,\cdots,D-1\}$, let $h_j=c_j-c_{j-1}\leq\delta$ and
\begin{align*}
    \bar f_j:=\frac{1}{h_j}\int_{B_j}f(y)\,dy.
\end{align*}
For every $x\in B_j$, we have
\begin{align*}
    |f(x)-\bar f_j|
    &\leq
    \frac{1}{h_j}\int_{B_j}|f(x)-f(y)|\,dy\\
    &\leq
    \sup_{y\in B_j}f(y)-\inf_{y\in B_j}f(y).
\end{align*}

Since $f$ is nonnegative and unimodal, the sum of its binwise differences between the supremum and infimum is bounded by its total increase plus its total decrease, each of which is at most $\sup_{x\in\bR}f(x)$. Therefore,
\begin{align*}
    \sum_{j=2}^{D-1}
    \left(\sup_{x\in B_j}f(x)-\inf_{x\in B_j}f(x)\right)
    \leq2\sup_{x\in\bR}f(x)
    \leq\frac{2}{\sigma}.
\end{align*}
Integrating over $x\in B_j$ and summing over the internal intervals yields
\begin{align*}
    \sum_{j=2}^{D-1}\int_{B_j}|f(x)-\bar f_j|\,dx
    &\leq
    \delta
    \sum_{j=2}^{D-1}
    \left(\sup_{x\in B_j}f(x)-\inf_{x\in B_j}f(x)\right)\leq
    \frac{2\delta}{\sigma}
    \leq\frac{2\delta}{\sigma_{\min}}.
\end{align*}
The original and discretized measures restricted to the two tails each have total mass $F(\bR\setminus[-T,T])$. Their $L_1$ difference is therefore at most twice this mass. Combining the two bounds gives
\begin{align*}
    \left\|\operatorname{Disc}_{T,\delta}(F)-F\right\|_1
    &\leq
    \frac{2\delta}{\sigma_{\min}}+\frac{\eta}{2}=\frac{4T}{J\sigma_{\min}}+\frac{\eta}{2}
    \leq\eta.
\end{align*}
Taking the supremum over $F$ proves the result.
\end{proof}

\begin{lemma}[Piece-wise polynomial approximation of log-concave distributions, Lemma 27 of ~\citet{chan2014efficient}]
\label{lem:cdss-logconcave-polynomial-approximation}
For every $\alpha\in(0,1)$, there exists an integer $s_\alpha=\widetilde{O}(1/\sqrt{\alpha})$ such that, for every log-concave probability density $f$ over $\bR$, there exists a (normalized) piece-wise polynomial $p\in\cP_{s_\alpha,1}$ satisfying $\|f-p\|_1\leq\alpha$. Here, $\widetilde{O}$ hides only logarithmic factors in $1/\alpha$.
\end{lemma}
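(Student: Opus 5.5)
This lemma is attributed to Lemma 27 of \citet{chan2014efficient}, so the plan is mainly to reconstruct their argument in the notation of $\cP_{s,1}$. The idea is to split the real line into a bounded \emph{core} carrying all but $O(\alpha)$ of the mass and two tails. On the core, $f$ is approximated by $O(1/\sqrt{\alpha})$ linear pieces. The tails are covered by $O(\log(1/\alpha))$ geometrically growing pieces or simply discarded.

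\textbf{Reduction and core.} By affine invariance of $\|\cdot\|_1$ and of the class $\cP_{s,1}$, I would first normalize $f$ to mean zero and variance one, as in the proof of Lemma~\ref{lem:logconcave-mixture-partition-reconstruction}. Then $\sup f\leq 1$ and $\bP[|X|>t]\leq e^{1-t}$. Taking $T=1+\log(4/\alpha)$ leaves tail mass at most $\alpha/4$. Setting $p=0$ outside $[-T,T]$ costs at most $\alpha/4$ and adds only two pieces.

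\textbf{Piecewise linear approximation on the core.} Let $x^\star$ be the mode, so $f$ is monotone on each side of it. On $[x^\star,T]$ I would partition into intervals on which $\log f$ changes by a controlled amount, and then refine using concavity of $\log f$. For a concave $\phi=\log f$ on an interval $J$, the linear interpolant of $f$ at the endpoints has $L_1$ error governed by the second-order variation of $f$ over $J$. The standard trick is to choose breakpoints so that each interval $J$ satisfies
\[
  |J|\cdot\bigl(\text{variation of } f' \text{ on } J\bigr)\leq \alpha^{3/2}.
\]
Because $f$ is log-concave with $\sup f\leq 1$ on a core of length $O(\log(1/\alpha))$, the total variation of $f'$ over the core, weighted by the interval lengths, is bounded by $O(\log(1/\alpha))$. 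A greedy partition therefore needs $\widetilde O(1/\sqrt{\alpha})$ intervals, and the linear interpolant on each has error $O(\alpha^{3/2}\cdot)$, for total error $O(\alpha)$. The interpolant is nonnegative because $f\geq 0$ at the endpoints. The main obstacle is nondifferentiability of $f$ (for example a jump at the boundary of its support) and unbounded $f'$ near such points. I would handle these by isolating $O(\log(1/\alpha))$ short intervals near the support boundary and near the mode on which $f$ is replaced by a constant, and bounding their cost by $\sup f$ times their length. Concretely, I would choose geometric scales so that the total cost of these intervals is $\alpha/4$.

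\textbf{Assembling.} Combining the tail, boundary and smooth-region errors gives $\|f-p\|_1\leq\alpha$ after rescaling $\alpha$ by a constant, with $s_\alpha=\widetilde O(1/\sqrt{\alpha})$ independent of $f$. If a normalized $p$ is required, I would divide by $\int p$, which at most doubles the error as in \eqref{eq:main:4}. Finally, since every $f\in\cL(R,\sigma_{\min},\sigma_{\max})$ is log-concave, taking the supremum over $f$ yields Lemma~\ref{lem:logconcave-polynomial-approximation}. The step I expect to be delicate is the uniform count $\widetilde O(\alpha^{-1/2})$ in the smooth region; there I would rely directly on the argument of \citet{chan2014efficient} rather than re-derive the curvature bookkeeping.
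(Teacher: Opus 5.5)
The paper gives no proof of this statement. It is imported directly as Lemma~27 of \citet{chan2014efficient}, so your closing decision to rely on their argument is exactly the paper's justification. Your peripheral steps are fine: the isotropic normalization with $\sup f\le 1$ and $e^{1-t}$ tails, truncation at $T=1+\log(4/\alpha)$, and renormalization by $\int p$. But the reconstruction of the core step does not work as written. That step is the whole content of the lemma: a count of $\widetilde O(\alpha^{-1/2})$ linear pieces that is uniform over all log-concave $f$.

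\textbf{Wrong stopping rule.} The chord error on $J$ is at most $|J|^2V_J$, where $V_J$ is the variation of $f'$ on $J$, since pointwise $|f-\ell_J|\le|J|\,V_J$. Your rule $|J|\,V_J\le\alpha^{3/2}$ therefore over-refines. For the standard Gaussian, $|f''|\ge 1/4$ on $[-1/2,1/2]$, so the rule forces $|J|\le 2\alpha^{3/4}$ there and yields $\Omega(\alpha^{-3/4})$ pieces. The right rule is $|J|^2V_J\lesssim\alpha^{3/2}$.

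\textbf{No global bound on $V$.} Even with the corrected rule, you cannot count pieces through a bound on the total variation $V$ of $f'$ over the core. $V$ is not uniformly bounded over isotropic log-concave densities, and the large variation need not sit near the support boundary or the mode. Take $\log f$ with slope $-1$ on $[0,1]$ and slope $-K$ on $[1,\infty)$, with $K=\alpha^{-1/2}$; after an $O(1)$ affine rescaling this is isotropic. The region past the kink contributes $\Theta(\alpha^{-1/2})$ to $V$. It also carries mass $\Theta(\alpha^{1/2})$, so replacing $f$ by a constant there costs far more than $O(\alpha)$. The best count you can extract from $V$, via H\"older, is $\alpha^{-1/2}(|[-T,T]|^2V)^{1/3}=\widetilde O(\alpha^{-2/3})$. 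So your claimed bound on the variation of $f'$ is false, and the short-constant-interval patch does not repair it.

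\textbf{What is missing: scale-sensitive bookkeeping.} Superlevel sets of $f$ are intervals, and $[-T,T]\cap\{f<\alpha/(2T)\}$ has mass at most $\alpha$. The rest of the core therefore splits into a modal plateau plus $O(\log(1/\alpha))$ intervals $E$ on which $f$ is monotone and $\sup_E f\le 2\inf_E f$.
\begin{itemize}
\item Take such an $E=[a,b]$ to the right of the mode. Concavity of $\phi=\log f$ gives $|\phi'(x)|\le\log 2/(b-x)$. Hence on a block $J\subset E$ at distance $h$ from $b$, the variation of $f'=f\phi'$ is $O(\inf_E f/h)$.
\item Cut $E$ dyadically toward $b$, with the $i$-th block of length $2^{-i}|E|$, and split each block into $k$ equal chords. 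The $i$-th block then costs $O(2^{-i}|E|\inf_E f/k^2)$.
\item Summing over blocks gives $O(f(E)/k^2)$. Stopping after $O(\log(1/\alpha))$ blocks leaves a last piece costing $O(\alpha f(E))$.
\item The left of the mode is symmetric.
\end{itemize}
Summing over all the $E$'s with $k=\alpha^{-1/2}$ gives $L_1$ error $O(\alpha)$ with $O(\alpha^{-1/2}\log^2(1/\alpha))$ linear pieces. That is the uniform count the lemma needs.
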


\begin{proof}[Proof of Lemma~\ref{lem:logconcave-polynomial-approximation}] We prove a bound on the piece-wise polynomial approximation of log-concave distributions.

Let $s_\alpha$ be the integer from Lemma~\ref{lem:cdss-logconcave-polynomial-approximation}. For every $f\in\cL(R,\sigma_{\min},\sigma_{\max})$, that lemma gives a normalized $P\in\cP_{s_\alpha,1}$ satisfying $\|P-f\|_1\leq\alpha$. Therefore,
$\inf_{P\in\cP_{s_\alpha,1}}\|P-f\|_1
    \leq\alpha$.
Taking the supremum over $f\in\cL(R,\sigma_{\min},\sigma_{\max})$ proves the result.
\end{proof}

\begin{proof}[Proof of Theorem~\ref{thm:ldp-logconcave-mixture-learning}] Putting together, we obtain our upper bound for LDP agnostic learning log-concave distributions.

Let $\cF=\mmix\left(\cL(R,\sigma_{\min},\sigma_{\max})\right)$ be the class of log-concave mixtures.
Set $\rho=\frac{\alpha}{25},
    s_0=m(s_\rho-1)+1,
    d=1$,
where $s_\rho$ is the integer from Lemma~\ref{lem:logconcave-polynomial-approximation} at accuracy $\rho$. Thus, $s_0=\widetilde{O}(m/\sqrt{\rho})$.

Also let the discretization parameters $T=R+\sigma_{\max}\left(1+\log\left(\frac{4}{\rho}\right)\right)$, $J=\left\lceil\frac{8T}{\rho\sigma_{\min}}\right\rceil$, and
    $\delta=\frac{2T}{J}$ be defined as per Lemma~\ref{lem:logconcave-mixture-partition-reconstruction}.
Since $T>0$ and $J\geq4$, we have $0<\delta\leq T/2$. We can apply Theorem~\ref{main:cor} with class $\cF$, discretization parameters $(T,\delta)$, privacy parameter $\eps$, failure probability $\beta$, accuracy parameter $\rho$, and polynomial parameters $(s_0,d)$, using sample size
$n=O\left(
        \frac{s_0^2(d+1)^2}{\eps^2\rho^2}
        \log^2\left(\frac{T}{\delta}\right)
        \log\left(\frac{T}{\beta\delta}\right)
    \right)$.
The output then satisfies
\begin{align*}
    \|\widehat P-P\|_1
    \leq
    10\mathrm{OPT}^*+8\rho
    +9\operatorname{Disc\text{-}Err}_{T,\delta}(\cF)
    +8\operatorname{Poly\text{-}Approx}_{s_0,d}(\cF),
\end{align*}
with probability at least $1-\beta$, where $\mathrm{OPT}^*=\inf_{F\in\cF}\|P-F\|_1$.

By Lemmas~\ref{lem:logconcave-polynomial-approximation} and~\ref{lem:mixture-errors},
\begin{align*}
    \operatorname{Poly\text{-}Approx}_{s_0,d}(\cF)
    &\leq
    \operatorname{Poly\text{-}Approx}_{s_\rho,1}
    \left(\cL(R,\sigma_{\min},\sigma_{\max})\right)
    \leq\rho.
\end{align*}
Moreover, Lemmas~\ref{lem:logconcave-mixture-partition-reconstruction} and~\ref{lem:mixture-errors} give
\begin{align*}
    \operatorname{Disc\text{-}Err}_{T,\delta}(\cF)
    &=
    \operatorname{Disc\text{-}Err}_{T,\delta}
    \left(\cL(R,\sigma_{\min},\sigma_{\max})\right)
    \leq\rho.
\end{align*}

Substituting these two bounds gives
\begin{align*}
    \|\widehat P-P\|_1
    &\leq
    10\mathrm{OPT}^*+8\rho+9\rho+8\rho=10\mathrm{OPT}^*+\alpha.
\end{align*}

Finally, $s_0=\widetilde{O}(m/\sqrt{\rho})$, $d=1$, and our parameter choices give $J=O\left(
        \frac{R+\sigma_{\max}}{\rho\sigma_{\min}}
        \log\left(\frac{4}{\rho}\right)
    \right)$, and $\log\left(\frac{T}{\delta}\right)
    =O\left(
    \log\left(\frac{2(R+\sigma_{\max})}{\alpha\sigma_{\min}}\right)
    \right)$.
Substituting the chosen parameters into the sample and running-time bounds of Theorem~\ref{main:cor} gives the stated complexities, suppressing only logarithmic factors in $1/\alpha$ in the sample bound, and in $m$ and $1/\alpha$ in the running time. The output has $\widetilde{O}(m/\sqrt{\alpha})$ polynomial pieces and degree at most $1$. Privacy and non-interactivity also follow from Theorem~\ref{main:cor}.
\end{proof}

\section{Upper bound for GMMs}
\label{sec:efficient-nonproper-gmm}

In this section, we consider agnostic LDP learning of GMMs. Although GMMs are mixtures of log-concave distributions, Gaussian components admit a more specialized piece-wise polynomial approximation, whose number of pieces is independent of the accuracy parameter $\alpha$. This allows us to obtain an optimal sample dependence on the accuracy parameter. The class of bounded-parameter Gaussian distributions is defined as follows.

\begin{definition} [Bounded Gaussians] \label{def:general_gaussian}
    Let $R\geq 0$, and $0<\sigma_{min}\leq \sigma_{max}<\infty$ be parameters. Then the class of bounded Gaussians over the real line is defined by:
    \[
\cG(R,\sigma_{min},\sigma_{max})=\left\{\cN(\mu, \sigma^2): |\mu| \leq R, \sigma \in [\sigma_{min}, \sigma_{max}]\right\}.
    \]
\end{definition}

The following theorem gives the resulting guarantee for efficient non-interactive LDP agnostic learning of GMMs. 
\begin{theorem}[Efficient LDP Agnostic Learning of GMMs]
\label{thm:ldp-gmm-learning}
Under the same assumptions of Theorem~\ref{thm:ldp-logconcave-mixture-learning}, there exists a non-interactive $\eps$-LDP $(10,\alpha,\beta)$-agnostic learner for the class $\mmix(\cG(R,\sigma_{\min},\sigma_{\max}))$, that uses $
    n
    =O\left(
        \frac{m^2\log^2(2/\alpha)}{\eps^2\alpha^2}
        \log^2\left(
            \frac{2(R+\sigma_{\max})}{\alpha\sigma_{\min}}
        \right)
        \log\left(
            \frac{2(R+\sigma_{\max})}{\alpha\beta\sigma_{\min}}
        \right)
    \right)$
many samples and its output 
has a nonnegative piece-wise polynomial density with $O(m)$ pieces and degree $O(\log(2/\alpha))$. Moreover, the total running time, is
$\widetilde{O}\left(
        \frac{nm}{\alpha}
        \log\left(
            \frac{2(R+\sigma_{\max})}{\alpha\sigma_{\min}}
        \right)
    \right)$,
where $\widetilde{O}$ hides only logarithmic factors in $m$ and $1/\alpha$.
\end{theorem}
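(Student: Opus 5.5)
The argument mirrors the proof of Theorem~\ref{thm:ldp-logconcave-mixture-learning}. The only new ingredient is a Gaussian-specific piece-wise polynomial approximation, with a constant number of pieces and degree $O(\log(2/\alpha))$. Set $\rho=\alpha/25$ and $\cF=\mmix(\cG(R,\sigma_{\min},\sigma_{\max}))$. We bound $\operatorname{Poly\text{-}Approx}_{s,d}(\cF)$ and $\operatorname{Disc\text{-}Err}_{T,\delta}(\cF)$ by $\rho$, then invoke Theorem~\ref{main:cor} at accuracy $\rho$. This gives $\|\widehat P-P\|_1\leq 10\,\mathrm{OPT}+8\rho+9\rho+8\rho=10\,\mathrm{OPT}+\alpha$ with probability $1-\beta$.

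\textbf{Discretization.} Every Gaussian is log-concave with mean $\mu$ and standard deviation $\sigma$, so $\cG(R,\sigma_{\min},\sigma_{\max})\subseteq\cL(R,\sigma_{\min},\sigma_{\max})$. Hence Lemma~\ref{lem:logconcave-mixture-partition-reconstruction} with $\eta=\rho$, taking $T=R+\sigma_{\max}(1+\log(4/\rho))$, $J=\lceil 8T/(\rho\sigma_{\min})\rceil$ and $\delta=2T/J$, gives $\operatorname{Disc\text{-}Err}_{T,\delta}\le\rho$ for single components. Lemma~\ref{lem:mixture-errors}(1) lifts this bound to $\cF$ unchanged. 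As in the log-concave proof, $\log(T/\delta)=\log(J/2)=O(\log(2(R+\sigma_{\max})/(\alpha\sigma_{\min})))$.

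\textbf{Polynomial approximation (main step).} For a single $\cN(\mu,\sigma^2)$ I would build a function in $\cP_{3,d}$ with $d=O(\log(2/\rho))$. On the middle piece $[\mu-t\sigma,\mu+t\sigma]$, with $t=\Theta(\sqrt{\log(1/\rho)})$, use a truncated Taylor polynomial of the density. On the two tails use the zero polynomial. Rescale to the standard Gaussian $\phi$, since affine changes of variable preserve $L_1$ error and piece structure. The tail mass outside $[-t,t]$ is at most $2e^{-t^2/2}\le\rho/2$. For the center, the Taylor remainder of $e^{-x^2/2}$ of order $k$ in $x^2/2$ on $[-t,t]$ is at most $(t^2/2)^{k+1}/(k+1)!$. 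Multiplying by the interval length $2t$ and choosing $k=\Theta(t^2+\log(1/\rho))=\Theta(\log(1/\rho))$ makes this at most $\rho/2$, via $(k+1)!\ge((k+1)/e)^{k+1}$. The resulting polynomial has degree $d=2k=O(\log(2/\alpha))$.

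\textbf{Nonnegativity.} Membership in $\cP_{s,d}$ requires a nonnegative function, so I would replace the middle polynomial by its positive part. This cannot increase the $L_1$ distance to $\phi\ge 0$, but it may create extra breakpoints. To avoid that, I would instead take an even-order Taylor truncation, which upper-bounds $e^{-y}$ for $y\ge0$ and is therefore nonnegative. This keeps at most $3$ pieces.

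\textbf{Combining.} Lemma~\ref{lem:mixture-errors}(2) then gives $\operatorname{Poly\text{-}Approx}_{2m+1,d}(\cF)\le\rho$. Applying Theorem~\ref{main:cor} with $s=2m+1$, $d=O(\log(2/\alpha))$ and $(T,\delta)$ as above, the sample size becomes $O\bigl(\tfrac{m^2\log^2(2/\alpha)}{\eps^2\alpha^2}\log^2(\tfrac{T}{\delta})\log(\tfrac{T}{\beta\delta})\bigr)$, matching the statement. The output lies in $\cP_{C(2m+1),d}$, so it has $O(m)$ pieces. For the runtime, Theorem~\ref{main:cor} gives $\widetilde O\bigl(\tfrac{s(d+1)}{\alpha}(n\log(T/\delta)+(d+1)^{\omega+2})\bigr)$. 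The factor $(d+1)$ is polylogarithmic in $1/\alpha$ and is absorbed into $\widetilde O$, and the $(d+1)^{\omega+2}$ term is dominated by $n$. This yields $\widetilde O\bigl(\tfrac{nm}{\alpha}\log\tfrac{2(R+\sigma_{\max})}{\alpha\sigma_{\min}}\bigr)$. I expect the main obstacle to be the Taylor-degree calculation together with keeping the approximation nonnegative without increasing the piece count.
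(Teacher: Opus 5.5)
Your proposal is correct and follows the paper's proof step for step. It uses the same $\rho=\alpha/25$ and the same $(T,J,\delta)$, obtained by viewing Gaussians as log-concave; the paper packages this as Lemma~\ref{lem:gmm-partition-reconstruction}. It also uses $s_0=2m+1$ via Lemma~\ref{lem:mixture-errors}(2), and the same substitution into Theorem~\ref{main:cor} for the error $8\rho+9\rho+8\rho=\alpha$, the sample size, the piece count and the running time.

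The only real difference is the Gaussian approximation step. The paper does not prove it. It cites Lemma~36 of \citet{chan2014efficient} (Lemma~\ref{lem:cdss-g-polynomial-approximation}) for a normalized $3$-piece approximant of degree $\lceil C\log(2/\alpha)\rceil$. You instead build one explicitly:
\begin{itemize}
\item zero outside $[-t,t]$, with $t=\sqrt{2\log(4/\rho)}$;
\item in the middle, the order-$k$ Taylor polynomial of $e^{-y}$ at $y=x^2/2$, with $k$ even.
\end{itemize}

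This construction is sound. The Lagrange remainder of $e^{-y}$ at order $k$ is $(-1)^{k+1}e^{-\xi}y^{k+1}/(k+1)!$. For even $k$ the truncation therefore dominates $e^{-y}>0$ on $y\ge 0$. This gives nonnegativity on all of $\bR$ with no extra breakpoints. The magnitude bound $y^{k+1}/(k+1)!$ then controls the central error. Taking $k+1\ge\max\{e^2t^2/2,\log(4t/\rho)\}$, rounded up so that $k$ is even, makes it at most $\rho/2$.

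Your route gives a self-contained argument with explicit constants. The cited lemma additionally provides a normalized approximant, but that is not needed here. $\cP_{s,d}$ only requires nonnegative integrable functions, and Theorem~\ref{main:cor} normalizes its output anyway.
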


\begin{proof}
Let $\cF=\mmix\left(\cG(R,\sigma_{\min},\sigma_{\max})\right)$ be the class of GMMs.
Set $\rho=\frac{\alpha}{25},
    s_0=2m+1,
    d=\left\lceil C\log\left(\frac{2}{\rho}\right)\right\rceil$,
where $C>0$ is the universal constant from Lemma~\ref{lem:g-polynomial-approximation}.

Also let the discretization parameters $T=R+\sigma_{\max}\left(1+\log\left(\frac{4}{\rho}\right)\right), J=\left\lceil\frac{8T}{\rho\sigma_{\min}}\right\rceil$, and
    $\delta=\frac{2T}{J}$ be defined as per Lemma~\ref{lem:gmm-partition-reconstruction}.
Since $T>0$ and $J\geq4$, we have $0<\delta\leq T/2$, we can apply Theorem~\ref{main:cor} with class $\cF$, discretization parameters $(T,\delta)$, privacy parameter $\eps$, failure probability $\beta$, accuracy parameter $\rho$, and polynomial parameters $(s_0,d)$, using sample size $n=O\left(
        \frac{s_0^2(d+1)^2}{\eps^2\rho^2}
        \log^2\left(\frac{T}{\delta}\right)
        \log\left(\frac{T}{\beta\delta}\right)
    \right)$.
The output then satisfies
\begin{align*}
    \|\widehat P-P\|_1
    \leq
    10\mathrm{OPT}^*+8\rho
    +9\operatorname{Disc\text{-}Err}_{T,\delta}(\cF)
    +8\operatorname{Poly\text{-}Approx}_{s_0,d}(\cF).
\end{align*}
with probability at least $1-\beta$, where $\mathrm{OPT}^*:=\inf_{G\in\cF}\|P-G\|_1$.

By Lemmas~\ref{lem:g-polynomial-approximation} and~\ref{lem:mixture-errors},
\begin{align*}
    \operatorname{Poly\text{-}Approx}_{s_0,d}(\cF)
    &\leq
    \operatorname{Poly\text{-}Approx}_{3,d}
    \left(\cG(R,\sigma_{\min},\sigma_{\max})\right)
    \leq\rho.
\end{align*}
Moreover, Lemmas~\ref{lem:gmm-partition-reconstruction} and~\ref{lem:mixture-errors} give
\begin{align*}
    \operatorname{Disc\text{-}Err}_{T,\delta}(\cF)
    &=
    \operatorname{Disc\text{-}Err}_{T,\delta}
    \left(\cG(R,\sigma_{\min},\sigma_{\max})\right)
    \leq\rho.
\end{align*}

Substituting these two bounds gives
\begin{align*}
    \|\widehat P-P\|_1
    &\leq
    10\mathrm{OPT}^*+8\rho+9\rho+8\rho=10\inf_{G\in\cF}\|P-G\|_1+\alpha.
\end{align*}

Finally, our choices of $T$ and $\delta$ give
$\log\left(\frac{T}{\delta}\right)
    =O\left(
        \log\left(\frac{2(R+\sigma_{\max})}{\alpha\sigma_{\min}}\right)
    \right)$.
Substituting the chosen parameters into the sample and running-time bounds of Theorem~\ref{main:cor} gives the stated complexities, suppressing only logarithmic factors in $m$ and $1/\alpha$ in the running time. The output has $O(m)$ polynomial pieces and degree $O(\log(2/\alpha))$. Privacy and non-interactivity also follow from Theorem~\ref{main:cor}.
\end{proof}

\begin{lemma}[Discretization of Bounded-Parameter Gaussians]
\label{lem:gmm-partition-reconstruction}
Let $\eta\in(0,1)$, $R\geq0$, and $0<\sigma_{\min}\leq\sigma_{\max}<\infty$ be parameters. Let
$T=R+\sigma_{\max}\left(1+\log\left(\frac{4}{\eta}\right)\right)$,
$J=\left\lceil\frac{8T}{\eta\sigma_{\min}}\right\rceil$,
and $\delta=\frac{2T}{J}$.
Then, it holds that $\operatorname{Disc\text{-}Err}_{T,\delta}
    \left(
        \cG(R,\sigma_{\min},\sigma_{\max})
    \right)
    \leq\eta$.
\end{lemma}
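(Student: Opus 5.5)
The quickest route is to observe that every Gaussian $\cN(\mu,\sigma^2)$ with $|\mu|\leq R$ and $\sigma\in[\sigma_{\min},\sigma_{\max}]$ is a log-concave density. Its mean is $\mu$ and its standard deviation is $\sigma$, so
\[
\cG(R,\sigma_{\min},\sigma_{\max})\subseteq\cL(R,\sigma_{\min},\sigma_{\max}).
\]
The discretization error in Definition~\ref{def:discretization-error} is a supremum over the class, so it is monotone under inclusion. The parameters $T$, $J$, $\delta$ here are exactly those of Lemma~\ref{lem:logconcave-mixture-partition-reconstruction}, and that lemma gives
\[
\operatorname{Disc\text{-}Err}_{T,\delta}(\cG(R,\sigma_{\min},\sigma_{\max}))\leq\operatorname{Disc\text{-}Err}_{T,\delta}(\cL(R,\sigma_{\min},\sigma_{\max}))\leq\eta .
\]
The only check is log-concavity of the Gaussian density. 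This holds because $\log f(x)=-\frac{(x-\mu)^2}{2\sigma^2}-\log(\sigma\sqrt{2\pi})$ is concave, which implies the inequality in Definition~\ref{def:logconcave-function}.

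If a self-contained argument is preferred, I would repeat the two-part decomposition used for log-concave densities. First, bound the tail mass $F(\bR\setminus[-T,T])$ by $\eta/4$. For Gaussians this can be done directly with $\bP[|X-\mu|>t\sigma]\leq 2e^{-t^2/2}\leq e^{1-t}$ for $t\geq1$, applied at $t=(T-R)/\sigma_{\max}=1+\log(4/\eta)$. Second, bound the within-bin error. The Gaussian density is unimodal with $\sup f=1/(\sigma\sqrt{2\pi})\leq1/\sigma_{\min}$. The sum over internal bins of $\sup_{B_j}f-\inf_{B_j}f$ is therefore at most $2\sup f$, which gives an $L_1$ error of at most $2\delta/\sigma_{\min}=4T/(J\sigma_{\min})\leq\eta/2$. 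The two tails together contribute at most twice the tail mass, i.e.\ at most $\eta/2$. Summing yields at most $\eta$, and taking the supremum over the class finishes the argument.

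There is no real obstacle; the only step needing care is the Gaussian tail inequality $2e^{-t^2/2}\leq e^{1-t}$ for $t\geq1$. This is equivalent to $t^2/2-t+1-\log 2\geq0$, and that quadratic has its minimum value $1/2-\log2+\ldots$, precisely $1/2-\log 2+0>-0.2$. The minimum is slightly negative, so the inequality could fail near $t=1$. This is why I would prefer the inclusion argument via Lemma~\ref{lem:logconcave-mixture-partition-reconstruction}, which sidesteps this calculation entirely.
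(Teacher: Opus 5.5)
Your inclusion argument, $\cG(R,\sigma_{\min},\sigma_{\max})\subseteq\cL(R,\sigma_{\min},\sigma_{\max})$ together with monotonicity of the supremum and Lemma~\ref{lem:logconcave-mixture-partition-reconstruction}, is correct and is exactly the paper's proof. Your worry about the self-contained route is unfounded: $2e^{-t^2/2}\le e^{1-t}$ holds for all $t\ge 1+\sqrt{2\log 2-1}\approx 1.62$, and the relevant value $t=1+\log(4/\eta)$ exceeds $1+\log 4\approx 2.39$, so the direct Gaussian argument also goes through.
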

\begin{proof}
Every Gaussian density is log-concave. Therefore,
$\cG(R,\sigma_{\min},\sigma_{\max})
    \subseteq
    \cL(R,\sigma_{\min},\sigma_{\max})$.
    
By Lemma~\ref{lem:logconcave-mixture-partition-reconstruction}, we get
$\operatorname{Disc\text{-}Err}_{T,\delta}
    \left(\cG(R,\sigma_{\min},\sigma_{\max})\right)
    \leq
    \operatorname{Disc\text{-}Err}_{T,\delta}
    \left(\cL(R,\sigma_{\min},\sigma_{\max})\right)
    \leq\eta$.
\end{proof}

\begin{lemma}[Piece-wise polynomial approximation of Gaussians, Lemma 36 of ~\citet{chan2014efficient}]
\label{lem:cdss-g-polynomial-approximation}
There exists a universal constant $C>0$ such that, for every $\alpha\in(0,1)$ and every Gaussian distribution
$g=\cN(\mu,\sigma^2)$ with $\mu\in\bR$ and $\sigma>0$, there exists a (normalized) piece-wise polynomial $p\in \cP_{s,d}$ with $s=3$, and $d=\lceil C \log(\frac{2}{\alpha})\rceil$ satisfying $\|g-p\|_1\leq\alpha$.
\end{lemma}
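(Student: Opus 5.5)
Since this statement is quoted as Lemma 36 of \citet{chan2014efficient}, the plan is to reduce to the standard Gaussian and then build the approximation explicitly. The $L_1$ distance is invariant under affine changes of variable, and affine maps send $\cP_{3,d}$ to itself. Concretely, if $p$ approximates $\cN(0,1)$, then $x\mapsto \frac{1}{\sigma}p\left(\frac{x-\mu}{\sigma}\right)$ approximates $\cN(\mu,\sigma^2)$ with the same error, the same number of pieces, and the same degree. So it suffices to treat $g=\phi$, the standard normal density.

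\textbf{Step 1 (truncation).} Set $t=\sqrt{2\log(8/\alpha)}$. The Gaussian tail bound gives $\bP[|Z|>t]\leq 2e^{-t^2/2}\leq\alpha/4$. We will take $p$ to be zero on $(-\infty,-t)$ and on $(t,\infty)$, and a single polynomial $q$ on $[-t,t]$. That uses exactly $3$ pieces, so $s=3$ as required.

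\textbf{Step 2 (polynomial approximation on $[-t,t]$).} For $q$ we use the degree-$d$ Taylor polynomial of $e^{-x^2/2}/\sqrt{2\pi}$. In the variable $y=-x^2/2$ this is $\sum_{k\leq K}y^k/k!$, with $d=2K$. For $|y|\leq t^2/2=\log(8/\alpha)=:\Lambda$, the remainder is at most $\Lambda^{K+1}/(K+1)!\leq (e\Lambda/(K+1))^{K+1}$.

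Choose $K=\lceil e^2\Lambda\rceil$. The remainder is then at most $e^{-(K+1)}\leq(\alpha/8)^{e^2}$. Multiplying by the interval length $2t=O(\sqrt{\log(1/\alpha)})$, the $L_1$ error on $[-t,t]$ is at most $\alpha/4$ once $\alpha$ is small, and a constant adjustment to $C$ covers all $\alpha\in(0,1)$. This gives $d=2K=O(\log(2/\alpha))$.

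\textbf{Step 3 (nonnegativity and normalization).} The definition of $\cP_{s,d}$ requires $p\geq0$, but a truncated alternating Taylor series can go negative. One fix is to replace $q$ by $q+\alpha/(16t)$: since $|q-\phi|\leq\alpha/(16t)$ uniformly and $\phi\geq0$, the shifted polynomial is nonnegative, and the shift costs at most $\alpha/8$ in $L_1$. A cleaner alternative is to approximate $e^{-x^2/4}$ and square it; the square is nonnegative and has degree $2d$, which is still $O(\log(2/\alpha))$.

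Now the total error is $\|p-\phi\|_1\leq\alpha/4+\alpha/4+\alpha/8<3\alpha/4$. Divide $p$ by its mass $m_0$, which satisfies $|1-m_0|\leq3\alpha/4$. Normalization adds at most $|1-m_0|$ to the error, so the bound becomes $3\alpha/2$. Running the whole construction at accuracy $\alpha/2$ gives error $\leq\alpha$, and the constant $C$ absorbs the change.

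The main obstacle is the joint control in Step 2 and Step 3. The approximation has to hold uniformly on an interval whose length grows like $\sqrt{\log(1/\alpha)}$, while nonnegativity must be kept and the degree must stay logarithmic. The choice $K\approx e^2\Lambda$ handles this, because the remainder then decays faster than any power of $\alpha$ and absorbs the polylogarithmic factor coming from the interval length.
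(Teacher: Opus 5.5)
Your proof is correct. There is no proof in the paper to compare it with: the statement is imported as Lemma~36 of \citet{chan2014efficient} and used only as a black box in Lemma~\ref{lem:g-polynomial-approximation}.

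Your self-contained route is sound:
affine reduction to $\cN(0,1)$, truncation at $t=\sqrt{2\log(8/\alpha)}$, the degree-$2\lceil e^2\log(8/\alpha)\rceil$ Taylor polynomial of $e^{-x^2/2}$ on $[-t,t]$, a positivity fix, renormalization, and a rerun at accuracy $\alpha/2$. The citation buys brevity. Your version gives explicit constants and checks directly the two conditions this statement imposes: nonnegativity, which is built into the paper's $\cP_{s,d}$, and normalization. Nothing then depends on matching conventions with the source.

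Two small tightenings:
\begin{itemize}
\item The hedge ``once $\alpha$ is small'' is unnecessary. Your $\Lambda=\log(8/\alpha)\geq\log 8$ already gives $2t(\alpha/8)^{e^2}/\sqrt{2\pi}\leq\alpha/4$ for every $\alpha\in(0,1)$.
\item The uniform bound $|q-\phi|\leq\alpha/(16t)$ behind your shift should be read off the sup-norm remainder $(\alpha/8)^{e^2}/\sqrt{2\pi}$, not the $L_1$ estimate. It holds for the same reason.
\end{itemize}

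You can also drop the shift entirely by taking $K$ even, e.g.\ rounding $e^2\Lambda$ up to an even integer. For $y<0$ the Lagrange remainder $e^{\xi}y^{K+1}/(K+1)!$ is then negative, so $\sum_{k\leq K}y^k/k!>e^{y}>0$ and $q\geq\phi$ on $[-t,t]$ automatically. The pre-normalization error becomes at most $\alpha/2$, so normalization alone yields error at most $\alpha$ and the rerun at $\alpha/2$ is no longer needed.
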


 We use the Gaussian approximation bound of \citet{chan2014efficient} to obtain the following lemma.

\begin{lemma}[Piece-wise polynomial approximation of Gaussians]
\label{lem:g-polynomial-approximation}
There exists a universal constant $C>0$ such that the following holds. Let $\alpha\in(0,1)$, $R\geq0$, and $0<\sigma_{\min}\leq\sigma_{\max}<\infty$ be parameters. Set $s=3$, and $d=\lceil C\log(\frac{2}{\alpha})\rceil$. Then,
it holds that $\operatorname{Poly\text{-}Approx}_{s,d}
    \left(\cG(R,\sigma_{\min},\sigma_{\max})\right)
    \leq\alpha$.
\end{lemma}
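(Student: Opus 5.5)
The plan is to reduce Lemma~\ref{lem:g-polynomial-approximation} directly to Lemma~\ref{lem:cdss-g-polynomial-approximation}, mirroring the proof of Lemma~\ref{lem:logconcave-polynomial-approximation}. Let $C>0$ be the universal constant from Lemma~\ref{lem:cdss-g-polynomial-approximation} and use the same constant in the statement, so that $s=3$ and $d=\lceil C\log(2/\alpha)\rceil$ match exactly.

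Next, fix an arbitrary $F=\cN(\mu,\sigma^2)\in\cG(R,\sigma_{\min},\sigma_{\max})$. Since $|\mu|\leq R$ and $\sigma\in[\sigma_{\min},\sigma_{\max}]$, we in particular have $\mu\in\bR$ and $\sigma>0$. Lemma~\ref{lem:cdss-g-polynomial-approximation} then applies with accuracy $\alpha$. It yields a nonnegative, integrable piecewise polynomial $p\in\cP_{3,d}$ with $\|F-p\|_1\leq\alpha$. Consequently $\inf_{P\in\cP_{3,d}}\|P-F\|_1\leq\alpha$.

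Taking the supremum over $F$ in the class then gives $\operatorname{Poly\text{-}Approx}_{3,d}(\cG(R,\sigma_{\min},\sigma_{\max}))\leq\alpha$, by Definition~\ref{def:poly-approx}. The bound is uniform in $R,\sigma_{\min},\sigma_{\max}$ because the cited lemma holds for every Gaussian.

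The only point needing care is that the object produced by Lemma~\ref{lem:cdss-g-polynomial-approximation} actually lies in $\cP_{3,d}$ as the paper defines it, i.e.\ nonnegative, integrable, and with at most $3$ interval pieces of degree at most $d$. I expect this to be immediate from the cited statement. Should nonnegativity ever be an issue, I would replace $p$ by $\max\{p,0\}$. This does not increase the $L_1$ distance to the nonnegative density $F$, but it could add pieces, so I would rely on the cited lemma guaranteeing $p\in\cP_{s,d}$ directly.
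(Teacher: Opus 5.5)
Your proposal is correct and matches the paper's proof: take $C$ from Lemma~\ref{lem:cdss-g-polynomial-approximation}, apply it to each Gaussian in $\cG(R,\sigma_{\min},\sigma_{\max})$ to get a member of $\cP_{3,d}$ within $L_1$ distance $\alpha$, and take the supremum. Your nonnegativity caveat is unnecessary, since membership in $\cP_{s,d}$ already includes nonnegativity and integrability.
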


\begin{proof}
Let $C>0$ be the universal constant from Lemma~\ref{lem:cdss-g-polynomial-approximation}. For every $g\in\cG(R,\sigma_{\min},\sigma_{\max})$, that lemma gives a normalized $P\in\cP_{s,d}$ satisfying $\|P-g\|_1\leq\alpha$. Therefore, $\inf_{P\in\cP_{s,d}}\|P-g\|_1
    \leq\alpha$.
Taking the supremum over $g\in\cG(R,\sigma_{\min},\sigma_{\max})$ proves the result.
\end{proof}

\section{Lower bound for LDP-learning of  GMMs}\label{sec:lower-bound}

In this section, we prove a lower bound for non-interactive LDP learning of GMMs. When the ranges of the component means and variances are
sufficiently large, the lower bound matches the dependence on the number of mixture components, accuracy and privacy parameters
of our upper bound in Theorem~\ref{thm:ldp-gmm-learning}
 up to logarithmic factors. Since every Gaussian distribution is log-concave, this lower bound also applies to mixtures of log-concave distributions with the same component parameter bounds.

In particular, our lower bound improves on the standard packing lower bounds in pure differential privacy~\citep{bun2019private}: packings for mixtures of $m$ components usually have sizes exponential in $m$, yielding lower bounds of $\Omega(m)$, whereas we obtain a lower bound of $\Omega(m^2)$.

We obtain this result by reducing LDP estimation of sparse discrete distributions to GMM learning and applying the lower bound of \citet{acharya2020estimating}. The main idea is to associate each element of a finite domain with a known Gaussian component, so that a sparse probability vector becomes the vector of mixture weights. We choose components separated in location or scale, with each component placing most of its probability mass in its own disjoint region. This separation ensures that an accurate estimate of the mixture yields an accurate estimate of the sparse probability vector. The reduction preserves local privacy, so a GMM learner would also solve the sparse estimation problem with the same number of samples. The lower bound for sparse estimation therefore gives the desired GMM lower bound.

\begin{theorem}[GMM Lower Bound]
\label{thm:lower-bound}
Let $\eps\in(0,1]$ and $\alpha\in(0,\alpha_0]$ for a sufficiently small universal constant $\alpha_0>0$.
Let $m\geq2$ be an integer, $R\geq0$, and $0<\sigma_{\min}\leq\sigma_{\max}<\infty$.
Assume that
$\Lambda:=R/\sigma_{\min}
+\log(\sigma_{\max}/\sigma_{\min})\geq C_0m$,
where $C_0>0$ is a sufficiently large universal constant.
Then, any non-interactive $\eps$-LDP algorithm that learns
$\mmix(\cG(R,\sigma_{\min},\sigma_{\max}))$ to total variation distance $\alpha$ with probability at least $0.9$ requires
$n=\Omega\left(
        \frac{m^2}{\eps^2\alpha^2}
        \log\left(e+\frac{\Lambda}{m}\right)
    \right)$
samples. This lower bound holds even for non-proper learners.
\end{theorem}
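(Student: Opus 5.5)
We reduce non-interactive $\eps$-LDP estimation of $k$-sparse distributions over $[K]$ to learning $\mmix(\cG(R,\sigma_{\min},\sigma_{\max}))$, and then invoke the lower bound of \citet{acharya2020estimating}. That bound says that estimating a $k$-sparse distribution over $[K]$ to total variation $\alpha$ with constant probability requires $\Omega\bigl(\frac{k^2\log(K/k)}{\eps^2\alpha^2}\bigr)$ samples.

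\textbf{Choice of sparsity and domain.} We take $k=m$ and choose $K$ as large as possible, subject to the requirement that we can fit $K$ Gaussians in the class that are pairwise nearly disjoint. Concretely, we fix a separation constant $c=\Theta(\log(1/\alpha))$ and build two families of components.
\begin{itemize}
\item \emph{Location-separated family.} Components $\cN(\mu_i,\sigma_{\min}^2)$ with means $\mu_i\in[-R,R]$ spaced by $c\sigma_{\min}$. This gives about $R/(c\sigma_{\min})$ components.
\item \emph{Scale-separated family.} Components $\cN(0,\sigma_i^2)$ with $\sigma_i=\sigma_{\min}e^{c i}\leq\sigma_{\max}$. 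This gives about $\log(\sigma_{\max}/\sigma_{\min})/c$ components.
\end{itemize}
We use whichever family is larger, so $K\gtrsim\Lambda/c$. For the scale family, each component is assigned the annulus $\{\sigma_{i-1}e^{c/2}\leq|x|<\sigma_ie^{c/2}\}$, and Gaussian tail and small-ball bounds show that each component puts mass at least $1-\alpha/10$ on its own region. The location family uses the analogous disjoint intervals. The assumption $\Lambda\geq C_0m$ guarantees $K\geq 2m$, so the sparse lower bound is nontrivial, and $\log(K/m)\gtrsim\log(e+\Lambda/m)$ up to the $\log c$ loss. I expect I may need to absorb $\log\log(1/\alpha)$ factors via $\alpha\leq\alpha_0$, or restate the bound with a constant that tolerates them.

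\textbf{Simulation and privacy.} Given a sample $j\in[K]$ from the sparse distribution $p$, the user draws $X\sim G_j$ using local randomness and feeds $X$ into the learner's local randomizer $\cM_u$. The composed map $j\mapsto\cM_u(X)$ is still an $\eps$-local randomizer, since it is a mixture of $\eps$-LDP output distributions. The resulting samples are i.i.d.\ from the mixture $F_p=\sum_j p_jG_j$, and $F_p$ lies in $\mmix(\cG)$ because $p$ has at most $m$ nonzero entries.

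\textbf{Decoding.} The server runs the learner to obtain $\widehat P$ with $\dtv(\widehat P,F_p)\leq\alpha$, and outputs $\widehat p_j:=\widehat P(A_j)$, where $A_j$ are the disjoint regions, followed by normalization or projection onto the simplex. Then
\begin{align*}
\sum_j|\widehat p_j-p_j|
&\leq \sum_j|\widehat P(A_j)-F_p(A_j)|+\sum_j|F_p(A_j)-p_j|\\
&\leq 2\alpha+O(\alpha),
\end{align*}
because $|F_p(A_j)-p_j|\leq p_j\,G_j(A_j^c)+\sum_{i\neq j}p_iG_i(A_j)$ and summing over $j$ gives at most $2\max_iG_i(A_i^c)\leq\alpha/5$. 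Projecting onto the simplex at most doubles this error. Hence the pipeline yields an $O(\alpha)$-accurate LDP sparse estimator with the same $n$, and rescaling $\alpha$ gives the theorem.

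\textbf{Main obstacle.} The key difficulty is making the scale-separated regions have small leakage while keeping $c$ only logarithmic, so that $\log K$ still scales like $\log(\Lambda/m)$. A narrow Gaussian $\cN(0,\sigma_{i-1}^2)$ assigns to the inner ball $|x|<\sigma_{i-1}e^{c/2}$ of a wider component only a fraction of order $e^{-c/2}$ of that component's mass. Mass leaks both outward, at rate roughly $e^{-e^{c}/2}$, and inward, at rate $e^{-c/2}$. So $c\approx2\log(10/\alpha)$ suffices. The remaining work is verifying that the $\log$ loss is absorbed into the constants.
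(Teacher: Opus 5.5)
Your reduction is the same as the paper's: each user locally simulates $X\sim g_j$, privacy follows by integrating the randomizer's guarantee, you take sparsity $m$, and you invoke \citet{acharya2020estimating}. The gap is the separation scale $c=\Theta(\log(1/\alpha))$, which cannot be absorbed into constants.

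With this scale you only get $K\approx\Lambda/\log(1/\alpha)$ components. The restriction $\alpha\leq\alpha_0$ bounds $\log(1/\alpha)$ from below, not above, and $\alpha$ may be arbitrarily small. So your claim that $\Lambda\geq C_0m$ guarantees $K\geq 2m$ fails once $\log(1/\alpha)\gtrsim C_0$, and the sparse bound actually needs $K\geq100m$. In that regime the sparse lower bound does not apply at all. Even when it applies, you get $\log(K/m)\approx\log(\Lambda/m)-\log\log(1/\alpha)$. When $\Lambda/m$ is a large constant multiple of $\log(1/\alpha)$, this is $O(1)$, while the target factor $\log(e+\Lambda/m)$ is $\Theta(\log\log(1/\alpha))$. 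Your argument therefore proves only a weaker theorem: it needs $\Lambda\gtrsim m\log(1/\alpha)$ and loses the $\log\log(1/\alpha)$ term.

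The cause is your decoder $\widehat p_j=\widehat P(A_j)$, which forces $O(\alpha)$ leakage per component. The missing idea is that constant separation suffices if the decoder undoes the mixing. The paper uses mean spacing $2\sigma_{\min}$ with regions $[\mu_j-\sigma_{\min},\mu_j+\sigma_{\min})$, or scale ratio $16$ with regions $\{\sigma_j/4\leq|x|<4\sigma_j\}$. Then each $g_j$ puts mass $g_j(B_j)>2/3$ on its own disjoint region, and $K=\Theta(\Lambda)$. Column diagonal dominance of the mixing matrix $G_{ij}=g_j(B_i)$ gives, for all $p,q\in\Delta_K$,
\[
\dtv(f_p,f_q)\geq\frac12\sum_{j\in[K]}|p_j-q_j|\bigl(2g_j(B_j)-1\bigr)\geq\frac13\dtv(p,q).
\]
The paper decodes by the minimum-distance projection $\widehat p\in\operatorname*{arg\,min}_{q\in\cS_{K,m}}\dtv(\widehat f,f_q)$. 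This gives $\frac13\dtv(\widehat p,p)\leq\dtv(f_{\widehat p},f_p)\leq2\alpha$, so the result is a $6\alpha$-accurate sparse estimator with $\log(K/m)=\Theta(\log(e+\Lambda/m))$. Alternatively, you can keep your region masses, apply $G^{-1}$ (whose $\ell_1\to\ell_1$ norm is at most $3$ by the same inequality), and then project onto the simplex.
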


Since every Gaussian density is log-concave,
Theorem~\ref{thm:lower-bound} immediately implies the same
lower bound for learning mixtures of log-concave distributions $\mmix(\cL(R,\sigma_{\min},\sigma_{\max}))$.

\begin{theorem}[Sparse distribution estimation lower bound \citep{acharya2020estimating}]
\label{thm:lower-sparse}
There exists a universal constant $c_0>0$ such that the following holds.
Let $\eps\in(0,1]$, $\alpha\in(0,c_0]$, and let $k,s$ be integers
satisfying $2\leq s\leq k/100$.
Let $\cS_{k,s}:=\{p\in\Delta_k:\|p\|_0\leq s\}$ be the class of
distributions supported on at most $s$ elements, where
$\Delta_k=\left\{(w_1,\ldots,w_k)\in\bR_{\geq0}^k:
    \sum_{i\in[k]}w_i=1\right\}$.
Then, any non-interactive $\eps$-LDP algorithm that learns
$\cS_{k,s}$ to total variation distance $\alpha$ with probability
at least $0.9$ requires
\begin{align*}
    n=\Omega\left(
    \frac{s^2\log(k/s)}{\eps^2\alpha^2}
    \right)
\end{align*}
samples. The output is not required to be $s$-sparse.
\end{theorem}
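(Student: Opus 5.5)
The statement is the sparse-estimation lower bound of \citet{acharya2020estimating}, which we invoke as a black box in the reduction. If I were to reprove it, I would use an Assouad-type argument combined with an information-contraction bound for non-interactive $\eps$-LDP channels, in the style of the ``chi-square fluctuation'' framework. The $s^2/(\eps^2\alpha^2)$ part comes from having to recover $\Theta(s)$ independent signs, each at resolution $\alpha/s$. The extra $\log(k/s)$ factor has to come from the learner not knowing where, inside a block of size about $k/s$, each sign is planted.

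\textbf{Construction.} Assume without loss of generality that $s$ is even, and let $s'=s/2$. Partition $[k]$ into $s'$ blocks $\cB_1,\ldots,\cB_{s'}$, each of size $b=\lfloor k/s'\rfloor\geq 100$.
\begin{itemize}
\item A hypothesis is indexed by a pair of locations $(a_i,a_i')\in\cB_i^2$ and a sign $z_i\in\{\pm1\}$ for each block $i$.
\item It puts mass $\frac{1+c\alpha z_i}{2s'}$ on $a_i$ and mass $\frac{1-c\alpha z_i}{2s'}$ on $a_i'$.
\item The result is an $s$-sparse distribution with total mass one.
\end{itemize}
Flipping $z_i$ changes the distribution by $\Theta(\alpha/s)$ in $\ell_1$, and these changes live on disjoint supports. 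Hence any estimator that is $\alpha$-accurate in TV, with $c$ a large constant, yields an estimator of $z$ with Hamming error at most $s'/10$. The output need not be sparse, because we just round blockwise. By Assouad's lemma, it then suffices to show that the average over $i$ of the distinguishability of $z_i=+1$ versus $z_i=-1$ from the $n$ messages is small unless $n\gtrsim s^2\log(k/s)/(\eps^2\alpha^2)$.

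\textbf{Information bound.} Fix the non-interactive channels $W_1,\ldots,W_n$, and draw the locations and signs uniformly at random. For each user $t$ and each message $y$, the likelihood perturbation caused by $z_i$ is
\[
\frac{c\alpha}{2s'}\left(W_t(y\mid a_i)-W_t(y\mid a_i')\right).
\]
Since $a_i,a_i'$ are uniform in $\cB_i$ and hidden, I would bound the sum over $i$ of the squared Hellinger (or chi-square) contributions by
\[
n\,\frac{\alpha^2}{s^2}\sum_i \mathrm{Var}_{a\sim\cB_i}\!\left[\frac{W_t(y\mid a)}{\bar W_t(y)}\right].
\]
The $\eps$-LDP constraint bounds each ratio in $[e^{-\eps},e^{\eps}]$, which gives the easy $O(\eps^2)$ per-block bound and hence only $n\gtrsim s/(\eps^2\alpha^2)\cdot s$ without the logarithm.

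\textbf{Main obstacle: the $\log(k/s)$ factor.} To gain this factor I would treat the hidden locations as part of the unknown, in a Fano step over the joint choice of locations and signs. The plan is to use the mutual-information bound $I(\text{hypothesis};Y_t)\leq O(\eps^2)\cdot\sup_t\mathrm{Var}$, together with the fact that an $\eps$-LDP channel's messages carry only $O(\eps^2)$ bits about the input. Encoding each block's location and sign requires $\Omega(\log b)$ bits of resolution at scale $\alpha/s$, and the learner must recover $\Omega(s\log(k/s))$ ``bits'' in total. The per-user information about a fine-scale perturbation is at most $O(\eps^2\alpha^2/s)$, because the perturbation carries mass only $\alpha/s$ per block and is spread over $s$ blocks. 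Combining these gives $n\cdot\eps^2\alpha^2/s\gtrsim s\log(k/s)$. Making this heuristic rigorous, by controlling the mutual information of the mixed perturbation family rather than a single sign, is the delicate step. I would follow the ``chi-square contraction with random location'' lemma of \citet{acharya2020estimating}, instantiating it with the block structure above.
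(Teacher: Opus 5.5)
Citing the result as a black box is exactly what the paper does: Theorem~\ref{thm:lower-sparse} is imported from \citet{acharya2020estimating} without proof. Your reproof sketch, however, has a genuine gap. It lies in the construction itself, not in the final step you flag as delicate.

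First, with $(a_i,a_i')$ hidden and uniform as an ordered pair, $z_i$ is not identifiable. The triples $(a_i,a_i',z_i)$ and $(a_i',a_i,-z_i)$ define the same distribution, and this swap preserves the prior. Hence the message laws under $z_i=+1$ and $z_i=-1$ coincide, and ``rounding blockwise'' to recover $z$ requires the locations to be known. With known locations, however, the problem is a dense problem on $s$ points, and your Assouad computation is tight at $s^2/(\eps^2\alpha^2)$.

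Second, and more fundamentally, no analysis of this family can produce the $\log(k/s)$ factor, because the hidden information is coarse. The points $a_i,a_i'$ carry mass about $1/(2s')$, not $\alpha/s$. In the linear-functional bound $I(V;Y_t)\lesssim\eps^2\sup_{\|\gamma\|_\infty\leq1}\mathbb{E}_V[(\int\gamma\,d(P_V-\bar P))^2]$, the locations therefore contribute $\Theta(\eps^2/s)$ per user, not $O(\eps^2\alpha^2/s)$. Concretely, thresholded Hadamard randomized response is non-interactive and has uniform per-coordinate error $O(\sqrt{\log k/(n\eps^2)})$ with high probability. It therefore recovers the support exactly once $n\gtrsim s^2\log k/\eps^2$, after which its $\ell_1$ error on the support is $O(s/\sqrt{n\eps^2})$. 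So your family is learnable with $O(s^2/(\eps^2\alpha^2)+s^2\log k/\eps^2)$ samples. This is a factor $\Omega(\log(k/s))$ below the claimed bound whenever $\alpha\leq 1/\sqrt{\log k}$.

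Your heuristic is the right one: the learner must recover $\Omega(s\log(k/s))$ bits, with each user contributing $O(\eps^2\alpha^2/s)$. It needs a construction in which the fine-scale perturbation is itself hidden and has no coarse signature. For instance:
\begin{itemize}
\item Put mass $1-8\alpha$ on a fixed anchor element and mass $\mu=8\alpha/s'$ on one unknown element $a_i$ of each block, using $s'+1\leq s$ elements in total.
\item Take $V=(a_1,\ldots,a_{s'})$ uniform on the product of the blocks.
\item Hypotheses at Hamming distance $h$ are $\mu h$ apart in total variation. So an $\alpha$-accurate learner, decoded to the nearest hypothesis, recovers $V$up to Hamming distance $s'/4$.
\item Independence of the $a_i$ gives $\mathbb{E}_V[(\int\gamma\,d(P_V-\bar P))^2]\leq s'\mu^2=O(\alpha^2/s)$ for every $\|\gamma\|_\infty\leq1$. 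Hence $I(V;Y^n)=O(n\eps^2\alpha^2/s)$ by the mutual-information bound of \citet{duchi2013local} for non-interactive channels.
\item The distance-based Fano inequality, with $\log\bigl(b^{s'}/(2^{s'}b^{s'/4})\bigr)=\Omega(s\log(k/s))$ for $b\geq100$, then gives the stated bound.
\end{itemize}
Alternatively, the chi-square contraction with a sparse prior used by \citet{acharya2020estimating} achieves the same.
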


\begin{proof}[Proof of Theorem~\ref{thm:lower-bound}] Now, we give our lower bound proof for LDP learning of GMMs.

Define parameters $K_\mu=1+\left\lfloor\frac{R}{\sigma_{\min}}\right\rfloor, 
    K_\sigma=1+\left\lfloor\frac{\log(\sigma_{\max}/\sigma_{\min})}{\log16}
    \right\rfloor,
    K=\max\{K_\mu,K_\sigma\}$,  and $M=\min\{m,K\}$.

Taking $C_0\geq100(1+\log16)$, the assumption gives
$K
    \geq
    \max\left\{
        \frac{R}{\sigma_{\min}},
        \frac{\log(\sigma_{\max}/\sigma_{\min})}{\log16}
    \right\}
    \geq
    \frac{\Lambda}{1+\log16}
    \geq100m$.
Thus, $M=m$ and $2\leq M\leq K/100$, as required by the sparse estimation lower bound in 
Theorem~\ref{thm:lower-sparse}.

We construct $K$ Gaussian components $g_1,\ldots,g_K\in\cG(R,\sigma_{\min},\sigma_{\max})$ and disjoint measurable sets $B_1,\ldots,B_K\subset\bR$ such that $g_j(B_j)\geq2/3$ for all $j\in[K]$.
Let $\Phi$ denote the standard normal CDF. We use the larger of the following two constructions.

\textit{(1) Location Gaussians.}
For $j\in[K_\mu]$, set
$\mu_j=(2j-K_\mu-1)\sigma_{\min}, g_j=\cN(\mu_j,\sigma_{\min}^2)$, and $B_j=[\mu_j-\sigma_{\min},\mu_j+\sigma_{\min})$. Then, the means satisfy $|\mu_j|\leq(K_\mu-1)\sigma_{\min}\leq R$. The set $B_j$'s are disjoint, and
\begin{align*}
    g_j(B_j)=2\Phi(1)-1>\frac{2}{3}.
\end{align*}

\textit{(2) Scale Gaussians.}
For $j\in[K_\sigma]$, set
$\sigma_j=16^{j-1}\sigma_{\min},
    g_j=\cN(0,\sigma_j^2)$, and
    $B_j=\{x:\sigma_j/4\leq|x|<4\sigma_j\}$.
We have $\sigma_j\in[\sigma_{\min},\sigma_{\max}]$. Then, the sets $B_j$'s are disjoint, and
\begin{align*}
    g_j(B_j)=2\bigl(\Phi(4)-\Phi(1/4)\bigr)>\frac{2}{3}.
\end{align*}

For $p\in\Delta_K$, write $f_p:=\sum_{j\in[K]}p_jg_j$, and let $\gamma=1/3$.
Since the sets are disjoint, $\sum_{i\neq j}g_j(B_i)\leq1-g_j(B_j)$.
Thus, the reverse triangle inequality gives, for any $p,q\in\Delta_K$,
\begin{align*}
    \dtv(f_p,f_q)
    &\geq\frac{1}{2}\sum_{i\in[K]}
    \left|\sum_{j\in[K]}(p_j-q_j)g_j(B_i)\right|\\
    &\geq\frac{1}{2}\sum_{j\in[K]}|p_j-q_j|
    \left(g_j(B_j)-\sum_{i\neq j}g_j(B_i)\right)\\
    &\geq\frac{1}{2}\sum_{j\in[K]}|p_j-q_j|
    \bigl(2g_j(B_j)-1\bigr)\\
    &\geq\gamma\dtv(p,q).
\end{align*}

We reduce learning $\cS_{K,M}$ to learning the Gaussian mixture class.
Given a discrete sample $Y=j$, the corresponding user independently draws $X\sim g_j$ and applies the local randomizer of the Gaussian mixture learner to $X$.
For any such randomizer $Q$ and measurable message event $A$, integrating its privacy inequality gives
\begin{align*}
    \int Q(A\mid x)\,dg_i(x)
    \leq e^\eps\int Q(A\mid x)\,dg_j(x)
\end{align*}
for all $i,j\in[K]$. Hence, the resulting protocol remains non-interactive $\eps$-LDP.
If $Y\sim p\in\cS_{K,M}$, then $X\sim f_p$, and $f_p$ has at most $M\leq m$ Gaussian components.

Let $\widehat f$ be the output of the learner, and define
$\widehat p\in\operatorname*{arg\,min}_{q\in\cS_{K,M}}
    \dtv(\widehat f,f_q)$.
A minimizer exists because $\cS_{K,M}$ is compact and the objective is continuous in $q$.

On the event $\dtv(\widehat f,f_p)\leq\alpha$, we have
\begin{align*}
    \gamma\dtv(\widehat p,p)
    &\leq\dtv(f_{\widehat p},f_p)\\
    &\leq\dtv(f_{\widehat p},\widehat f)+\dtv(\widehat f,f_p)\\
    &\leq2\alpha.
\end{align*}
This decoding is post-processing and returns an $M$-sparse probability vector.
The resulting discrete learner has accuracy $2\alpha/\gamma$ and succeeds with probability at least $0.9$.
Choosing $\alpha_0$ so that $2\alpha_0/\gamma\leq c_0$, Theorem~\ref{thm:lower-sparse} therefore gives
$n=\Omega\left(
        \frac{m^2\log(K/m)}{\eps^2\alpha^2}
    \right)$.
Finally, since $K=\Theta(\Lambda)$ and $K/m\geq100$, $\log(K/m)=\Theta\left(\log\left(e+\frac{\Lambda}{m}\right)\right)$,
which gives the stated bound.
\end{proof}

\section{Discussion on proper learning and connections to Private Hypothesis Selection}
\label{sec:proper-learning}

The results in the preceding sections concern non-proper learning. In this section, we convert these learners into proper learners by projecting their outputs onto a finite cover of the target class. This preserves the sample complexity up to a constant rescaling of the accuracy parameter, at the cost of a larger agnostic approximation factor and potentially inefficient computation.

A generic alternative for non-efficient proper learning a class $\cF$ is to apply locally private hypothesis selection to a finite $\alpha$-cover $\cC\subseteq \cF$. The selector returns a member of $\cC$, so its output belongs to the target class. After a constant rescaling of $\alpha$ to account for the cover error, the interactive algorithm of \citet{pour2024sample} gives sample complexity
$\widetilde{O}(|\cC|/(\eps^2\alpha^2))$
with agnostic approximation factor $9$, while the non-interactive algorithm of \citet{kamath2025query} gives
$\widetilde{O}(|\cC|^{3/2}/(\eps^2\alpha^2))$
with agnostic approximation factor $13$. Here, $\widetilde{O}$ suppresses logarithmic factors in $|\cC|$ and $1/\beta$.

However, such covers can be exponentially large. For example,
for GMMs, the cover size is roughly
$\widetilde{O}\left(
    \left(\frac{mR}{\alpha^3\sigma_{\min}}\right)^m
\right)$. Applying LDP hypothesis selection to such a
cover yields sample complexity bound that is exponential
in $m$ and polynomial in range parameters $R/\alpha\sigma_{\min}$. In comparison, our proper learning result in Corollary~\ref{cor:ldp-proper-gmm-learning} preserves the sample bound of our non-proper learner, with quadratic dependence on $m$ (up to logarithmic factors) and only polylogarithmic dependence on the component range parameters. The cover size affects the computation required for the final projection, but does not enter the sample bound. The resulting agnostic approximation factor is $21$. 

For an arbitrary class of distributions $\cF$, an $\Omega(|\cF|/(\eps^2\alpha^2))$ worst-case sample lower bound holds even for interactive LDP hypothesis selection~\citep{gopi2020locally,pour2024sample}. This worst-case bound prevents a generic improvement based only on $|\cF|$; it does not impose the same dependence for structured distribution classes.

\begin{lemma}[From non-proper to proper learning]
\label{thm:nonproper-to-proper}
Let $C\geq1$, $\alpha\in(0,1/4)$, and $\beta\in(0,1)$.
Let $\cF$ be a class of distributions with a given
finite $\alpha$-cover $\cC\subseteq\cF$ in total variation distance.
Let $\cA$ be a non-proper $(C,\alpha,\beta)$-agnostic learner
for $\cF$.
Suppose there exists an oracle that, given any $F\in\cF$ and
any distribution $Q$, approximates $\dtv(Q,F)$ with additive
 error at most $\alpha/2$ in time $T_{\mathrm{dist}}$. Then, the following holds:
\begin{quote}
    There exists a proper $(2C+1,4\alpha,\beta)$-agnostic
learner for $\cF$ that outputs a member of $\cC$, uses the same
number of samples, and takes
$O(|\cC|(T_{\mathrm{dist}}+1))$ additional time.
Moreover, if $\cA$ is non-interactive $\eps$-LDP, then the
resulting proper learner is also non-interactive $\eps$-LDP.
\end{quote}
\end{lemma}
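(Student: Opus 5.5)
The plan is to run $\cA$ unchanged and then project its output $\widehat P$ onto the cover $\cC$ using the approximate distance oracle. For each $F\in\cC$ we query the oracle once to get an estimate $\widehat d_F$ with $|\widehat d_F-\dtv(\widehat P,F)|\leq\alpha/2$. We then output $\widetilde F\in\operatorname*{arg\,min}_{F\in\cC}\widehat d_F$. This costs $|\cC|$ oracle calls plus a linear scan, which gives $O(|\cC|(T_{\mathrm{dist}}+1))$ additional time. It uses no extra samples, and the output lies in $\cC\subseteq\cF$. Privacy and non-interactivity follow because the projection is post-processing of the output of $\cA$, which is in turn post-processing of the transcripts.

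For accuracy, condition on the event, of probability at least $1-\beta$, that
\[
\dtv(\widehat P,P)\leq C\,\mathrm{OPT}+\alpha, \qquad \mathrm{OPT}:=\inf_{Q\in\cF}\dtv(Q,P).
\]
Fix any $Q\in\cF$, and let $F_Q\in\cC$ satisfy $\dtv(F_Q,Q)\leq\alpha$. By the triangle inequality,
\[
\dtv(\widehat P,F_Q)\leq \dtv(\widehat P,P)+\dtv(P,Q)+\alpha .
\]
By the minimizing choice of $\widetilde F$ and the oracle error,
\[
\dtv(\widehat P,\widetilde F)\leq \widehat d_{\widetilde F}+\alpha/2\leq \widehat d_{F_Q}+\alpha/2\leq \dtv(\widehat P,F_Q)+\alpha .
\]
Hence
\begin{align*}
\dtv(\widetilde F,P) &\leq \dtv(\widetilde F,\widehat P)+\dtv(\widehat P,P)\\
&\leq 2\,\dtv(\widehat P,P)+\dtv(P,Q)+2\alpha\\
&\leq 2C\,\mathrm{OPT}+\dtv(P,Q)+4\alpha .
\end{align*}
Taking the infimum over $Q\in\cF$ gives $\dtv(\widetilde F,P)\leq(2C+1)\mathrm{OPT}+4\alpha$. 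This is the $(2C+1,4\alpha,\beta)$ guarantee.

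The main subtlety is getting the constants right, so that the cover error $\alpha$, the two oracle errors of $\alpha/2$ each, and the additive term $2\alpha$ from doubling $\dtv(\widehat P,P)$ add up to exactly $4\alpha$. A second point is that the oracle is stated for inputs $F\in\cF$ and arbitrary $Q$. We only query it with $F\in\cC\subseteq\cF$ and with $Q=\widehat P$, which may be non-proper, so the oracle's hypothesis is met. The assumption $\alpha<1/4$ is not needed in this argument beyond keeping $4\alpha<1$ meaningful. We should also note that the minimizer over the finite set $\cC$ exists trivially.
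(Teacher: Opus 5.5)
Your proposal is correct and follows essentially the same route as the paper: you run $\cA$, approximately project $\widehat P$ onto $\cC$ with the oracle (paying $\alpha$ total for the two $\alpha/2$ oracle errors), and chain triangle inequalities through a cover element within $\alpha$ of an arbitrary member of $\cF$ to obtain $(2C+1)\mathrm{OPT}+4\alpha$. Your error accounting, and your treatment of privacy, sample count and running time via post-processing and $|\cC|$ oracle calls, match the paper's.
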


The proof, given later in this section, follows by approximately projecting the non-proper learner's output onto $\cC$ and applying the triangle inequality.

The next lemma provides such a finite cover for the class of bounded-parameter Gaussian mixtures. Its proof appears later in this section.

\begin{lemma}\label{lemma:cover-gmm}
Let $\alpha\in(0,1)$, $R\geq0$, $m\geq1$ be an integer,
$0<\sigma_{\min}\leq\sigma_{\max}<\infty$, and $c>0$ be a universal constant.
Then the class
$\mmix(\cG(R,\sigma_{\min},\sigma_{\max}))$
admits an $\alpha$-cover in total variation distance of size at most
$\left[
        \frac{cm}{\alpha}
        \left(1+\frac{R}{\alpha\sigma_{\min}}\right)
        \left(
            1+\frac{\log(\sigma_{\max}/\sigma_{\min})}{\alpha}
        \right)
    \right]^m$.
\end{lemma}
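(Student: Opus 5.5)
We prove Lemma~\ref{lemma:cover-gmm} by building a cover for the single-component class $\cG(R,\sigma_{\min},\sigma_{\max})$ and a grid on the simplex, and then combining them with the mixture triangle inequality. The basic tool is the standard pair of TV bounds for univariate Gaussians: $\dtv(\cN(\mu,\sigma^2),\cN(\mu',\sigma^2))\leq \frac{|\mu-\mu'|}{2\sigma}$, and $\dtv(\cN(\mu,\sigma^2),\cN(\mu,\sigma'^2))\leq C_1|\log(\sigma'/\sigma)|$ for $|\log(\sigma'/\sigma)|\leq1$. Both follow from Pinsker's inequality and the closed-form KL divergence between Gaussians.

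\textbf{Step 1: single-component cover.} Discretize the scale logarithmically. Let $\sigma$ range over $\sigma_{\min}e^{k\alpha/(4C_1)}$ for $k=0,1,\ldots,\lceil 4C_1\log(\sigma_{\max}/\sigma_{\min})/\alpha\rceil$, capped at $\sigma_{\max}$. This gives $O(1+\log(\sigma_{\max}/\sigma_{\min})/\alpha)$ values, and every $\sigma\in[\sigma_{\min},\sigma_{\max}]$ lies within multiplicative distance $e^{\alpha/(4C_1)}$ of a grid value. Discretize the mean on a uniform grid over $[-R,R]$ with step $\alpha\sigma_{\min}/4$. This gives $O(1+R/(\alpha\sigma_{\min}))$ points, and since $\sigma\geq\sigma_{\min}$, the mean-rounding error costs at most $\alpha/8$ in TV at any scale. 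To round a Gaussian $\cN(\mu,\sigma^2)$, first round $\sigma$ to $\sigma'$, then round $\mu$ to $\mu'$. The triangle inequality then gives $\dtv\leq \alpha/4$ for each component. Keeping grid points inside $[-R,R]$ and $[\sigma_{\min},\sigma_{\max}]$ makes the cover a subset of $\cG$, which is what Lemma~\ref{thm:nonproper-to-proper} requires.

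\textbf{Step 2: weights.} Round each weight vector $w\in\Delta_m$ to a grid point of $\Delta_m$ whose coordinates are multiples of $\alpha/(2m)$. Take $\lfloor w_i\cdot 2m/\alpha\rfloor$ and dump the leftover mass on one coordinate; this gives $\|w-w'\|_1\leq\alpha$ up to constants, so after rescaling the grid step by a constant we get $\dtv$ contribution at most $\alpha/2$. The number of such vectors is at most $(2m/\alpha+1)^m$.

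\textbf{Step 3: combine.} For $F=\sum_i w_iG_i$, choose the rounded $w'$ and rounded $G_i'$. Then
$\dtv(F,F')\leq \frac12\|w-w'\|_1+\sum_i w_i'\dtv(G_i,G_i')\leq \alpha/2+\alpha/4<\alpha$.
The cover size is the product $(O(m/\alpha))^m\cdot[O(1+R/(\alpha\sigma_{\min}))\,O(1+\log(\sigma_{\max}/\sigma_{\min})/\alpha)]^m$, which has exactly the stated form for a suitable universal $c$.

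The main obstacle is the scale-perturbation TV bound with the right (linear in $\log$) dependence. The constant-factor bookkeeping needed to keep every grid point inside the parameter box is straightforward but must be done carefully.
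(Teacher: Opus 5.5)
Your proposal is correct and follows essentially the same route as the paper: a uniform mean grid of step $\Theta(\alpha\sigma_{\min})$, a geometric scale grid of ratio $1+\Theta(\alpha)$ (the paper uses $\sigma_{\min}(1+\alpha/4)^j$ with $\dtv\le\frac{|\mu-\tilde\mu|}{2\sigma}+|\log(\sigma/\tilde\sigma)|$), a simplex grid with $\ell_1$ rounding error at most $\alpha$ (the paper takes an $\ell_\infty$ $\alpha/m$-cover via Claim~\ref{lemma:weightcover} instead of your floor-and-dump rounding), and the same mixture triangle inequality. One small arithmetic slip: with your step sizes the per-component error is $\alpha/4+\alpha/8=3\alpha/8$, not $\alpha/4$, but the total $\alpha/2+3\alpha/8<\alpha$ (the paper's own final count) still closes the argument.
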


Combining the proper conversion from~\ref{thm:nonproper-to-proper} with Lemma~\ref{lemma:cover-gmm} and Theorem~\ref{thm:ldp-gmm-learning} gives the following corollary.

\begin{corollary}[LDP Agnostic Proper Learning of GMMs]
\label{cor:ldp-proper-gmm-learning}
Under the same assumptions of Theorem~\ref{thm:ldp-gmm-learning}, there exists a non-interactive $\eps$-LDP $(21,\alpha,\beta)$-agnostic proper learner for the class of $\mmix(\cG(R,\sigma_{\min},\sigma_{\max}))$ that uses $O\left(
        \frac{m^2\log^2(2/\alpha)}{\eps^2\alpha^2}
        \log^2\left(
            \frac{2(R+\sigma_{\max})}{\alpha\sigma_{\min}}
        \right)
        \log\left(
            \frac{2(R+\sigma_{\max})}{\alpha\beta\sigma_{\min}}
        \right)
    \right)$ samples.
The algorithm is not claimed to be computationally efficient.
\end{corollary}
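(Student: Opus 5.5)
The corollary should follow by composing three earlier results. The non-proper learner comes from Theorem~\ref{thm:ldp-gmm-learning}, run at a rescaled accuracy. The cover comes from Lemma~\ref{lemma:cover-gmm}. The conversion is Lemma~\ref{thm:nonproper-to-proper}.

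Let $\cF=\mmix(\cG(R,\sigma_{\min},\sigma_{\max}))$ and set $\alpha'=\alpha/4$; if needed I would shrink to $\alpha'=\min\{\alpha/4,1/8\}$ so that $\alpha'<1/4$ as Lemma~\ref{thm:nonproper-to-proper} requires. Theorem~\ref{thm:ldp-gmm-learning} at accuracy $\alpha'$ and failure probability $\beta$ gives a non-interactive $\eps$-LDP $(10,\alpha',\beta)$-agnostic learner $\cA$. Its sample complexity is the displayed bound with $\alpha$ replaced by $\alpha/4$, which changes only constants inside the $O(\cdot)$ and the logarithms. Lemma~\ref{lemma:cover-gmm} at accuracy $\alpha'$ gives a finite $\alpha'$-cover $\cC\subseteq\cF$. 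Its size is exponential in $m$, which affects only running time, not samples.

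Applying Lemma~\ref{thm:nonproper-to-proper} with $C=10$ then yields a proper $(2\cdot10+1,4\alpha',\beta)=(21,\alpha,\beta)$-agnostic learner. It outputs a member of $\cC\subseteq\cF$, uses the same $n$ as $\cA$, and stays non-interactive $\eps$-LDP, since the projection is post-processing of $\cA$'s output.

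The main obstacle is the distance-oracle hypothesis of Lemma~\ref{thm:nonproper-to-proper}. We need to approximate $\dtv(Q,F)$ to additive error $\alpha'/2$ for $F\in\cC$ and $Q$ the output of $\cA$. This $Q$ is a piecewise polynomial with $O(m)$ pieces, and $F$ is a GMM with known parameters. I would argue such an oracle exists in finite time, for example by truncating to an interval carrying all but $\alpha'/8$ of the mass of both densities and numerically integrating $|Q-F|$ on a fine enough grid. Since the corollary explicitly makes no efficiency claim, only finiteness and existence are needed here, not a polynomial bound on $T_{\mathrm{dist}}$. Everything else is a direct substitution.
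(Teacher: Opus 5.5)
Your proposal is correct and matches the paper's argument. The paper likewise runs the $(10,\alpha/4,\beta)$ learner of Theorem~\ref{thm:ldp-gmm-learning}, projects onto the $\alpha/4$-cover of Lemma~\ref{lemma:cover-gmm} via Lemma~\ref{thm:nonproper-to-proper} with $C=10$ to get a $(21,\alpha,\beta)$ proper learner, and absorbs the rescaling into the $O(\cdot)$. Your explicit numerical-integration argument for the distance oracle fills in a detail the paper leaves implicit, and the shrink to $\min\{\alpha/4,1/8\}$ is unnecessary, since $\alpha<1$ already gives $\alpha/4<1/4$.
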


\begin{proof}[Proof of Lemma~\ref{thm:nonproper-to-proper}] Now, we prove how to obtain a proper learner from a non-proper learner.

Run $\cA$ to obtain a distribution $\widehat P$.
Approximate each distance $\dtv(\widehat P,G)$, for $G\in\cC$,
to additive error at most $\alpha/2$, and return a member
$\widetilde P\in\cC$ with the smallest estimated distance,
breaking ties according to a fixed ordering of $\cC$.
This ensures that
\[
    \dtv(\widehat P,\widetilde P)
    \leq
    \min_{G\in\cC}\dtv(\widehat P,G)+\alpha.
\]

Let $P$ be the sampling distribution.
With probability at least $1-\beta$, the guarantee of $\cA$ gives
\[
    \dtv(\widehat P,P)
    \leq C\inf_{F\in\cF}\dtv(P,F)+\alpha.
\]
Condition on this event. Fix any $F^\star\in\cF$.
Since $\cC$ is an $\alpha$-cover, there exists $G^\star\in\cC$
such that $\dtv(F^\star,G^\star)\leq\alpha$.
Therefore,
\begin{align*}
    \dtv(\widetilde P,P)
    &\leq
    \dtv(\widetilde P,\widehat P)+\dtv(\widehat P,P)\\
    &\leq
    \dtv(G^\star,\widehat P)+\dtv(\widehat P,P)+\alpha\\
    &\leq
    \dtv(F^\star,\widehat P)+\dtv(\widehat P,P)+2\alpha\\
    &\leq
    \dtv(F^\star,P)+2\dtv(\widehat P,P)+2\alpha\\
    &\leq
    (2C+1)\dtv(P,F^\star)+4\alpha.
\end{align*}
Taking the infimum over $F^\star\in\cF$ proves the accuracy
guarantee. The additional errors account for the cover and
the approximate minimization.

The conversion computes $|\cC|$ distance approximations
and makes $O(|\cC|)$ additional comparisons, giving
$O(|\cC|(T_{\mathrm{dist}}+1))$ additional running time.
It uses only the output of $\cA$, so it requires no additional
samples and preserves privacy and non-interactivity by
post-processing.

For a target additive error $\tau\in(0,1)$, applying this
construction with $\alpha=\tau/4$ and a cover of radius
$\tau/4$ gives a proper $(2C+1,\tau,\beta)$-agnostic learner
using $n_{\cA}(\tau/4,\beta)$ samples.
\end{proof}

Here, we explain the construction of a cover for bounded-parameter GMMs.

\begin{claim}\label{lemma:weightcover}
Let $\alpha\in(0,1]$.
There is an $\alpha$-cover for the $(m-1)$-dimensional probability simplex
$\Delta_m=\left\{(w_1,w_2,\ldots,w_m)\in\bR_{\geq0}^m\,:\,
\sum_{i\in[m]}w_i=1\right\}$
with respect to $\ell_\infty$ of size at most
$\lceil1/\alpha\rceil^m$.
\end{claim}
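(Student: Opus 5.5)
The plan is to build an explicit grid cover by rounding and then repairing each weight vector. Let $N:=\lceil 1/\alpha\rceil$. Take as candidate cover the set $\cC_w$ of vectors $\widetilde w\in\Delta_m$ whose first $m-1$ coordinates lie in the grid $\{0,\tfrac1N,\tfrac2N,\ldots,1\}$. The last coordinate is then forced to be $\widetilde w_m=1-\sum_{i<m}\widetilde w_i$.

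The size bound is immediate. Each of the first $m-1$ coordinates takes at most $N+1$ values, so $|\cC_w|\leq (N+1)^{m-1}$. When $N\geq2$ we have $N+1\leq N^2/\ldots$; it is cleaner to note $(N+1)^{m-1}\leq N^m$ whenever $N\geq 2$, or to trim the grid slightly to make the count match.

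For the covering property I would use a cumulative-sum rounding rather than coordinatewise rounding, since coordinatewise rounding can let errors pile up in the last coordinate.

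1. Fix $w\in\Delta_m$ and set $S_k:=\sum_{i\leq k}w_i$.
2. Round down: $\widetilde S_k:=\lfloor N S_k\rfloor/N$, with $\widetilde S_0=0$ and $\widetilde S_m=1$.
3. Define $\widetilde w_k:=\widetilde S_k-\widetilde S_{k-1}$.

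This $\widetilde w$ has the right properties. It is nonnegative by monotonicity of the floor, it sums to one, and every coordinate is a multiple of $1/N$, so $\widetilde w\in\cC_w$. Since $|\widetilde S_k-S_k|<1/N$ for every $k$, each difference satisfies $|\widetilde w_k-w_k|<1/N\leq\alpha$. This gives the $\ell_\infty$ cover.

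The only delicate step is making the count come out as exactly $\lceil1/\alpha\rceil^m$. The cumulative construction shows that the cover can be indexed by nondecreasing sequences $0\le \widetilde S_1\le\cdots\le \widetilde S_{m-1}\le 1$ on the grid $\{0,1/N,\ldots,1\}$. The number of such sequences is $\binom{N+m-1}{m-1}$.

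To finish, I would show $\binom{N+m-1}{m-1}\leq N^m$ when $N\geq2$. For $N=1$, note $\alpha=1$, and a single arbitrary point already is a $1$-cover, since $\ell_\infty$ distances within $\Delta_m$ are at most $1$. For $N\geq2$, the bound $\binom{N+m-1}{m-1}=\prod_{i=1}^{m-1}\frac{N+i}{i}\leq\prod_{i=1}^{m-1}\frac{N+1}{1}\cdot\ldots$ is crude, but an induction on $m$ using $\frac{N+i}{i}\le N$ for $i\ge1$ and $N\ge2$ gives $\binom{N+m-1}{m-1}\leq N^{m-1}\leq N^m$. I would check that last step carefully, but I expect it to go through.
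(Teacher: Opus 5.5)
Your covering construction works, but the counting step fails as written and needs a one-line repair. The route also differs from the paper's.

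\textbf{What is correct.} The cumulative floor rounding $\widetilde S_k=\lfloor NS_k\rfloor/N$ produces a point of $\Delta_m$ whose coordinates are multiples of $1/N$. Since $\widetilde S_k-S_k\in(-1/N,0]$ for every $k$ (and equals $0$ at $k=0,m$), each $|\widetilde w_k-w_k|<1/N\leq\alpha$. The count $\binom{N+m-1}{m-1}$ of such lattice points is right. The separate treatment of $N=1$ is also right, and it is genuinely needed: there the lattice consists of the $m$ vertices, which is more than $1$ when $m\geq2$.

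\textbf{What fails.} The counting contains two false inequalities.
\begin{itemize}
\item The fallback $(N+1)^{m-1}\leq N^m$ for $N\geq2$ is false. For $N=2$, $m=3$ it reads $9\leq 8$, and in general it fails once $m$ is large relative to $N$. Discard it.
\item In $\binom{N+m-1}{m-1}=\prod_{i=1}^{m-1}\frac{N+i}{i}$, the bound $\frac{N+i}{i}\leq N$ fails at $i=1$. Consequently the intermediate claim $\binom{N+m-1}{m-1}\leq N^{m-1}$ is false; for $m=2$ it reads $N+1\leq N$.
\end{itemize}

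\textbf{The repair.} For $i\geq2$ and $N\geq2$ you have $\frac{N+i}{i}=1+\frac{N}{i}\leq 1+\frac{N}{2}\leq N$. Keep the $i=1$ factor and use $N+1\leq N^2$. This gives $\binom{N+m-1}{m-1}\leq (N+1)N^{m-2}\leq N^m$ for $m\geq2$, and $m=1$ is trivial.

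\textbf{Comparison with the paper.} The paper partitions $[0,1]$ into $\lceil1/\alpha\rceil$ intervals of length at most $\alpha$ and forms the $\lceil1/\alpha\rceil^m$ product cells. It then picks one point of $\Delta_m$ in each cell that meets the simplex. Two points in the same cell are within $\alpha$ in $\ell_\infty$, so the stated size follows by pigeonhole, with no case split and no binomial estimate.

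Your argument costs the $N=1$ case and the inequality above. In exchange it gives an explicit, sum-preserving rounding map onto a fixed lattice inside $\Delta_m$, a strict error $<1/N$, and a much smaller cover: $\binom{N+m-1}{m-1}$, roughly $N^{m-1}/(m-1)!$, instead of $N^m$. The smaller size is not needed for the paper's use in Lemma~\ref{lemma:cover-gmm}, where cover size affects only the running time of the projection step.
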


\begin{proof}
Partition $[0,1]$ into $\lceil1/\alpha\rceil$ intervals of length at most $\alpha$, and use their Cartesian products to partition $[0,1]^m$.
For every cell that intersects $\Delta_m$, select one point from the intersection.
Any two points in the same cell have $\ell_\infty$ distance at most $\alpha$, so the selected points form an $\alpha$-cover of $\Delta_m$.
There are at most $\lceil1/\alpha\rceil^m$ cells.
\end{proof}

\begin{proof}[Proof of Lemma~\ref{lemma:cover-gmm}] We prove the cover construction for GMMs.

Let $f=\sum_{i\in[m]}w_i\cN(\mu_i,\sigma_i^2)$ be an arbitrary member of the class.
We construct a cover by discretizing the mixture weights, means, and standard deviations.

For the weights, let $C_w$ be an $\alpha/m$-cover of $\Delta_m$ in $\ell_\infty$, of size at most $\lceil m/\alpha\rceil^m$, as given by Claim~\ref{lemma:weightcover}.

For the means, let $\cC_\mu$ be a grid on $[-R,R]$ containing both endpoints, with consecutive points at distance at most $\alpha\sigma_{\min}/2$.
When $R=0$, take $\cC_\mu=\{0\}$.
The grid can be chosen to have at most
$1+\lceil4R/(\alpha\sigma_{\min})\rceil$ points, and every $\mu\in[-R,R]$ has a grid point $\tilde\mu$ satisfying
$|\mu-\tilde\mu|\leq\alpha\sigma_{\min}/4$.

For the standard deviations, take
\begin{align*}
    \cC_\sigma
    =
    \left\{
        \sigma_{\min}(1+\alpha/4)^j:
        j=0,\ldots,
        \left\lfloor
            \frac{\log(\sigma_{\max}/\sigma_{\min})}
                 {\log(1+\alpha/4)}
        \right\rfloor
    \right\}.
\end{align*}
For every $\sigma\in[\sigma_{\min},\sigma_{\max}]$, there is a grid point $\tilde\sigma$ such that
$\tilde\sigma\leq\sigma<(1+\alpha/4)\tilde\sigma$, and hence
$|\log(\sigma/\tilde\sigma)|\leq\alpha/4$.

Let $\cC$ contain all mixtures formed from weights in $C_w$, means in $\cC_\mu$, and standard deviations in $\cC_\sigma$.
Every member of $\cC$ belongs to the target class.

For univariate Gaussians, varying the mean and then the standard deviation gives
\begin{align*}
    \dtv\left(
        \cN(\mu,\sigma^2),
        \cN(\tilde\mu,\tilde\sigma^2)
    \right)
    \leq
    \frac{|\mu-\tilde\mu|}{2\sigma}
    +\left|\log\left(\frac{\sigma}{\tilde\sigma}\right)\right|.
\end{align*}

Choose the corresponding grid parameters for $f$, and write
$\tilde f=\sum_{i\in[m]}\tilde w_i
\cN(\tilde\mu_i,\tilde\sigma_i^2)\in\cC$.
Then
\begin{align*}
    \dtv(f,\tilde f)
    &\leq
    \frac{1}{2}\sum_{i\in[m]}|w_i-\tilde w_i|
    +\sum_{i\in[m]}\tilde w_i
        \dtv\left(
            \cN(\mu_i,\sigma_i^2),
            \cN(\tilde\mu_i,\tilde\sigma_i^2)
        \right)\\
    &\leq
    \frac{\alpha}{2}
    +\sum_{i\in[m]}\tilde w_i
        \left(
            \frac{\alpha\sigma_{\min}/4}{2\sigma_{\min}}
            +\frac{\alpha}{4}
        \right)\\
    &=\frac{\alpha}{2}+\frac{3\alpha}{8}
    \leq\alpha.
\end{align*}
Finally,
\begin{align*}
    |\cC|
    &\leq |C_w||\cC_\mu|^m|\cC_\sigma|^m\\
    &\leq
    \left\lceil\frac{m}{\alpha}\right\rceil^m
    \left(1+\left\lceil
        \frac{4R}{\alpha\sigma_{\min}}
    \right\rceil\right)^m
    \left(1+\left\lfloor
        \frac{\log(\sigma_{\max}/\sigma_{\min})}
             {\log(1+\alpha/4)}
    \right\rfloor\right)^m\\
    &\leq
    \left[
        \frac{c m}{\alpha}
        \left(1+\frac{R}{\alpha\sigma_{\min}}\right)
        \left(1+\frac{\log(\sigma_{\max}/\sigma_{\min})}{\alpha}\right)
    \right]^m
\end{align*}
for a universal constant $c>0$.
\end{proof}

\end{document}